\newtheorem{theorem}{Theorem}
\newtheorem{proposition}[theorem]{Proposition}%[section]
\newtheorem{lemma}[theorem]{Lemma}
\theoremstyle{definition}
\theoremstyle{remark}
\newcommand{\BE}{\mathbb E}
\newcommand{\BP}{\mathbb P}
\newcommand{\BR}{\mathbb R}
\newcommand{\eps}{\varepsilon}
\newcommand{\bo}{\boldsymbol}
\title{Almost Tight Approximation Algorithms for Explainable Clustering}
\author{Hossein Esfandiari\thanks{Google Research. Email: \texttt{esfandiari@google.com}} , Vahab Mirrokni\thanks{Google Research. Email: \texttt{mirrokni@google.com}} , Shyam Narayanan\thanks{MIT. Work done as an intern at Google Research. Email: \texttt{shyamsn@mit.edu}}}
\date{\today}
\begin{document}

\maketitle

\begin{abstract}
    Recently, due to an increasing interest for transparency in artificial intelligence, several methods of explainable machine learning have been developed with the simultaneous goal of accuracy and interpretability by humans. In this paper, we study a recent framework of explainable clustering first suggested by Dasgupta et al.~\cite{dasgupta2020explainable}. Specifically, we focus on the $k$-means and $k$-medians problems and provide nearly tight upper and lower bounds.
    
    First, we provide an $O(\log k \log \log k)$-approximation algorithm for explainable $k$-medians, improving on the best known algorithm of $O(k)$~\cite{dasgupta2020explainable} and nearly matching the known $\Omega(\log k)$ lower bound~\cite{dasgupta2020explainable}. In addition, in low-dimensional spaces $d \ll \log k$, we show that our algorithm also provides an $O(d \log^2 d)$-approximate solution for explainable $k$-medians. This improves over the best known bound of $O(d \log k)$ for low dimensions~\cite{laber2021explainable}, and is a constant for constant dimensional spaces. To complement this, we show a nearly matching $\Omega(d)$ lower bound. Next, we study the $k$-means problem in this context and provide an $O(k \log k)$-approximation algorithm for explainable $k$-means, improving over the $O(k^2)$ bound of Dasgupta et al. and the $O(d k \log k)$ bound of \cite{laber2021explainable}. To complement this we provide an almost tight $\Omega(k)$ lower bound, improving over the $\Omega(\log k)$ lower bound of Dasgupta et al. 
    %All our algorithms run in near linear time in the number of points and the dimension.
    %Given an approximate solution to the classic $k$-means and $k$-medians, we provide algorithms that run in time independent of the total number of points and linear in the dimension.
    Given an approximate solution to the classic $k$-means and $k$-medians, our algorithm for $k$-medians runs in time $O(kd \log^2 k )$ and our algorithm for $k$-means runs in time $ O(k^2 d)$.
\end{abstract}

\section{Introduction}
Clustering is one of the most fundamental optimization techniques that lies at the heart of many applications in machine learning and data mining. Clustering techniques are vastly used for data classification in unsupervised learning and semi-supervised learning, data compression and representation, and even data visualization. As a result, many powerful techniques have been developed for data clustering over the past decades. However, in the past few years due to an increasing demand for transparency, people look with doubt at clusterings, or more generally learning models, that are not interpretable by humans. Consequently, there is an increasing demand to ``\emph{stop explaining black box machine learning models for high stakes decisions and use interpretable models instead}''~\cite{rudin2019stop}. 

With this motivation in mind, we study an easily interpretable and powerful clustering framework suggested by Dasgupta et al.~\cite{dasgupta2020explainable} called \emph{explainable clustering}. This framework is based on decomposing the space of the points using a decision tree where each node separates two clusters via a simple comparison based on one of the dimensions of the space. Decision trees are known as simple and popular explainable models~\cite{molnar2019interpretable,murdoch2019interpretable}.
%\todo{Need to define explainable clustering more formally}
In this framework, we evaluate our algorithms by the ratio of the cost of the explainable clustering algorithm to an \emph{optimal non-explainable} clustering algorithm. This has also been referred to as the \emph{price of explainability}, since it measures the required cost blowup to guarantee that the clustering is interpretable~\cite{dasgupta2020explainable}.

In this work, we provide almost optimal explainable algorithms for $k$-means clustering and $k$-medians clustering. These clustering problems are central in data analysis and modern machine learning with several applications in mining massive datasets. $k$-means clustering is defined as follows: Given a dataset of $n$ points where each data element is represented by a vector of real-valued features, the goal is to find $k$ representative vectors, called \emph{centers}, such that the sum of squared distances from each input vector to the closest center is minimized. Similarly, in $k$-medians clustering, the goal is to minimize the sum of distances to the closest centers. 
%\textcolor{blue}{Generally, the distance evaluation in $k$-means clustering is the $\ell_2$ (also known as Euclidean) distance, whereas the distance evaluation in $k$-medians clustering is the $\ell_1$ (also known as Manhattan or taxicab) distance.}
$k$-means clustering and $k$-medians clustering have become essential building blocks for unveiling hidden patterns and extracting information in datasets, especially in the unsupervised clustering contexts where supervised machine learning cannot be applied, or little is known about the data or when the dataset is massive and hence the competitive supervised methods become impractical.

We first study the explainable $k$-medians clustering problem. As our first result, we develop an $O(\log k \log \log k)$-approximation algorithm for this problem. This improves nearly exponentially over the previous $O(k)$-approximation algorithm of Dasgupta et al. We provide an example in Appendix \ref{sec:failure} for which the algorithm of Dasgupta et al. achieves a $\Theta(k)$-approximation, showing that developing new techniques are necessary to break the $k$ barrier. In addition, we show that our algorithm also provides an $O(d \log^2 d)$-approximate solution where $d$ is the dimension of the space. This is interesting when the dimension is low relative to $k$, specifically $d\in o(\frac {\log k}{\log \log k})$. This improves over the result of Laber and Murtinho~\cite{laber2021explainable} that provides an $O(d \log k)$-approximation algorithm for $k$-medians, since $\min(\log k \log \log k, d \log^2 d)$ is always much smaller than $d \log k$. Note that our result implies a constant-factor approximation algorithm for explainable $k$-medians in constant-dimensional spaces.

Next, we show that our approximation factors for explainable $k$-medians are tight up to a $\log \log k$ factor and a $\log^2 d$ factor, respectively. Specifically, we show that for $d = O(\log k)$, there is no $o(\log k)$-approximation explainable clustering algorithm, which implies an $\Omega(\min(d, \log k))$-approximation lower bound. Previously, there was a known $\Omega(\log k)$-approximation lower bound where $d = \text{poly}(k)$, implying an $\Omega(\min(\log d, \log k))$-approximation lower bound \cite{dasgupta2020explainable}.
%As our first result for $k$-medians clustering, we develop a near linear-time $O(\log k \log\log k)$-approximation algorithm which is tight up to a $\log \log k$ factor. This improves nearly exponentially over the previous $O(k)$ approximation algorithm of Dasgupta et al. We provide an example in the Appendix \ref{sec:failure} for which the algorithm of Dasgupta et al.  achieves a $\Theta(k)$-approximation, showing that developing new techniques are necessary to break the $k$ barrier. In addition to this, we show that our algorithm provides an $O(d \log^2 d)$ approximate solution. This is interesting when the dimension is low relative to $k$, specifically $d\in o(\frac {\log k}{\log \log k})$. We show that this is tight up to a $\log^2 d$ factor. This improves over the result of Laber and Murtinho~\cite{laber2021explainable} that provides an $O(d \log k)$-approximation algorithm for $k$-medians. Our result implies a constant approximation algorithm for explainable $k$-medians in constant dimensional spaces.

Next, we study explainable $k$-means clustering and provide an $O(k \log k)$-approximation algorithm for this problem. We show that this is tight up to a $\log k$ factor by presenting an $\Omega (k)$-approximation hardness result. Our results improve over the $O(k^2)$-approximation algorithm, and $\Omega(\log k)$-hardness result of Dasgupta et al.~\cite{dasgupta2020explainable}. Our results also improve over the $O(d k \log k)$-approximation algorithm of Laber and Murtinho~\cite{laber2021explainable}. 
%Notably, our algorithms run in near-linear time in $n$ and $d$.
%Moreover, we show $\Omega(k^{1-2/d})$ approximation hardness result for $k$-means, which rules out the possibility of sub-polynomial approximation algorithms for $k$-means even when in three dimensions. 
Finally, as a side result, we provide a $3$-approximation algorithm for explainable $2$-means in arbitrary dimension, which is known to be tight~\cite{dasgupta2020explainable}.
We summarize our results in Table \ref{table:main}.

\iffalse
    \begin{table}[htp]
    \centering
    \begin{tabular}{|c||c|c|c|} 
     \hline
     \textbf{Alg/LB} & \textbf{$k$-medians} & \textbf{$k$-means} & \textbf{Reference} \\ [0.5ex] 
     \hline\hline
     Algorithm & $O(k)$ & $O(k^2)$ & Dasgupta et al. \cite{dasgupta2020explainable} \\ \cline{2-4}
      & $O(\log k \log \log k)$ & $O(k \log k)$ & \textbf{Our Result} \\  \cline{2-4}
      & $O(d \log k)$ & $O(d k \log k)$ & Laber and Murtinho \cite{laber2021explainable} \\  \cline{2-4}
      & $O(d \log^2 d)$ & & \textbf{Our Result} \\ \hline
     Lower Bound & $\Omega(\log k)$ & $\Omega(\log k)$ & Dasgupta et al. \cite{dasgupta2020explainable} \\ \cline{2-4}
      & $\Omega(d)$ & $\Omega(k)$ & \textbf{Our Result} \\ \hline
    \end{tabular}
    \caption{Summary of known approximation algorithms and lower bounds, for both $k$-medians and $k$-means. We include both our results and prior results. The approximation ratios are with respect to the optimal \emph{non-explainable} clustering.}
    \label{table:main}
    \end{table}
\fi

\begin{table}[htp]
\centering
\begin{tabular}{|c|c|c|c|} 
 \hline
 \textbf{Problem} & \textbf{Alg/LB} & \textbf{Prior Work} & \textbf{Our Work} \\ [0.5ex] 
 \hline\hline
  & \multirow{2}{*}{Algorithm} & $O(k)$ \cite{dasgupta2020explainable} & $O(\log k \log \log k)$ \\ \cline{3-4}
  $k$-medians & & $O(d \log k)$ \cite{laber2021explainable} & $O(d \log^2 d)$ \\  \cline{2-4}
  & Lower Bound & $\Omega(\min(\log d, \log k))$ \cite{dasgupta2020explainable} & $\Omega(\min(d, \log k))$ \\  \hline \hline
  & \multirow{2}{*}{Algorithm} & $O(k^2)$ \cite{dasgupta2020explainable} & \multirow{2}{*}{$O(k \log k)$} \\ \cline{3-3}
  $k$-means & & $O(d k \log k)$ \cite{laber2021explainable} & \\ \cline{2-4}
  & Lower Bound & $\Omega(\min(\log d, \log k))$ \cite{dasgupta2020explainable} & $\Omega(k)$ for $d \ge \Omega(\log k)$ \\ \hline
\end{tabular}
\caption{Summary of approximation algorithms and lower bounds, for both $k$-medians and $k$-means in $d$-dimensional space $\BR^d$. We include both our results and prior results. The approximation ratios are with respect to the optimal \emph{non-explainable} clustering.}
\label{table:main}
\end{table}

We note that, if provided with an $O(1)$-approximate solution to the classical $k$-medians (resp., $k$-means), we provide randomized procedures for explainable $k$-medians (resp., $k$-means) with the above approximation guarantees, that run in only $O(k d \log^2 k)$ (resp., $O(k^2 d)$) time. The runtimes are independent of the total number of points $n,$ and are linear in the dimension $d$. In addition, we provide a deterministic explainable $k$-means algorithm that runs in $O(k d \cdot n \log n)$ time.

Finally, it is worth noting that for both $k$-medians and $k$-means, our randomized algorithms only use an approximate (not necessarily explainable) solution to $k$-means or $k$-medians to construct an explainable clustering and ignore the rest of the data points. Therefore, our algorithms can be combined with a coreset construction, or run on top of another sublinear algorithm for the classic version of the problem and provide an explainable clustering in the same setting. 

\subsection{Other Related Work} 

Explainable $k$-means and $k$-medians clustering have also been studied in practice. Frost et al.~\cite{frost2020exkmc} and Laber and Murtinho~\cite{laber2021explainable} provided practical algorithms for explainable clustering evaluated on real datasets. Other results has also been developed for creating interpretable clustering models or clustering models based on decision trees \cite{bentley75, breiman84, fraiman13, liu05, loh11}.

Due to their applications, the classical $k$-means and $k$-median problems have been studied extensively from both theoretical and practical perspectives with many approximation algorithms and heuristics~\cite{lloyd1982least,arthur2007k,  byrka2014improved, kanungo2004local, jain2003greedy, li20111}. In terms of their computational complexity, these problems are hard to approximate within a factor better than 1.1 in high-dimensional Euclidean spaces and admits approximation schemes in low-dimension~\cite{AroraRR98,KolliopoulosR07,Cohen-Addad18}. On the other hand, they admit constant-factor approximation algorithms for 
high-dimensional Euclidean spaces, better than for general metric spaces~\cite{Cohen-AddadKM19}. 
Due to hardness results, constant-factor approximation factors are not achievable for the explainable clustering formulation.
%\todo{Do near-linear time standard clustering algorithms exist?}

There are several near-linear time algorithms for the classical $k$-means and $k$-medians\cite{kolliopoulos2007nearly,voevodski2021large,meyerson2004k,cohen2019near,mettu2004optimal,cohen2020fast}.
 In low dimensional Euclidean space Kolliopoulos and Rao provide an approximation scheme for $k$-median in near linear 
$O(f(\epsilon,d)n\log^{d+6})$ time~\cite{kolliopoulos2007nearly}. Recently, Cohen-Addad et al. improved this result and provide a $\tilde{O}(f(\epsilon,d)n)$ time algorithms that w.h.p., give a $1+\eps$ approximation solutions to $k$-median and $k$-means in spaces of doubling dimension $d$~\cite{cohen2019near}.
Mettu and Plaxton provide a randomized algorithm that w.h.p. returns a $O(1)$-approximate solution to $k$-median in time $O(nk)$, when the ratios of the distances do not exceed $2^{O(n/log(n/k))}$~\cite{mettu2004optimal}.

\paragraph{Independent Work.} We note that there have been closely related independent works due to Makarychev and Shan \cite{makarychev2021explainable}, Gamlath, Jia, Polak, and Svensson \cite{gamlath2021explainable}, and Charikar and Hu \cite{charikar2021explainable}. The paper \cite{makarychev2021explainable} provides an $O(\log k \log \log k)$-approximation for $k$-medians, matching ours, and an $O(k \log k \log \log k)$-approximation for $k$-means, an $O(\log \log k)$ factor worse than ours. They also provide guarantees for the related $k$-medoids problem (also known as $k$-medians with $\ell_2$ objective). The paper \cite{gamlath2021explainable} provides an $O(\log^2 k)$-approximation for $k$-medians, slightly under a quadratic factor worse than ours, and an $O(k \log^2 k)$-approximation for $k$-means, an $O(\log k)$-factor worse than ours. They also extend their guarantees to general $\ell_p^p$-objectives. Finally, the paper \cite{charikar2021explainable} looks at $k$-means in low dimensions, and proves an $O(k^{1-2/d} \cdot \text{poly}(d, \log k))$-approximation, which improves over our results for $k$-means if and only if $d \ll \frac{\log k}{\log \log k}$. We are the only paper of these to provide a $o(\log k)$-approximation guarantee for explainable $k$-medians in low dimensions.

\subsection{Preliminaries and Notation} \label{sec:notation}

We let $\mathcal{X} \subset \BR^d$ be a set of $n$ data points, which we wish to cluster. A clustering algorithm partitions $\mathcal{X}$ into $k$ clusters $\mathcal{X}_1, \dots, \mathcal{X}_k$ and assigns a center $\bo{\mu_i} \in \BR^d$ to each cluster $\mathcal{X}_i$. The goal of $k$-medians clustering is to choose the partitioning and centers to minimize $\sum_{i = 1}^{k} \sum_{x \in \mathcal{X}_i} \|x - \bo{\mu_i}\|_1$. The goal of $k$-means clustering is to choose the partitioning and centers to minimize $\sum_{i = 1}^{k} \sum_{x \in \mathcal{X}_i} \|x - \bo{\mu_i}\|_2^2$. In \textbf{explainable clustering}, the partition $\mathcal{X}_1, \dots, \mathcal{X}_k$ must be determined by a decision tree with $k$-leaves, where each decision, or split, is determined by a threshold in a single direction.

In all of our algorithms (both $k$-medians and $k$-means), we start by running a standard (non-explainable) clustering algorithm, which obtains a set of cluster centers $\bo{\mu_1}, \dots, \bo{\mu_k}$. For any $1 \le r \le d$ and any point $x$, we let $x_r$ be the $r$th coordinate of $x$. We also let $\mu_{i, r}$ be the $r$th coordinate of $\bo{\mu_i},$ and let $\mathcal{M}$ be the set of cluster centers $\{\bo{\mu_1}, \dots, \bo{\mu_k}\}$. Also, for any point $x \in \mathcal{X},$ we let $c(x)$ be its closest center in $\mathcal{M}$ (with respect to $\ell_1$ for $k$-medians and $\ell_2$ for $k$-means).
Our algorithms will use $\mathcal{M}$ to produce a decision tree, which we call $T$. Each \textbf{node} $u \in T$, except the root node $u_0$, stores a threshold in a single direction (either $\{x_r < t\}$ or $\{x_r \ge t\}$), representing the decision tree's instruction for when you may traverse from $u$'s parent to $u$. 
%These instructions are either of the form $\{x_r < t\}$ or of the form $\{x_r \ge t\}$. 

Each $u$ has some set of cluster centers contained in $u$, which we call $\mathcal{M}(u)$ -- in our algorithms, each leaf node will have precisely one cluster center.
We let $B(u)$ be the ``box'' determined by the decisions made when splitting (note that some of the dimensions of $B(u)$ may be infinite). So, $\mathcal{M}(u) = \mathcal{M} \cap B(u)$. In a slight abuse of notation, we define $|u| := |\mathcal{M}(u)|,$ i.e., $|u|$ is the number of cluster centers sent to the node $u$. We let $B'(u) \subset B(u)$ be the smallest axis-parallel box that contains $\mathcal{M}(u)$. In other words, $B'(u) = [a_1(u), b_1(u)] \times \cdots \times [a_d(u), b_d(u)],$ where $a_r(u) = \min_{\bo{\mu_i} \in \mathcal{M}(u)} \mu_{i, r}$ and $b_r(u) = \max_{\bo{\mu_i} \in \mathcal{M}(u)} \mu_{i, r}.$ Let $R_r(u) = b_r(u)-a_r(u)$ be the $r$th dimension of the box $B'(u).$ We also let $\mathcal{X}(u) = \mathcal{X} \cap B(u),$ i.e., $\mathcal{X}(u)$ is the set of points in the main pointset $\mathcal{X}$ that are sent to node $u$. Finally, for any point $x \in \mathcal{X},$ we define $s(x)$ as its assigned cluster by the tree $T$. In other words, if $x \in B(u),$ where $u$ is a leaf node, then $s(x)$ is the unique cluster center in $\mathcal{M}(u)$. Our algorithms, with probability $1$, will never create a node $u$ with any point $x$ on the boundary of $B(u)$, so we do not have to worry about points being assigned to multiple leaves.

We also note a few notational conventions. First, we use $\log$ to denote the natural log, unless a base is specified. For any positive integer $n$, we use $[n]$ to denote the set $\{1, 2, \dots, n\}$. We use the inequality $\lesssim$ to mean $a \lesssim b$ if there is some absolute constant $C > 0$ such that $a \le C \cdot b.$

\subsection{Our Techniques}

The methods of both Dasgupta et al.~\cite{dasgupta2020explainable} and Laber and Murtinho~\cite{laber2021explainable} follow a similar strategy. For any node $u$ in the decision tree, if we split $u$ to form two nodes $v, w$, this splitting incurs some cost caused by points in $\mathcal{X}$ that no longer are with their assigned cluster. Dasgupta et al.~\cite{dasgupta2020explainable} shows that each split can be formed in a way that the extra cost of all splits at depth $h$ in the tree does not exceed the total clustering cost. While the depth of the tree is $O(\log k)$ in the best case, the tree they construct could have depth up to $k$, which gives them an $O(k)$-approximation for $k$-medians. (The same technique gives them an $O(k^2)$-approximation for $k$-means.) Laber and Murtinho~\cite{laber2021explainable} instead roughly shows that one can perform the splits so that the cost incurred in each dimension does not significantly exceed the total $k$-medians (or $k$-means) cost.

Unlike the previous work, all of our algorithms ($k$-medians, $k$-means, and $2$-means) either enjoy randomness or are based on a probabilistic analysis. 
%We start by an approximate or optimal (non-explainable) solution to the clustering problem  represented as a set of $k$ centers $\bo{\mu_1}, \dots, \bo{\mu_k}$. 
Ideally we wish to create an explainable clustering algorithm that maps each $x_i$ to a center $\bo{\mu_s}$ such that $\|x_i-\bo{\mu_s}\|$ is not much larger than $\min_j \|x_i-\bo{\mu_j}\|$, i.e., we map every data point to an approximately optimal cluster center. However this is not possible simultaneously for all points. To compensate for this, we analyze a randomized procedure that upper bounds the \emph{expectation} of $\|x_i-\bo{\mu_s}\|,$ where $\bo{\mu_s}$ is the assigned cluster. Overall, we deviate significantly from \cite{dasgupta2020explainable, laber2021explainable} by comparing the expected cost of each \emph{point} to optimal, as opposed to comparing the (deterministic) cost of each \emph{split} to the optimal clustering cost.

In the case of $k$-medians, the algorithm is fairly simple. Roughly speaking, we iteratively select \emph{uniformly at random} lines until they separate the $k$ centers $\bo{\mu_1}, \dots, \bo{\mu_k}$. In the worst case this procedure is horribly slow; however, it can be sped up with some modifications. For any point $x$ with closest center $c(x)$, we bound the probability that $x$ is assigned to a cluster $s(x)$ with $\|x-s(x)\|_1 \ge T \cdot \|x-c(x)\|_1$, for any integer $T \ge 1$. Note that for $x$ to be assigned to $s(x)$, the random lines must have split $s(x)$ from $x$ before splitting $c(x)$ from $x$. It is easy to show this probability is $O(1/T)$, so a naive union-bound over all $s(x)$ allows us to bound our desired probability by $O(k/T)$.
To improve upon this, we instead note that $x$ must also split not only from $c(x)$, but also from all $T$-approximate (or better) clusters $c'(x)$ before being split from some $s(x)$. In addition, note that the number of lines needed until we finally split $x$ from $s(x)$ is a Geometric random variable, so it exceeds its expectation by a multiplicative factor of $\log k$ with only $1/k$ probability, meaning with high probability, no faraway cluster $s(x)$ takes too long to get split.
%uniformly random line splits $x$ from $s(x)$ with probability inversely proportional to $1/\|x-s(x)\|_1$, and a less naive estimate allows us to bound this by $O(\log k/T)$. 
%This less naive approach allows for an $O(\log k \log \Delta)$-approximation, where $\Delta$ is the aspect ratio, or ratio between closest and furthest pairs of points. 
%Unfortunately, even this improved estimate can lead to a poor approximation if the aspect ratio, or ratio between closest and furthest pairs of points, is large.
%To improve the estimate further, we instead consider the event of $x$ being assigned
By considering the different orderings in which random lines split $x$ from $c(x)$ and all $c'(x)$ with $\|x-c'(x)\|_1 \le T \|x-c(x)\|_1$, we provide a complicated upper bound on this probability that depends on the locations of all cluster centers. Finally, by integrating over $T$ we obtain a telescoping sum which provides an $O(\log k \log \log k)$-approximation for the cost of each \emph{point} $x$.

%If $x$ should be assigned to some cluster $c(x)$, our methods show how to bound the probability that the randomized procedure sends it to some $s(x)$, where $\|x-s(x)\|_1 \ge 2^H \|s-c(x)\|_1$ for some integer $H$. Naively, one can use a union bound to show this value is at most $k/2^H$. By more closely analyzing the random variables corresponding to a line splitting between $x$ and $c(x)$ or $x$ and some $s(x)$, we can improve this bound to $O(\log k/2^H)$. This is not sufficient to get an $O(\log k)$-approximation, since there can be a large range for $H$. In fact, this by itself, in the worst case, can again only be used to obtain an $O(k)$-approximation (if the aspect ratio, or ratio between closest and furthest pairs of points, is more than $2^k$). But by more carefully analyzing the events that force $x$ to be assigned to some cluster $s(x)$, we can obtain more sophisticated upper bounds on this probability (that depend on the locations of the cluster centers), and add over all choices of $H$ to obtain our desired result.

In the case of $k$-means, we start by following the deterministic approach of Dasgupta et al.~\cite{dasgupta2020explainable}, but we add a randomized twist that improves the analysis. At a high level, \cite{dasgupta2020explainable} shows that at each step, it is possible to choose a splitting line with sufficiently few points being mapped to the wrong cluster center. 
%They create a decision tree in this manner, and show that the errors at each level of the tree do not contribute substantially to the final clustering cost.
Unfortunately, as mentioned previously, this tree can be very lopsided and have depth $k$, which can blow up the approximation factor. To resolve this issue, we create a distribution over choosing separating lines that balances the errors of the splitting line with the lopsidedness of the points at each step. This distribution is somewhat based on the uniformly random procedure in the $k$-medians case, but modified to deal with the issues of squared costs. 
%This is unlike \cite{dasgupta2020explainable}, which charges the number of splitting errors to the sum of distances from points to their true clusters in a \emph{deterministic} fashion.
This combination of creating a non-trivial random distribution with balancing errors and lopsidedness reduces the total clustering cost significantly. 
However, we note that for this $k$-means algorithm, the randomization is primarily helpful for the \emph{analysis}, so our algorithm can either remain randomized (which allows for sublinear guarantees, as in the $k$-medians case) or be made deterministic.

\section{Algorithm for Explainable $k$-medians Clustering} \label{sec:kmedians}

In this section, we provide both an $O(\log k \log \log k)$ and an $O(d \log^2 d)$-approximation algorithm (in expectation) for explainable $k$-medians clustering. We start by describing and analyzing a simplified algorithm that is accurate but can be very slow. We then show how to modify the algorithm to be highly efficient, and prove that the approximation guarantees still hold.

\subsection{Simplified algorithm} \label{subsec:kmedians_simplified}

Our simplified algorithm works as follows. First, we run some standard $k$-medians algorithm that provides an $O(1)$-approximation with $k$ centers $\bo{\mu_1}, \bo{\mu_2}, \dots, \bo{\mu_k} \in \BR^d.$ 
We suppose that all of the centers $\bo{\mu_1}, \dots, \bo{\mu_k}$, as well as all of the points, are contained in $[-B, B]^d$ for some large $B$. Our approximation factor does not depend on $B$, so $B$ may be arbitrarily large.
We first consider the following simplified procedure. At each step $i$, we pick a random direction $r \in [d]$, as well as a random point $z \in [-B, B]$. We choose the separating line $\ell_i = \{x_r = z\}$. However, we only use this line to separate a (currently leaf) node $u$ if this line actually separates some of the cluster centers in that node (or equivalently, splits the cell $B'(u)$). Note that that often, the line may not be used at all. Assuming the line is actually used, each leaf node for which the line is used is split into $2$ child nodes. We repeat this process until there are $k$ leaf nodes, each with exactly one center in it.

For any pair of cluster centers $\bo{\mu_i}, \bo{\mu_j},$ note that the probability of a randomly selected line $\ell$ separating these two centers is precisely $\|\bo{\mu_i}-\bo{\mu_j}\|_1/(B \cdot d).$ Therefore, in expectation we should expect about $(B \cdot d)/\|\bo{\mu_i}-\bo{\mu_j}\|_1$ random lines to be chosen before $\bo{\mu_i}$ and $\bo{\mu_j}$ are separated. 
%Fix a constant $C \ge 3$, and let $\mathcal{E}$ be the event that for all pairs $i \neq j$, that some line among the first $C \log k \cdot (B \cdot d)/\|\bo{\mu_i}-\bo{\mu_j}\|_1$ separates $\bo{\mu_i}$ and $\bo{\mu_j}.$

Fix a point $x \in \mathcal{X}$, and suppose that the closest center to $x$ is some $\bo{\mu_r}$. Our main result, which will allow us to obtain both an $O(\log k \log \log k)$-approximation and an $O(d \log^2 d)$-approximation algorithm, is the following.

\begin{theorem} \label{thm:kmedian_main}
    Fix any point $x \in \BR^d$ and any $k$ clusters $\bo{\mu_1}, \dots, \bo{\mu_k} \in [-B, B]^d,$ and define $c := \arg\min_{1 \le i \le k} \|x-\bo{\mu_i}\|_1.$ Suppose that our randomized explainable clustering procedure assigns $x$ to cluster $\bo{\mu_s}$. Then,
$$\BE \left[\|x-\bo{\mu_s}\|_1\right] \le O(\log k \cdot \log \log k) \cdot \|x-\bo{\mu_c}\|_1.$$
    In addition,
$$\BE \left[\|x-\bo{\mu_s}\|_1\right] \le O(d \cdot \log^2 d) \cdot \|x-\bo{\mu_c}\|_1.$$
\end{theorem}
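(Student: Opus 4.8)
The plan is to analyze, for a fixed point $x$ with closest center $\bo{\mu_c}$, the probability that $x$ ends up assigned to a center $\bo{\mu_s}$ that is a ``$T$-bad'' approximation, meaning $\|x - \bo{\mu_s}\|_1 \ge T \cdot \|x - \bo{\mu_c}\|_1$, and then to integrate this tail bound over $T \ge 1$ to control $\BE[\|x - \bo{\mu_s}\|_1] = \|x - \bo{\mu_c}\|_1 \cdot \int_0^\infty \Pr[\|x-\bo{\mu_s}\|_1 \ge T \|x-\bo{\mu_c}\|_1]\,dT$. Without loss of generality I would normalize so that $\|x - \bo{\mu_c}\|_1 = 1$. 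The first key observation, as sketched in the techniques section, is that the random lines are being drawn from a product distribution over $[d] \times [-B,B]$, and the only lines that matter for the fate of $x$ are those that separate $x$ from some center; moreover the relative order in which these ``relevant'' lines arrive is a uniformly random permutation, with the probability that a line separating $x$ from $\bo{\mu_i}$ arrives before one separating $x$ from $\bo{\mu_j}$ governed by the ratio $\|x-\bo{\mu_i}\|_1 / \|x-\bo{\mu_j}\|_1$. The crucial point is that for $x$ to be assigned to $\bo{\mu_s}$, the process must separate $x$ from $\bo{\mu_c}$ \emph{and} from every other center $\bo{\mu_j}$ with $\|x-\bo{\mu_j}\|_1 < \|x - \bo{\mu_s}\|_1$ before it separates $x$ from $\bo{\mu_s}$ itself — i.e., $\bo{\mu_s}$ must be the ``last surviving'' center among all centers at least as close as it is.

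Concretely, I would order the centers by increasing distance from $x$: $1 = \|x-\bo{\mu_{c}}\|_1 = d_1 \le d_2 \le \cdots \le d_k$. For $x$ to be assigned to the center at distance $d_m$, every center at distance $d_1, \dots, d_{m-1}$ must be separated from $x$ before the $d_m$-center is — but I need to be careful, because being in the same leaf as $x$ is about the box, not about individual separating lines; a more accurate statement is that $\bo{\mu_s}$ survives in $x$'s cell iff no chosen line has ever separated $x$ from $\bo{\mu_s}$ up to the point where $x$'s cell contains only one center. The cleanest route is: condition on the infinite sequence of relevant lines (those separating $x$ from at least one center), and note that the leaf containing $x$ is assigned the center $\bo{\mu_i}$ for which the first line separating $x$ from $\bo{\mu_i}$ comes latest. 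So $\Pr[s = i] = \Pr[\text{line separating } x \text{ from } \bo{\mu_i} \text{ comes after all others}]$. Using the fact that the arrival ``time'' of the first separating line for $\bo{\mu_i}$ is (in the continuous limit) exponential with rate proportional to $d_i = \|x-\bo{\mu_i}\|_1$, this becomes a clean computation: $\Pr[s = i] = \Pr[E_i > \max_{j\ne i} E_j]$ where $E_j \sim \mathrm{Exp}(d_j)$ independently. Then $\Pr[\|x-\bo{\mu_s}\|_1 \ge T] = \sum_{i : d_i \ge T} \Pr[E_i > \max_j E_j]$, and a standard exponential-race estimate gives $\Pr[E_i \text{ is the max}] \le \prod_{j} \frac{d_j}{d_i + d_j}$ or, more usably, $\le \frac{1}{\sum_j (d_i)/(d_i+d_j)}$-type bounds.

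From there the two bounds in the theorem come from two different ways of estimating $\int_1^\infty \Pr[\|x-\bo{\mu_s}\|_1 \ge T]\,dT = \sum_{i} d_i \cdot \Pr[s=i] \cdot (\text{roughly})$, i.e. essentially bounding $\sum_i d_i \Pr[E_i = \max]$. For the $O(\log k \log\log k)$ bound, I would dyadically group the centers by the scale of $d_i$: within a group where $d_i \in [2^{\ell}, 2^{\ell+1})$, the probability that one specific center of that group is the max is at most $\sim 1/(\text{number of centers in that group and closer ones})$, and crucially a center at scale $2^\ell$ can only be the overall max if it beats all the (many) closer centers, which by the exponential race happens with probability geometrically small in the count; summing $d_i \Pr[s=i]$ over a dyadic block and then over $O(\log k)$ blocks, with the extra $\log\log k$ coming from the harmonic-type sum over block indices (or from the high-probability ``no geometric variable exceeds $\log k$ times its mean'' argument mentioned in the techniques section). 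For the $O(d\log^2 d)$ bound, instead of dyadic grouping by distance I would exploit that all centers lie in a box, so the sum $\sum_j 1/(d_i + d_j)$ is controlled dimension-by-dimension: projecting onto each of the $d$ coordinates, the separating-line probabilities in coordinate $r$ are proportional to coordinate-wise distances, and within a single coordinate the ``race'' among points on a line is essentially one-dimensional, where one gets an $O(\log^2)$ rather than $O(\log \cdot \log\log)$ bound per coordinate and then pays a factor $d$ for combining coordinates (or, more precisely, the relevant counting argument saturates at $d$ because a box in $\BR^d$ only gives $d$ ``independent directions'' of separation).

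I expect the main obstacle to be making the ``exponential race / continuous limit'' intuition rigorous: the actual process draws discrete random lines from $[d]\times[-B,B]$ and reuses lines across nodes, so the separating events for different centers are not literally independent exponentials, and one must either (i) pass to a Poissonization where each relevant line-type arrives as an independent Poisson process, or (ii) argue directly with the random-permutation structure of the order in which distinct relevant lines appear, handling the subtlety that a single drawn line can simultaneously separate $x$ from many centers (so the ``first separating line'' events are positively correlated in a way that only helps). A second, more technical obstacle is the bookkeeping in the dyadic sum for the $\log\log k$ factor: one needs a careful telescoping argument showing that $\sum_{\text{blocks } \ell} (\text{scale } 2^\ell) \cdot \Pr[\text{max is at scale } 2^\ell]$ collapses to $O(\log k \log\log k)$ rather than $O(\log^2 k)$, which is exactly where the ``$x$ must be split from all better centers $c'(x)$'' refinement over the naive union bound does its work, and where the high-probability bound on geometric variables is invoked to kill the contribution of far-away centers cleanly.
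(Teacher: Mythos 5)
Your high-level skeleton matches the paper's: normalize so $\|x-\bo{\mu_c}\|_1=1$, sort the centers by distance, observe that the assigned center is the one whose first separating line from $x$ arrives last, and integrate a tail bound over dyadic distance scales. That characterization of the winner is essentially correct (up to ties caused by one line separating $x$ from several centers at once). But there is a genuine gap in the probabilistic core: you model the first-separation times as independent exponentials with rates $d_j$, and you assert that the correlation coming from a single line separating $x$ from many centers simultaneously ``only helps.'' It goes the other way. The events $\{t_j > u\}$ are positively associated, so $\max_{j\ne i} t_j$ is stochastically \emph{smaller} than under independence, which makes it \emph{more} likely that a far center outlasts all closer ones. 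Concretely, if $k/2$ centers coincide at distance $1$ from $x$, separating $x$ from one of them separates it from all, so a center at distance $T$ still wins with probability about $1/T$, not the geometrically small quantity the independent race predicts. This breaks your claim that a center at scale $2^\ell$ beats all closer centers with probability ``geometrically small in the count,'' which is exactly what your dyadic-block summation relies on. The paper's proof replaces independence with the explicit quantities $\alpha_p = \|\bo{\mu_1}\land\cdots\land\bo{\mu_p}\|_1$ (up to normalization, the probability that a single line separates $x$ from all of the first $p$ centers simultaneously) and $\beta_J = \alpha_{P(J)}$, and decomposes the bad event by which center $\bo{\mu_{p+1}}$ is the first to be wrongly kept with $x$ (the events $\mathcal{A}(p,H)$); the resulting bounds carry the telescoping factors $\alpha_p-\alpha_{p+1}$ and $\beta_{J-1}-\beta_J$.

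A second, related gap is the final summation. You sum ``over $O(\log k)$ dyadic blocks,'' but the nonempty blocks can sit at up to $k$ wildly different scales (and the number of scales is otherwise bounded only by $\log B$, which is unbounded), so a per-block contribution of order $\log k$ does not sum to $O(\log k\log\log k)$. What makes the paper's double sum over $H$ and $J$ collapse is precisely that $\sum_J(\beta_{J-1}-\beta_J)\le \beta_{-1}=1$ while $\sum_{H\ge J}\min(f\,2^{J-H},1)\lesssim \log f$ for $f=\min(\log k, d\log d)$; the $\log\log k$ (resp.\ $\log^2 d$) comes from the latter factor, not from a harmonic sum over block indices. Your sketch of the $O(d\log^2 d)$ bound is also too vague to carry through as stated: the paper's mechanism is to introduce $2d$ fixed interval events $\mathcal{D}_{r,\pm}$ at scale $2^H/d$ per coordinate and argue that at least one of them must fail to occur before the bad split happens, which is where $d\log d$ replaces $\log k$ via the same geometric-series estimate.
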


To see why this implies our final result, consider any dataset $\mathcal{X}$ and any $k$ clusters $\bo{\mu_1}, \dots, \bo{\mu_k}$ that form an $O(1)$-approximation for $k$-medians clustering. If we define $c(x)$ to be the closest cluster center to $x$ (i.e., the ``true'' center) and $s(x)$ to be the assigned cluster center to $x$ by the explainable algorithm, then by Theorem \ref{thm:kmedian_main} and Linearity of Expectation,
\[\BE\left[\sum_{x \in \mathcal{X}} \|x-s(x)\|_1\right] \le O(\log k \log \log k) \cdot \sum_{x \in \mathcal{X}} \|x-c(x)\|_1 = O(\log k \log \log k) \cdot \text{OPT},\]
where $\text{OPT}$ is the optimal clustering cost.
Likewise,
\[\BE\left[\sum_{x \in \mathcal{X}} \|x-s(x)\|_1\right] \le O(d \cdot \log^2 d) \cdot \sum_{x \in \mathcal{X}} \|x-c(x)\|_1 = O(d \cdot \log^2 d) \cdot \text{OPT}.\]
Hence, we obtain both an $O(\log k \log \log k)$ and an $O(d \log^2 d)$ approximation guarantee.

\subsection{Proof of Theorem \ref{thm:kmedian_main}} \label{subsec:kmedians_proof}

Assume WLOG that $\bo{\mu_1}, \bo{\mu_2}, \dots, \bo{\mu_k}$ are sorted so that $\|x-\bo{\mu_1}\|_1 \le \|x-\bo{\mu_2}\|_1 \le \cdots \le \|x-\bo{\mu_k}\|_1$ (so we assume that $c = 1$).
In addition, assume WLOG that we scale and shift the dataset so that $\|x-\bo{\mu_1}\|_1 = 1$ and $x$ is at the origin. By redefining $B$ if necessary, we still assume that $x$ and $\bo{\mu_1}, \dots, \bo{\mu_k}$ are all contained in $[-B, B]^d$.
In addition, we partition the set $[k]$ into contiguous subsets $S_0, S_1, S_2, \dots,$ where $i \in S_h$ if $2^{h} \le \|x-\bo{\mu_i}\|_1 < 2^{h+1}.$ Note that $1 \in S_0$.
Finally, for each integer $H$, we define $P(H)$ as the largest index in $\bigcup_{h \le H} S_h$. Note that even if $S_i$ is empty, $P(H)$ is well defined since $1 \in S_0$, so therefore, $1 \le P(0) \le P(1) \le \dots.$

For any integer $H \ge 2$, we will bound the probability that our procedure assigns $x$ to some cluster $\bo{\mu_s}$ for some $s \in S_H$ in two ways: first in terms of the number of clusters $k$, and second in terms of the dimension $d$.
Note that if $x$ is assigned to $\bo{\mu_s}$, then for all $1 \le p < s$, the first time that we randomly chose a line $\ell$ that separated $\bo{\mu_p}$ and $\bo{\mu_s}$, $x$ was on the same side as of the line as $\bo{\mu_s}$. This is because this line will be used to separate $\bo{\mu_p}$ and $\bo{\mu_s}$ in the explainable clustering procedure, as it is the first sampled line that separates them, and if $x$ were on the same side as $\bo{\mu_p}$, it could not be assigned to $\bo{\mu_s}$.
%Since $s \in S_H$ for some integer $H \ge 2,$ we can define $P(H)$ as the largest number in $\bigcup_{i \le H-2} S_i$. 
So, if $x$ is assigned to $\bo{\mu_s}$ for some $s \in S_H$, there are two options:
\begin{enumerate}
    \item Let $p = P(H-2).$ Then, there exists $s \in S_H$ such that the first sampled line that splits $\bo{\mu_1}$ from $\bo{\mu_s}$ splits $\bo{\mu_1}, \dots, \bo{\mu_p}$ from $x$ and $\bo{\mu_s}$.
    \item There exists $1 \le p < P(H-2)$ and $s \in S_H$ such that the first sampled line that splits $\bo{\mu_1}$ from $\bo{\mu_s}$ splits $\bo{\mu_1}, \dots, \bo{\mu_p}$ from $x$ and $\bo{\mu_{p+1}}$. In addition, the first sampled line that splits $\bo{\mu_{p+1}}$ from $\bo{\mu_s}$ splits $\bo{\mu_{p+1}}$ from $x$. 
    %(Note that these two lines must be distinct).
\end{enumerate}

%From now on, we let $P = P(H)$, where $H$ will be clear from context.
For each $H \ge 2$ and $p \le P(H-2)$, we let $\mathcal{A}(p, H)$ be the event that the corresponding option occurs (option $1$ for $p = P(H-2)$, option $2$ for $p < P(H-2)$). By the union bound over $p$, the probability that $x$ is assigned to $\bo{\mu_s}$ for some $s \in S_H$ is at most $\sum_{p \le P(H-2)} \BP(\mathcal{A}(p, H)).$ Therefore, since $\|x-\bo{\mu_s}\|_1$ is $O(2^H)$ if $s \in s_H$, we have that for any $V \ge 1,$
\begin{equation} \label{eq:main_bound}
    \BE[\|x-\bo{\mu_s}\|_1] \le O\left(V + \sum_{H: 2^H \ge V} 2^H \cdot \sum_{p \le P(H-2)} \BP(\mathcal{A}(p, H))\right).
\end{equation}
The additional $O(V)$ term comes from the fact that with some probability, we pick a cluster $\bo{\mu_s}$ with $s \in S_H$ for some $H$ satisfying $2^H \le V$, in which case $\|x-\bo{\mu_s}\|_1 \le 2V.$

Before we get to bounding $\BP(\mathcal{A}(p, H))$, 
we make some definitions. For two values $a, b \in \BR,$ we define $a \land b = \min(a, b)$ if $a, b \ge 0$, $\max(a, b)$ if $a, b \le 0$, and $0$ if $a < 0 < b$ or $b < 0 < a$. Note that this operation is associative (and commutative), so we can define $a_1 \land a_2 \land \cdots \land a_n$ in the natural fashion. In general, for points $x_1, \dots, x_n \in \BR^d$, we define $x_1 \land x_2 \land \cdots \land x_n$ coordinate-wise. Note that a line separates $x_1, \dots, x_n$ from the origin $x = \textbf{0}$ if and only if the line separates $x_1 \land \cdots \land x_n$ from $x$.
Next, for each $1 \le p \le k,$ we define $c_p = \|\bo{\mu_p}\|_1 = \|x-\bo{\mu_p}\|_1$ (recall that we assumed $x$ was the origin). Note that the probability of a randomly sampled line splitting $x$ from $\bo{\mu_p}$ is $c_p/(B d)$. In addition, for each $1 \le p \le k,$ define $\alpha_p = \|\bo{\mu_1} \land \cdots \land \bo{\mu_p}\|_1$, which equals $Bd$ times the probability that a randomly sampled line splits $x$ from $\bo{\mu_1}, \dots, \bo{\mu_p}$. Finally, for any $J \ge 0$, define $\beta_J = \alpha_{P(J)}$, or equivalently, $\beta_J$ equals $Bd$ times the probability that a randomly sampled line splits $x$ from $\bo{\mu_i}$ for all $i \in \bigcup_{h \le J} S_h$. We note that $1 = \alpha_1 \ge \alpha_2 \ge \ldots$ and $1 \ge \beta_0 \ge \beta_1 \ge \ldots$. For convenience, we define $\beta_{-1} := \alpha_1 = 1.$

%We begin by providing a naive estimate for $\mathcal{A}(p, H)$. We will see this estimate can be greatly improved, but the naive estimate will still help us nonetheless. First, suppose $p = P$ and fix some $s$. Then, it is simple to see that the probability that a random line splits $\bo{\mu_1}$ from $\bo{\mu_s}$ is $\|\mu_1-\mu_s\|/(d \cdot B) \ge (c_s-1)/(d \cdot B).$ In addition, the probability that a random line splits $\bo{\mu_1}, \dots, \bo{\mu_p}$ from $x$ and $\bo{\mu_s}$ is at most the probability that a random line splits $\bo{\mu_1}, \dots, \bo{\mu_p}$ from $x$ minus the probability that a random line splits $\bo{\mu_1}, \dots, \bo{\mu_s}$ from $x$. This we know equals $(\alpha_p-\alpha_s)/(d \cdot B).$ Therefore, even if we condition on some fixed $t$ being the first time a line splits $\bo{\mu_1}$ from $\bo{\mu_s}$, the probability that $\bo{\mu_1}, \dots, \bo{\mu_p}$ are split from $x$ and $\bo{\mu_s}$ is at most $(\alpha_p-\alpha_s)/(c_s-1)$. Since $p = P$ and $s \in S_H$, we have that $\alpha_p-\alpha_s \le \beta_{H-2}-\beta_H$ and $c_s-1 \ge 2^H-1 \ge 2^{H-1}.$ Therefore, this probability for a fixed $s$ is at most $2(\beta_{H-2}-\beta_H)/2^H$. Taking the union bound over all $s$ (there are most $k$ centers), we obtain even conditioning on $t$, $\BP(\mathcal{A}(P, H)|t) \le 2k(\beta_{H-2}-\beta_H)/2^H.$ Next, we consider the case of $p < P$. 

We also note the following simple proposition, which will be useful in bounding probabilities.

\begin{proposition} \label{prop:basic_ineq}
    Let $N \ge 1$ and $0 < \eps < 1$. Then, $\sum_{t = 1}^{\infty} \min\left(N \cdot (1-\eps)^{t-1}, 1\right) \le (\log(N) + 1)/\eps$.
\end{proposition}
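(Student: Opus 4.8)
The plan is to split the sum $\sum_{t=1}^\infty \min\bigl(N(1-\eps)^{t-1}, 1\bigr)$ at the threshold $t^* $ where the geometric term $N(1-\eps)^{t-1}$ first drops to $1$ or below. For $t \le t^*$ each term contributes at most $1$, and for $t > t^*$ the terms are bounded by the tail of a geometric series with ratio $1-\eps$.

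First I would determine $t^*$. Setting $N(1-\eps)^{t-1} \le 1$ gives $(t-1)\log\frac{1}{1-\eps} \ge \log N$, i.e.\ $t - 1 \ge \frac{\log N}{\log(1/(1-\eps))}$. Using the elementary inequality $\log\frac{1}{1-\eps} \ge \eps$ (valid for $0 < \eps < 1$), it suffices to take $t^* = \lceil \frac{\log N}{\eps}\rceil + 1$, and for all $t \ge t^*$ we have $N(1-\eps)^{t-1} \le 1$. Then I would bound
\[
\sum_{t=1}^\infty \min\bigl(N(1-\eps)^{t-1},1\bigr) \le \sum_{t=1}^{t^*-1} 1 + \sum_{t=t^*}^\infty N(1-\eps)^{t-1}.
\]
The first sum is at most $t^* - 1 = \lceil \log(N)/\eps \rceil \le \log(N)/\eps + 1$. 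For the second sum, since at $t = t^*$ the summand is $N(1-\eps)^{t^*-1} \le 1$, the geometric tail is at most $\sum_{j \ge 0}(1-\eps)^j = 1/\eps$. Adding these gives $\log(N)/\eps + 1 + 1/\eps$, which is at most $(\log N + 1)/\eps + 1$ — slightly weaker than claimed. To get the clean bound $(\log(N)+1)/\eps$ I would instead bound the tail more carefully: $\sum_{t = t^*}^\infty N(1-\eps)^{t-1} = N(1-\eps)^{t^*-1}\cdot\frac{1}{\eps} \le \frac{1}{\eps}\cdot(1-\eps)^{\{\log N/\eps\}}$ type slack, or simply note $t^* - 1$ can be taken as $\lfloor \log(N)/\eps\rfloor$ when $\log(N)/\eps$ is handled via the ceiling vs.\ floor accounting so the two pieces combine to exactly $(\log(N)+1)/\eps$. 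A cleaner route: let $T = \log(N)/\eps$; split at $t = \lfloor T \rfloor + 1$, bounding $\sum_{t \le \lfloor T\rfloor} 1 \le T$ and the remaining geometric tail starting from a term of size $\le (1-\eps)^{\{T\}} \le 1$ by $\frac{1}{\eps}$...

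Actually the most robust approach, and the one I would commit to, avoids case analysis on the fractional part: write $\sum_{t=1}^\infty \min(N(1-\eps)^{t-1},1)$ and simply use $\min(a,1) \le 1$ for the first $\lfloor \log(N)/\eps \rfloor + 1$ terms isn't tight enough, so instead I would directly compute. Since $\min(N(1-\eps)^{t-1},1) \le N(1-\eps)^{t-1}$ always, and also $\le 1$ always, pick the crossover exactly: the total is at most (number of terms that are $1$, namely those with $N(1-\eps)^{t-1} > 1$) plus (sum over the rest). The count of $t \ge 1$ with $N(1-\eps)^{t-1} > 1$ is $\lceil \log N / \log(1/(1-\eps))\rceil \le \lceil \log(N)/\eps \rceil$, and for the very next value of $t$ the term is $\le 1$ so the geometric tail from there sums to $\le 1/\eps$. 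Combining, the bound is $\lceil \log(N)/\eps\rceil + 1/\eps$; noting $\lceil \log(N)/\eps \rceil \le (\log(N)+1)/\eps$ when $\eps \le 1$ — wait, that needs $\lceil \log(N)/\eps\rceil \le \log(N)/\eps + 1 \le (\log N + 1/\eps)/\eps$ hmm. The honest statement is $\le (\log(N)+1)/\eps + 1$; I would either prove exactly this and absorb the additive $1$ (harmless in all applications, which multiply by $2^H$ and sum, and indeed the paper only ever uses $\lesssim$), or sharpen by observing the term count is actually $\le \log(N)/\eps$ rounded appropriately plus the geometric tail, yielding $\le (\log(N)+1)/\eps$.

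The main obstacle is purely the bookkeeping of ceilings versus the exact constant in the statement — there is no conceptual difficulty, just the need to choose the split point so that the integer truncation and the geometric tail combine to exactly $(\log(N)+1)/\eps$ rather than something $O(1)$ larger. I would resolve it by using $\log\frac{1}{1-\eps} \ge \eps$ to control the exponential decay rate, splitting at the first integer $t$ with $N(1-\eps)^{t-1} \le 1$, bounding the head by $\log(N)/\eps$ (the real-valued crossover, which upper-bounds the integer count of large terms minus one appropriately) and the tail geometric series by $1/\eps$, for a total of $(\log(N)+1)/\eps$.
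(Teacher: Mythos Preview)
Your approach --- split the sum at the crossover $t \approx \log(N)/\eps$, bound the head by the term count $\le \log(N)/\eps$ and the tail by a geometric series summing to $1/\eps$ --- is exactly the paper's proof, and the paper is equally informal about the ceiling/floor issues you agonize over (it simply writes $\sum_{t=1}^{\log(N)/\eps} 1 + \sum_{t'=0}^{\infty}(1-\eps)^{t'}$ without worrying that neither index range is literally integer-valued). Your final paragraph matches the paper; the extra $+1$ you feared disappears because the head actually has $\lfloor \log(N)/\eps \rfloor \le \log(N)/\eps$ terms, and the residual fractional shift in the tail's first term is absorbed by the $\min(\cdot,1)$.
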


\begin{proof}
    For $t \le \log(N)/\eps$, we can use bound that $\min(N \cdot (1-\eps)^{t-1}, 1) \le 1$. Else, we write $t = t' + 1 + (\log(N)/\eps),$ where $t' \ge 0$, and $N \cdot (1-\eps)^{t-1} = (1-\eps)^{t'} \cdot N \cdot (1-\eps)^{\log (N)/\eps} \le (1-\eps)^{t'} \cdot N \cdot e^{-\eps \cdot \log(N)/\eps} = (1-\eps)^{t'}$. Therefore,
\[\sum_{t = 1}^{\infty} \min\left(N \cdot (1-\eps)^{t-1}, 1\right) \le \sum_{t = 1}^{\log(N)/\eps} 1 + \sum_{t' = 0}^{\infty} (1-\eps)^{t'} = \frac{\log(N)}{\eps} + \frac{1}{\eps} = \frac{\log(N) + 1}{\eps}. \qedhere\]
\end{proof}

We now provide an upper bound on $\BP(P(H-2), H).$

\begin{lemma} \label{lem:APH_bound_1}
    Let $p = P(H-2)$. Then, $\BP(\mathcal{A}(p, H)) \le C \cdot \log(k) \cdot \frac{\beta_{H-2}-\beta_H}{2^H}$ for some absolute constant $C$. In addition, if $2^H \ge 2d,$ then $\BP(\mathcal{A}(p, H)) \le C \cdot d \log(d) \cdot \frac{\beta_{H-2}-\beta_H}{2^H}.$
\end{lemma}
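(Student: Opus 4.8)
The plan is to realize $\mathcal{A}(p,H)$, where $p = P(H-2)$, as the union $\bigcup_{s \in S_H} G_s$, with $G_s$ the event that the first sampled line separating $\bo{\mu_1}$ from $\bo{\mu_s}$ is \emph{good for $s$}, i.e.\ separates $\{\bo{\mu_1},\dots,\bo{\mu_p}\}$ from $\{x,\bo{\mu_s}\}$ (this is exactly option~1 in the case analysis above). I would first bound each $\BP(G_s)$, and then --- this is the crux --- bound the union over $s \in S_H$ without paying a factor of $|S_H|$, which can be as large as $k$.

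For a single $s$: among an i.i.d.\ sequence of lines, the first one landing in an event $A$ lies in a subevent $G\subseteq A$ with probability exactly $\BP(G)/\BP(A)$. Take $A$ to be the event that the line separates $\bo{\mu_1}$ from $\bo{\mu_s}$, so $\BP(A) = \|\bo{\mu_1}-\bo{\mu_s}\|_1/(Bd) \ge (2^H-1)/(Bd) \ge 2^{H-1}/(Bd)$ by the triangle inequality and $s\in S_H$, and let $G$ be the event that the line is good for $s$. Since a good-for-$s$ line separates $x$ from $\bo{\mu_1},\dots,\bo{\mu_p}$ but keeps $\bo{\mu_s}$ (and $s \le P(H)$) on $x$'s side, it belongs to the single event $E$ that a line separates $x$ from $\bo{\mu_1},\dots,\bo{\mu_p}$ but not from all of $\bo{\mu_1},\dots,\bo{\mu_{P(H)}}$, and $\BP(E) = (\alpha_{P(H-2)}-\alpha_{P(H)})/(Bd) = (\beta_{H-2}-\beta_H)/(Bd)$. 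Hence $\BP(G_s) \le 2(\beta_{H-2}-\beta_H)/2^H$.

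To control the union I would split each $G_s$ according to whether the first line separating $\bo{\mu_1}$ from $\bo{\mu_s}$, whose index I call $T_s$, arrives ``on time'', $T_s \le T_s^\ast := C\log(k)\cdot Bd/\|\bo{\mu_1}-\bo{\mu_s}\|_1$, or ``late''. Since $T_s$ is Geometric with success probability $\|\bo{\mu_1}-\bo{\mu_s}\|_1/(Bd)$, the standard tail bound gives $\BP(T_s > T_s^\ast) \le k^{-C}$, and conditioning on the index of the triggering line upgrades this to $\BP(G_s \cap \{T_s > T_s^\ast\}) \le k^{-C}\,\BP(G_s)$; summing over the at most $k$ choices of $s\in S_H$, the ``late'' contribution is $O((\beta_{H-2}-\beta_H)/2^H)$ once $C \ge 1$. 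For the ``on time'' part, $T_s^\ast \le T := 2C\log(k)\,Bd/2^H$ for every $s\in S_H$, and every good-for-$s$ line lies in the common event $E$, so $\bigcup_{s\in S_H}\bigl(G_s\cap\{T_s \le T_s^\ast\}\bigr)$ is contained in the event that one of the first $T$ sampled lines lies in $E$; a union bound over those $T$ lines bounds its probability by $T\cdot\BP(E) = O(\log(k)(\beta_{H-2}-\beta_H)/2^H)$. Adding the two pieces gives the first inequality.

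The second inequality (for $2^H \ge 2d$, with $\log k$ replaced by $O(d\log d)$) does not follow merely by re-tuning the constant: in the low-dimensional regime $|S_H|$ can be as large as $k \gg d^{\,d}$, so the ``late'' union over $s$ no longer self-corrects at concentration level $d\log d$. I would instead replace the union over individual clusters by a grouping argument: every $s\in S_H$ has a coordinate $r(s)\in[d]$ with $|\mu_{1,r(s)}-\mu_{s,r(s)}| \gtrsim 2^H/d \ge 1$ (such a coordinate exists because $\|\bo{\mu_1}-\bo{\mu_s}\|_1 \ge 2^H-1$ and there are only $d$ coordinates), which partitions $\{\bo{\mu_s}:s\in S_H\}$ into at most $d$ groups, and one then argues inside each group at the coordinate-$r(s)$ scale. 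I expect making this grouping quantitative --- in particular keeping it routed through the common-event estimate $\BP(E) = (\beta_{H-2}-\beta_H)/(Bd)$ rather than through per-cluster probabilities, where the factor $|S_H|$ would reappear --- to be the main technical obstacle of the lemma.
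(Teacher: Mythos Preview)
Your argument for the first inequality is correct and is essentially the paper's proof, only reorganized. The paper defines the common event $\mathcal{E}_0$ (your $E$) with probability $\gamma_0 \le (\beta_{H-2}-\beta_H)/(Bd)$, defines for each $s\in S_H$ an event $\mathcal{E}_s$ (a line separating $\{x,\bo{\mu_1}\}$ from $\bo{\mu_s}$) with probability $\gamma_s \ge \gamma' := 2^{H}/(2Bd)$, observes that $\mathcal{A}(p,H)$ forces $\mathcal{E}_0$ to occur at some time $t$ before $\mathcal{E}_s$ for some $s$, and bounds
\[
\BP(\mathcal{A}(p,H)) \le \sum_{t\ge 1}\gamma_0\cdot\min\bigl(k(1-\gamma')^{t-1},1\bigr)\le (\log k+1)\,\frac{\gamma_0}{\gamma'}
\]
via the elementary estimate of Proposition~\ref{prop:basic_ineq}. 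Your split into ``on time'' and ``late'' regimes is exactly the two regimes $t \le \log(k)/\gamma'$ and $t > \log(k)/\gamma'$ in that sum, and your independence observation (that $G_s$ is independent of $T_s$) is a clean way of saying the same thing.

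For the second inequality you correctly diagnose the obstacle --- $|S_H|$ can dwarf $d^{\,d}$ --- and you gesture at the right fix (a per-coordinate argument), but you do not carry it out, so the proposal is incomplete there. The paper's device is very short and is worth knowing: rather than grouping the \emph{clusters} by dominant coordinate, it introduces $2d$ fixed ``marker'' events $\mathcal{D}_{r,+}$ and $\mathcal{D}_{r,-}$, where $\mathcal{D}_{r,\pm}$ is the event that the sampled line is $\{x_r=z\}$ with $z$ in the interval $\pm[2^H/(2d),\,2^H/d]$; each has probability $\delta' = 2^H/(2d\cdot Bd)$. Since $\|\bo{\mu_s}\|_1 \ge 2^H$ and there are $d$ coordinates, some $|\mu_{s,r}|\ge 2^H/d$, and since $2^H/(2d)>1=\|\bo{\mu_1}\|_1$, the corresponding marker line separates $\bo{\mu_s}$ from both $x$ and $\bo{\mu_1}$. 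Consequently, if all $2d$ markers have occurred before $\mathcal{E}_0$, then for every $s\in S_H$ the first line separating $\bo{\mu_1}$ from $\bo{\mu_s}$ already kept $x$ with $\bo{\mu_1}$, and $\mathcal{A}(p,H)$ fails. Thus $\mathcal{A}(p,H)$ is contained in ``$\mathcal{E}_0$ occurs while some marker is still missing'', and the identical summation as above, with $2d$ in place of $k$ and $\delta'$ in place of $\gamma'$, gives $(\log(2d)+1)\gamma_0/\delta' \lesssim d\log d\cdot(\beta_{H-2}-\beta_H)/2^H$. No per-cluster probabilities enter after the marker events are introduced, which resolves precisely the difficulty you flagged.
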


\begin{proof}
    Define $\mathcal{E}_0$ to be the event, and $\gamma_0$ to be the associated probability, that a randomly sampled line splits $\bo{\mu_1}, \dots, \bo{\mu_p}$ from $x$ and $\bo{\mu_s}$ for some $s \in S_H$. Note that $\gamma_0 \le (\beta_{H-2}-\beta_H)/(Bd)$, because the probability that a randomly sampled line splits $\bo{\mu_1}, \dots, \bo{\mu_p}$ from $x$ is at most $\beta_{H-2}/(Bd)$, but you have to subtract a quantity that a randomly sampled line splits $\bo{\mu_1}, \dots, \bo{\mu_p}$ as well as $\bo{\mu_s}$ for all $s \in S_H$ from $x$, which is at least $\beta_H/(Bd)$.
    Next, for each $s \in S_H$, define $\mathcal{E}_s$ to be the event, and $\gamma_s$ to be the associated probability, that a randomly sampled line splits $\bo{\mu_1}$ and $x$ from $\bo{\mu_s}$. Note that $\gamma_s \ge c_s - c_1$, because $\mathcal{E}_s$ occurs as long as we split $\bo{\mu_s}$ from $x$, but don't split $\bo{\mu_1}$ from $x$. Therefore, $\gamma_s \ge (2^H-1)/(B \cdot d) \ge 2^H/(2 \cdot B \cdot d).$ Let $\gamma' = \min_{s \in S_H} \gamma_s \ge 2^H/(2 \cdot B \cdot d).$
    
    %Now, note that $\mathcal{E}_0$ is a disjoint event from any $\mathcal{E}_s$, since $\mathcal{E}_0$ requires $x$ and $\bo{\mu_1}$ to be on opposite sides of the line, whereas $\mathcal{E}_s$ requires $x$ and $\bo{\mu_1}$ to be on the same side.
    If $\mathcal{A}(p, H)$ occurs, then $\mathcal{E}_0$ must occur before $\mathcal{E}_s$ for some $s \in S_H$. The probability that $\mathcal{E}_0$ occurs at some time step $t$ but for some $s \in S_H$, $\mathcal{E}_s$ did not occur for any time step before $t$, is at most $\gamma_0 \cdot \sum_{s \in S_H} (1-\gamma_s)^{t-1} \le \gamma_0 \cdot k \cdot (1-\gamma')^{t-1}$, by the union bound and since what happens at each time step is independent. In addition, we can also bound this probability by just $\gamma_0$ by ignoring the event that $\mathcal{E}_s$ did not occur for any time step before $t$. Therefore, for $p = P(H-2)$,
\begin{align}
    \BP(\mathcal{A}(p, H)) &\le \sum_{t = 1}^{\infty} \gamma_0 \cdot \min\left(k \cdot (1-\gamma')^{t-1}, 1\right) \nonumber \\ 
    &\le (\log(k) + 1) \cdot \frac{\gamma_0}{\gamma'} \nonumber \\
    &\lesssim \log(k) \cdot \frac{\beta_{H-2}-\beta_H}{2^H}, \label{eq:APH_1}
\end{align}
    where the second inequality follows from Proposition \ref{prop:basic_ineq}.

Next, suppose $2^H \ge 2d$. For each dimension $1 \le r \le d$ we define $\mathcal{D}_{r, +}$ as the event that a randomly selected line is of the form $\{x_r = z\}$ for some $z \in [2^H/(2d), 2^H/d]$. Likewise, we define $\mathcal{D}_{r, -}$ as the event that a randomly selected line is of the form $\{x_r = z\}$ for some $z \in [-2^H/d, -2^H/(2d)].$
Note that the probability of each $\mathcal{D}_{r, +}$ and each $\mathcal{D}_{r, -}$ is precisely $2^H/(2d \cdot Bd)$, which we call $\delta'$.
Note that if $\mathcal{A}(p, H)$ occurs, then $\mathcal{E}_0$ must occur before $\mathcal{D}_{r, +}$ or $\mathcal{D}_{r, -}$ for some $r \in [d]$ and some choice of $+/-$. This is because any $\bo{\mu_s}$ must have at least one of its coordinates larger than $2^H/d$ in absolute value, so one of these lines must separate $\bo{\mu_s}$ from $x$. But since $2^H/(2d) > 1 = \|\bo{\mu_1}\|_1,$ $x$ is on the same side as $\bo{\mu_1}$, so if every $\mathcal{D}_{r, +}$ and $\mathcal{D}_{r, -}$ occurs before $\mathcal{E}_0$, then the splitting of $\bo{\mu_1}$ from $\bo{\mu_s}$ will occur before the splitting of $\bo{\mu_1}, \dots, \bo{\mu_p}$ from $x$ and $\bo{\mu_s}$.
The probability that $\mathcal{E}_0$ occurs at some time step $t$ but some choice of $\mathcal{D}_{r, +}$ or $\mathcal{D}_{r, -}$ did not occur for any time step before $t$, is at most $\min(\gamma_0, \gamma_0 \cdot 2d \cdot (1-\delta')^{t-1}).$
Therefore,
\begin{align}
\BP(\mathcal{A}(p, H)) &\le \sum_{t = 1}^{\infty} \gamma_0 \cdot \min(2d \cdot (1-\delta')^{t-1}, 1) \nonumber \\
&\le (\log(2 d) + 1) \cdot \frac{\gamma_0}{\delta'} \nonumber \\
&\lesssim d \log(d) \cdot \frac{\beta_{H-2}-\beta_H}{2^H}. \label{eq:APH_2}
\end{align}
Again, the second inequality follows by Proposition \ref{prop:basic_ineq}.

Combining Equations \eqref{eq:APH_1} and \eqref{eq:APH_2}, the lemma is complete.
\end{proof}

Next, we provide an upper bound on $\BP(\mathcal{A}(p, H))$ for $p < P(H-2)$.

\begin{lemma} \label{lem:APH_bound_2}
    For $p < P(H-2),$ $\BP(\mathcal{A}(p, H)) \le C \cdot (\log k) \cdot \frac{\alpha_p-\alpha_{p+1}}{2^H} \cdot \min\left((\log k) \cdot \frac{c_{p+1}}{2^H}, 1\right).$
\end{lemma}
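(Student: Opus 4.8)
Let me think about this. We're bounding $\BP(\mathcal{A}(p,H))$ for $p < P(H-2)$. Recall the event $\mathcal{A}(p,H)$: there exists $s \in S_H$ such that (a) the first sampled line splitting $\bo{\mu_1}$ from $\bo{\mu_s}$ also splits $\bo{\mu_1},\dots,\bo{\mu_p}$ from $x$ and $\bo{\mu_{p+1}}$, and (b) the first sampled line splitting $\bo{\mu_{p+1}}$ from $\bo{\mu_s}$ also splits $\bo{\mu_{p+1}}$ from $x$.

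So there are two "stages" here, and I'd want to decompose into sub-events like in Lemma \ref{lem:APH_bound_1}. For stage (a): Define $\mathcal{E}_0$ as the event that a random line splits $\bo{\mu_1},\dots,\bo{\mu_p}$ from $x$ and $\bo{\mu_{p+1}}$. Its probability $\gamma_0$: a line splitting $\bo{\mu_1},\dots,\bo{\mu_p}$ from $x$ has probability $\alpha_p/(Bd)$, and to also be on the $\bo{\mu_{p+1}}$ side (i.e., not split $\bo{\mu_{p+1}}$) we subtract the probability it splits $\bo{\mu_1},\dots,\bo{\mu_{p+1}}$ from $x$, which is $\alpha_{p+1}/(Bd)$; so $\gamma_0 \le (\alpha_p - \alpha_{p+1})/(Bd)$. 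For each $s \in S_H$, let $\mathcal{E}_s$ be "a line splits $\bo{\mu_1}$ and $x$ from $\bo{\mu_s}$" with probability $\gamma_s \ge (c_s - c_1)/(Bd) \ge 2^H/(2Bd)$ as in Lemma \ref{lem:APH_bound_1}. For (a) to hold via some $s$, $\mathcal{E}_0$ must occur before $\mathcal{E}_s$, which by the same union-bound + Proposition \ref{prop:basic_ineq} argument gives a contribution $\lesssim (\log k) \cdot \gamma_0/\gamma' \cdot (Bd) \lesssim (\log k)\cdot \frac{\alpha_p - \alpha_{p+1}}{2^H}$ — this already gives one of the two factors.

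Now I need the extra factor $\min((\log k)\cdot c_{p+1}/2^H, 1)$ from stage (b). The idea: conditioned on stage (a) happening (and on which line realized $\mathcal{E}_0$), we additionally need that the first line separating $\bo{\mu_{p+1}}$ from $\bo{\mu_s}$ does not separate $\bo{\mu_{p+1}}$ from $x$ — i.e., $x$ ends up on $\bo{\mu_s}$'s side. A line separating $\bo{\mu_{p+1}}$ from $\bo{\mu_s}$ but keeping $\bo{\mu_{p+1}}$ with $x$ must be a line that separates $x$ from $\bo{\mu_s}$; the "bad" alternative is a line separating $\bo{\mu_{p+1}}$ (hence $x$, roughly, up to the $c_{p+1}$ slack) from $\bo{\mu_s}$. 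So the probability that the race goes the wrong way should be at most the probability mass of lines splitting $\{x,\bo{\mu_{p+1}}\}$-ish from $\bo{\mu_s}$ that are on the "$x$-only" side — which is roughly $c_{p+1}/(Bd)$ relative to $\gamma_s \approx 2^H/(Bd)$, giving ratio $\approx c_{p+1}/2^H$; the $\log k$ comes again from the high-probability bound on how long the Geometric race lasts (union bound over $s \in S_H$ with Proposition \ref{prop:basic_ineq}), capped at $1$. So stage (b) contributes a factor $\min((\log k) c_{p+1}/2^H, 1)$, and multiplying the two stages (they concern disjoint "phases" of the line-sampling, so one conditions on the first and the randomness resets) yields the claimed bound.

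The main obstacle I anticipate is making the multiplication of the two factors rigorous: the two conditions in $\mathcal{A}(p,H)$ reference "the first line that splits $\bo{\mu_1}$ from $\bo{\mu_s}$" and "the first line that splits $\bo{\mu_{p+1}}$ from $\bo{\mu_s}$," and these are not the same line in general, nor are they independent of the ordering of other relevant lines. I'd handle this by conditioning on the index and identity of the first line realizing stage (a)'s split and arguing that, restricted to the sub-sequence of lines that separate $\bo{\mu_{p+1}}$ from $\bo{\mu_s}$, each such line is independently (with the appropriate conditional probability $\gtrsim c_{p+1}/\gamma_s$ versus its complement) "bad" for $x$; then stage (b) is a fresh race whose win probability is bounded as above, uniformly over the stage-(a) conditioning, so the bounds genuinely multiply. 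Care is also needed because $\gamma_0$ could be $0$ (if $\alpha_p = \alpha_{p+1}$), in which case the whole probability is $0$ and the bound holds trivially; and one must double-check the $\min(\cdot,1)$ truncation is legitimate since a probability is always at most $1$ regardless.
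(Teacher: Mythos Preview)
Your setup is right: the events $\mathcal{E}_0$ (probability $(\alpha_p-\alpha_{p+1})/(Bd)$) and $\mathcal{E}_s$ (probability $\ge 2^H/(2Bd)$) are exactly what the paper uses, and the bound $(\log k)\cdot(\alpha_p-\alpha_{p+1})/2^H$ from the Proposition~\ref{prop:basic_ineq} argument is correct and gives one of the two branches of the $\min$. (Small wording slip: for stage (b) you need the first line splitting $\bo{\mu_{p+1}}$ from $\bo{\mu_s}$ to \emph{separate} $\bo{\mu_{p+1}}$ from $x$, not to keep them together; you say ``does not separate'' but then correctly conclude $x$ ends up with $\bo{\mu_s}$.)

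The genuine gap is in how you plan to multiply the two factors. Your proposal is to condition on stage~(a) and treat stage~(b) as a ``fresh race'' restricted to the sub-sequence of lines separating $\bo{\mu_{p+1}}$ from $\bo{\mu_s}$. For a \emph{fixed} $s$ this gives a conditional probability $\lesssim c_{p+1}/2^H$ (no $\log k$), but then you must union-bound over $s\in S_H$, and the union bound costs a factor of $|S_H|$, not $\log k$. Conversely, if you use Proposition~\ref{prop:basic_ineq} on stage~(a) to get the $\log k$ there, you have only bounded $\BP(\exists s:\text{stage (a) for }s)$, and conditioning on \emph{that} event does not fix $s$, so there is no clean ``fresh race'' for stage~(b). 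The two stages are intertwined: the first line splitting $\bo{\mu_{p+1}}$ from $\bo{\mu_s}$ can occur before, at, or after the stage-(a) line, and the conditioning biases the distribution of earlier lines.

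The paper resolves this differently. It introduces a second pair of single-line events that parallel $\mathcal{E}_0,\mathcal{E}_s$: let $\mathcal{H}_0$ be ``a line splits $\bo{\mu_{p+1}}$ from $x$'' (probability $\eta_0=c_{p+1}/(Bd)$) and $\mathcal{H}_s$ be ``a line splits $\bo{\mu_s}$ from $x$ and $\bo{\mu_{p+1}}$'' (probability $\eta_s\ge 2^H/(2Bd)$). Crucially, $\mathcal{H}_0$ does not depend on $s$. If $\mathcal{A}(p,H)$ holds via some $s$, then with $t$ the first time $\mathcal{E}_0$ occurs and $u$ the first time $\mathcal{H}_0$ occurs, neither $\mathcal{E}_s$ occurs before $t$ nor $\mathcal{H}_s$ before $u$. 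Since $\mathcal{E}_0$ and $\mathcal{H}_0$ are disjoint events, $t\neq u$; splitting into $t<u$ (write $u=t+t'$) and $u<t$, the constraints ``$\exists s:\mathcal{E}_s$ absent on $[1,t-1]$'' and ``$\exists s':\mathcal{H}_{s'}$ absent on $[t+1,t+t'-1]$'' concern \emph{disjoint} time intervals and are therefore independent. Summing over $t,t'$ gives a product of two Proposition~\ref{prop:basic_ineq} sums, each contributing a $\log k$, yielding $(\log k)^2\cdot\frac{\alpha_p-\alpha_{p+1}}{2^H}\cdot\frac{c_{p+1}}{2^H}$. Taking the minimum with the stage-(a)-only bound completes the lemma. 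The device you were missing is the $s$-independent event $\mathcal{H}_0$ and the disjoint-time-interval factorization.
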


\begin{proof}
    We redefine $\mathcal{E}_0$ to be the event, and $\gamma_0$ to be the associated probability, that a randomly sampled line splits $\bo{\mu_1}, \dots, \bo{\mu_{p}}$ from $x$ and $\bo{\mu_{p+1}}$. For $p < P(H-2),$ we have $\gamma_0 = (\alpha_p-\alpha_{p+1})/(B \cdot d).$ Next, for each $s \in S_H$, we keep the same definition of $\mathcal{E}_s$ and $\gamma_s$ corresponding to a randomly sampled line splitting $\bo{\mu_1}$ and $x$ from $\bo{\mu_s}$. Recall that $\gamma_s \ge 2^H/(2 \cdot B \cdot d)$, and $\gamma' = \min_{s \in S_H} \gamma_s \ge 2^H/(2 \cdot B \cdot d).$
    
    Next, define $\mathcal{H}_0$ to be the event, and $\eta_0$ to be the associated probability, that a randomly sampled line splits $\bo{\mu_{p+1}}$ from $x$. Clearly, $\eta_0 = c_{p+1}/(B \cdot d)$. Next, for each $s \in S_H,$ define $\mathcal{H}_s$ to be the event, and $\eta_s$ to be the associated probability, that a randomly sampled line splits $\bo{\mu_s}$ from $x$ and $\bo{\mu_{p+1}}$. This probability is at least the probability of a line splitting $\bo{\mu_s}$ from $x$ minus the probability of a line splitting $\bo{\mu_{p+1}}$ from $x$, which is $(c_s-c_{p+1})/(B \cdot d) \ge 2^H/(2 \cdot B \cdot d)$, since $c_s \ge 2^H$ as $s \in S_H$ and $c_{p+1} \le 2^{H-1}$ as $p+1 \le P(H-2)$. So, if we define $\eta' = \min_{s \in S_H} \eta_s$, then $\eta' \ge 2^H/(2 \cdot B \cdot d).$
    
    Now, for $\mathcal{A}(p, H)$ to occur, if $t$ is the first time that $\mathcal{E}_0$ occurs and $u$ is the first time that $\mathcal{H}_0$ occurs, then there must exist some $s$ such that $\mathcal{E}_s$ does not occur before $t$ and $\mathcal{H}_s$ does not occur before $u$. Note that $\mathcal{E}_0$ and $\mathcal{H}_0$ are disjoint events, which means $t \neq u$. If $t < u$ and we write $u = t+t'$, then we must have that $\exists s \in S_H$ such that for all $1 \le i \le t-1$, $\mathcal{E}_s$ doesn't occur. In addition, $\exists s \in S_H$ such that for all $t+1 \le i \le t+t'-1$, $\mathcal{H}_s$ doesn't occur. Therefore, we can bound the sum over $t < u$ of $\mathcal{A}(p, H)$ occurring where $\mathcal{E}_0$ first occurs at time $t$ and $\mathcal{H}_0$ first occurs at time $u$ as
\begin{align}
    &\le \sum_{t = 1}^{\infty} \sum_{t' = 1}^{\infty} \gamma_0 \eta_0 \cdot \min(k \cdot (1-\gamma')^{t-1}, 1) \cdot \min(k \cdot (1-\eta')^{t'-1}, 1) \nonumber \\
    &= \left(\sum_{t = 1}^{\infty} \gamma_0 \cdot \min(k \cdot (1-\gamma')^{t-1}, 1)\right) \cdot \left(\sum_{t' = 1}^{\infty} \eta_0 \cdot \min(k \cdot (1-\eta')^{t'-1}, 1)\right) \label{eq:APH_product} \\
    &\le (\log k + 1) \cdot \frac{\gamma_0}{\gamma'} \cdot (\log k + 1) \cdot \frac{\eta_0}{\eta'} \nonumber \\
    &\lesssim (\log k)^2 \cdot \frac{\gamma_0}{\gamma'} \cdot \frac{\eta_0}{\eta'}. \label{eq:APH_3}
\end{align}
    Likewise, if $u < t$, we write $t = u+u'$ where $u, u'$ range from $1$ to $\infty$, and we obtain the same product as Equation \eqref{eq:APH_product} and upper bound as Equation \eqref{eq:APH_3}. Therefore, we have that
\begin{equation}
    \BP(\mathcal{A}(p, H)) \lesssim (\log k)^2 \cdot \frac{\gamma_0}{\gamma'} \cdot \frac{\eta_0}{\eta'} \lesssim (\log k)^2 \cdot \frac{\alpha_p-\alpha_{p+1}}{2^H} \cdot \frac{c_{p+1}}{2^H}. \label{eq:APH_4}
\end{equation}
    
    Finally, we also can bound $\mathcal{A}(p, H)$ by completely ignoring $\mathcal{H}_0$ and $\mathcal{H}_s$, and just considering $\mathcal{E}_0$ occuring before some $\mathcal{E}_s$. By similar calculations to Lemma \ref{lem:APH_bound_1}, this results in the bound
\begin{equation}
    \mathcal{P}(\mathcal{A}(p, H)) \le \sum_{t = 1}^{\infty} \gamma_0 \cdot \min(k \cdot (1-\gamma')^{t-1}, 1) \lesssim (\log k) \cdot \frac{\gamma_0}{\gamma'} \lesssim (\log k) \cdot \frac{\alpha_p-\alpha_{p+1}}{2^H}. \label{eq:APH_5}
\end{equation}
    By combining Equations \eqref{eq:APH_4} and \eqref{eq:APH_5}, we obtain
\begin{equation*}
\BP(\mathcal{A}(p, H)) \lesssim (\log k) \cdot \frac{\alpha_p-\alpha_{p+1}}{2^H} \cdot \min\left((\log k) \cdot \frac{c_{p+1}}{2^H}, 1\right). \qedhere
\end{equation*}
\end{proof}

Next, we provide an upper bound on $\BP(\mathcal{A}(p, H))$ for $p < P(H-2)$ based on the dimension $d$.

\begin{lemma} \label{lem:APH_bound_3}
    For $p < P(H-2)$ and $2^H \ge 2d,$ $\BP(\mathcal{A}(p, H)) \le C \cdot (d \log d) \cdot \frac{\alpha_p-\alpha_{p+1}}{2^H} \cdot \min\left((d \log d) \cdot \frac{c_{p+1}}{2^H}, 1\right).$
\end{lemma}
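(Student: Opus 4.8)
The plan is to re-run the proof of Lemma \ref{lem:APH_bound_2} essentially verbatim, but to replace every union bound that was taken over the (up to $k$) ``racing'' events $\mathcal{E}_s$ or $\mathcal{H}_s$, $s \in S_H$, by a union bound over a family of only $O(d)$ axis-aligned ``slab'' events, exactly as was done for the $\mathcal{E}_s$'s in the second half of Lemma \ref{lem:APH_bound_1}. I would keep the events $\mathcal{E}_0$ (with probability $\gamma_0 = (\alpha_p-\alpha_{p+1})/(Bd)$) and $\mathcal{H}_0$ (with probability $\eta_0 = c_{p+1}/(Bd)$) exactly as in Lemma \ref{lem:APH_bound_2}. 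For the $\mathcal{E}$-race I reuse the slabs $\mathcal{D}_{r,+},\mathcal{D}_{r,-}$ of Lemma \ref{lem:APH_bound_1}, i.e.\ the lines $\{x_r = z\}$ with $z \in [2^H/(2d), 2^H/d]$ or $z\in[-2^H/d,-2^H/(2d)]$: there are $2d$ of them, each of probability $\delta' = 2^H/(2d\cdot Bd)$; since $2^H \ge 2d$ none of them separates $\bo{\mu_1}$ from $x$, and every $\bo{\mu_s}$ with $s\in S_H$ has $\|\bo{\mu_s}\|_\infty \ge 2^H/d$ so is separated from $x$ by one of them, which gives (exactly as in Lemma \ref{lem:APH_bound_1}) that $\mathcal{A}(p,H)$ forces $\mathcal{E}_0$ to occur before one of these $2d$ slabs.

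The one genuinely new ingredient is the analogous construction for the $\mathcal{H}$-race, where $x$ and $\bo{\mu_{p+1}}$ play the roles that $x$ and $\bo{\mu_1}$ played before. I cannot reuse fixed slabs at radius $\approx 2^H/d$ here, because $\bo{\mu_{p+1}}$ may have $\ell_1$-norm as large as $c_{p+1}\le 2^{H-1}$ and hence may itself lie past those slabs; so I instead place the slabs relative to $\bo{\mu_{p+1}}$ -- permissible since these events are used only in the analysis. For each coordinate $r$ set $m_r = \mu_{p+1,r}$, let $\tilde{\mathcal{D}}_{r,+}$ be the event that the sampled line is $\{x_r = z\}$ with $z \in \left(\max(m_r,0),\, \max(m_r,0)+\tfrac{2^{H-1}}{2d}\right]$, and let $\tilde{\mathcal{D}}_{r,-}$ be the symmetric event just below $\min(m_r,0)$; there are $2d$ of these, each of probability $\delta'' = \tfrac{2^{H-1}}{2d\cdot Bd} = \Theta\!\left(\tfrac{2^H}{Bd^2}\right)$. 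The key claim is that for every $s\in S_H$ one of these $2d$ slabs separates $\bo{\mu_s}$ from $\{x,\bo{\mu_{p+1}}\}$, i.e.\ realizes $\mathcal{H}_s$: indeed $\sum_r (|\mu_{s,r}|-|\mu_{p+1,r}|) = c_s - c_{p+1} \ge 2^H - 2^{H-1} = 2^{H-1}$ (using $s\in S_H$ and $p+1\le P(H-2)$), so some coordinate $r$ satisfies $|\mu_{s,r}| - |\mu_{p+1,r}| \ge 2^{H-1}/d$, and a short case check on the signs of $\mu_{s,r}$ and $\mu_{p+1,r}$ shows that in this coordinate $\bo{\mu_s}$ lies strictly past the slab $\tilde{\mathcal{D}}_{r,+}$ or $\tilde{\mathcal{D}}_{r,-}$ while both $x$ and $\bo{\mu_{p+1}}$ lie strictly on its other side. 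Hence $\mathcal{A}(p,H)$ forces $\mathcal{H}_0$ to occur before one of these $2d$ slabs.

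Given these two families of $O(d)$ slab events, the rest of the argument is word-for-word the proof of Lemma \ref{lem:APH_bound_2}: letting $t$ be the first time $\mathcal{E}_0$ occurs and $u$ the first time $\mathcal{H}_0$ occurs (disjoint events, so $t\ne u$), splitting into the cases $t<u$ and $u<t$, and using independence across time steps together with the union bound over the $\le 2d$ slabs in each family, one obtains the analogue of Equations \eqref{eq:APH_product}--\eqref{eq:APH_4}:
\[
\BP(\mathcal{A}(p,H)) \;\lesssim\; (\log(2d))^2 \cdot \frac{\gamma_0}{\delta'}\cdot\frac{\eta_0}{\delta''} \;\lesssim\; (d\log d)^2\cdot \frac{\alpha_p-\alpha_{p+1}}{2^H}\cdot\frac{c_{p+1}}{2^H},
\]
each factor $\log(2d)$ coming from Proposition \ref{prop:basic_ineq} with $N = 2d$. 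Separately, ignoring the $\mathcal{H}$-race altogether and running only the $\mathcal{E}_0$-versus-$\mathcal{E}$-slabs computation of Lemma \ref{lem:APH_bound_1} gives $\BP(\mathcal{A}(p,H)) \lesssim (d\log d)\cdot\frac{\alpha_p-\alpha_{p+1}}{2^H}$; taking the minimum of the two bounds yields the claimed inequality. The only real obstacle is the $\mathcal{H}$-race slab construction of the previous paragraph: getting $O(d)$ rather than $O(k)$ events there relies on the pigeonhole observation $c_s - c_{p+1}\ge 2^{H-1}$ to guarantee enough room for a width-$\Theta(2^H/d)$ slab strictly between $\bo{\mu_{p+1}}$ and $\bo{\mu_s}$ in some coordinate; once that is in place, everything else is a mechanical merge of the proofs of Lemmas \ref{lem:APH_bound_1} and \ref{lem:APH_bound_2}.
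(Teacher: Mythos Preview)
Your proof is correct, and the overall architecture (merge the two-race template of Lemma~\ref{lem:APH_bound_2} with the $2d$-slab trick of Lemma~\ref{lem:APH_bound_1}) is exactly the paper's. The one place you genuinely diverge is in how you handle the $\mathcal{H}$-race.

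The paper does \emph{not} introduce a second family of slabs. It reuses the very same origin-centered slabs $\mathcal{D}_{r,\pm}$ for both races, observing that when $c_{p+1}\le 2^H/(2d)$ the point $\bo{\mu_{p+1}}$ lies strictly inside every slab as well, so each $\mathcal{D}_{r,\pm}$ already separates $\bo{\mu_s}$ from $\{x,\bo{\mu_{p+1}}\}$ and hence realizes $\mathcal{H}_s$. The complementary regime $c_{p+1}>2^H/(2d)$ is disposed of by noting that there $(d\log d)\cdot c_{p+1}/2^H\gtrsim 1$, so the single-race bound \eqref{eq:APH_7} already implies the stated minimum; no two-race estimate is needed. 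Thus the paper trades your shifted-slab construction and pigeonhole/sign case-check for a short case split on the size of $c_{p+1}$.

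Your route is a little more geometric but uniform in $c_{p+1}$; the paper's is more economical (one slab family, no new verification) at the cost of that case split. Both yield the same bound with the same dependence on Proposition~\ref{prop:basic_ineq}.
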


\begin{proof}
As in Lemma \ref{lem:APH_bound_2},
we define $\mathcal{E}_0$ as the event, and $\gamma_0 = \frac{\alpha_p-\alpha_{p+1}}{B \cdot d}$ as the associated probability, that a randomly sampled line splits $\bo{\mu_1}, \dots, \bo{\mu_p}$ from $x$ and $\bo{\mu_{p+1}}$. 
Also, as in Lemma \ref{lem:APH_bound_2},
we define $\mathcal{H}_0$ as the event, and $\eta_0 = c_{p+1}/(B \cdot d)$ as the associated probability, that a randomly sampled line splits $\bo{\mu_{p+1}}$ from $x.$
Finally, as in Lemma \ref{lem:APH_bound_1}, let $\mathcal{D}_{r, +}$ be the event that a line is of the form $\{x_r = z\}$ for $z \in [2^H/(2d), 2^H/d]$ and $\mathcal{D}_{r, -}$ be the event that a line is of the form $\{x_r = z\}$ for $z \in [-2^H/d, -2^H/(2d)]$.
We let $\delta' = 2^H/(2d \cdot Bd)$ be each of these event's probabilities.

Consider $t$ as the first time that $\mathcal{E}_0$ occurs, and $u$ as the first time that $\mathcal{H}_0$ occurs. (Recall that $t \neq u$ since these two events are disjoint.)
In order for $\mathcal{A}(p, H)$ to occur, we must have that for all $i < t$, one of the $2d$ intervals $[2^H/2d, 2^H/d]$ or $[-2^H/d, -2^H/2d]$ is never covered, since otherwise, we would have had an earlier splitting of $\bo{\mu_1}$ and $x$ from $\bo{\mu_s}$.
In addition, if $2^H/(2d) \ge c_{p+1}$, then for this must also be true for all $i < u,$ or else we would have also had an earlier splitting of $\bo{\mu_{p+1}}$ and $x$ from $\bo{\mu_s}$.
First, we suppose that $2^H/(2d) \ge c_{p+1}$.
If we just consider the case where $t < u$, this implies there exists an interval that isn't covered by any of the lines $1 \le i \le t-1$ and there also is an interval that isn't covered by any of the lines $t+1 \le i \le u-1$. By writing $u = t + t'$, we can bound the sum over $t < u$ of $\mathcal{A}(p, H)$ occurring where $\mathcal{E}_0$ first occurs at time $t$ and $\mathcal{H}_0$ first occurs at time $u$ as
%Therefore, we can upper bound the probability of $\mathcal{A}(p, H)$ as
\begin{align*}
    &\le \sum_{t = 1}^{\infty} \sum_{t' = 1}^{\infty} \gamma_0 \eta_0 \cdot \min(2d \cdot (1-\delta')^{t-1}, 1) \cdot \min(2d \cdot (1-\delta')^{t-1}, 1) \\
    &= \left(\sum_{t = 1}^{\infty} \gamma_0 \cdot \min(2d \cdot (1-\delta')^{t-1}, 1)\right) \cdot \left(\sum_{t' = 1}^{\infty} \eta_0  \cdot \min(2d \cdot (1-\delta')^{t-1}, 1)\right) \\
    &\le (\log (2d)+1)^2 \cdot \frac{\gamma_0}{\delta'} \cdot \frac{\eta_0}{\delta'} \\
    &\lesssim (d \log d)^2 \cdot \frac{\alpha_p-\alpha_{p+1}}{2^H} \cdot \frac{c_{p+1}}{2^H}.
\end{align*}
In addition, by summing over the cases where $t > u$ (for instance writing $t = u+u'$), we get an identical sum, so
\begin{equation} \label{eq:APH_6}
    \BP(\mathcal{A}(p, H)) \lesssim (d \log d)^2 \cdot \frac{\alpha_p-\alpha_{p+1}}{2^H} \cdot \frac{c_{p+1}}{2^H}.
\end{equation}

In addition, by ignoring $\mathcal{H}_0$ and just computing the probability that a line splits $\bo{\mu_1}, \dots, \bo{\mu_p}$ from $x$ and $\bo{\mu_{p+1}}$ before one of the $2d$ intervals, the same argument as in Lemma \ref{lem:APH_bound_1} allows us to bound 
\begin{equation} \label{eq:APH_7}
    \BP(\mathcal{A}(p, H)) \lesssim (d \log d) \cdot \frac{\alpha_p-\alpha_{p+1}}{2^H}.
\end{equation} 
This does not require $2^H/(2d) \ge c_{p+1}$, just that $2^H \ge 2d$. In addition, note that if $2^H/(2d) < c_{p+1}$, then $(d \log d) \cdot \frac{\alpha_p-\alpha_{p+1}}{2^H} \le (d \log d)^2 \cdot \frac{\alpha_p-\alpha_{p+1}}{2^H} \cdot \frac{c_{p+1}}{2^H}$ anyway.
So, by combining Equations \eqref{eq:APH_6} and \eqref{eq:APH_7}, we obtain 
\[\BP(\mathcal{A}(p, H)) \lesssim (d \log d) \cdot \frac{\alpha_p-\alpha_{p+1}}{2^H} \cdot \min\left(d \log d \cdot \frac{c_{p+1}}{2^H}, 1\right). \qedhere\]
\end{proof}

We now return to the proof of Theorem \ref{thm:kmedian_main}. Define $f = \min(d \log d, \log k)$.
Note that for any $1 \le J < H,$ $\sum_{p+1 \in S_J} (\alpha_p-\alpha_{p+1}) = \beta_{J-1}-\beta_J,$ and if $p+1 \in S_J,$ then $c_{p+1} \le 2 \cdot 2^J.$ This is also true for $J = 0$, since $\sum_{p \ge 1: p+1 \in S_J} (\alpha_p-\alpha_{p+1}) = \beta_{-1}-\beta_0$ by our definition of $\beta_{-1} := \alpha_1$, and $c_{p+1} \le 2 = 2 \cdot 2^J$. Therefore, by adding either Lemma \ref{lem:APH_bound_2} or Lemma \ref{lem:APH_bound_3} over all $p < P(H-2)$ and splitting based on which set $S_J$ contains $p+1$, we get that for any $H \ge 2$ (if $f = \log k$) or for any $2^H \ge 2d$ (if $f = d \log d$),
\begin{align*}
\sum_{p < P(H-2)} \BP(\mathcal{A}(p, H)) &\le C \cdot f \cdot \sum_{J = 0}^{H-2} \sum_{p+1 \in S_J} \frac{\alpha_p-\alpha_{p+1}}{2^H} \cdot \min\left(\frac{f \cdot c_{p+1}}{2^H}, 1\right) \\
&\le 2C \cdot f \cdot \sum_{J = 0}^{H-2} \frac{\beta_{J-1}-\beta_J}{2^H} \cdot \min\left(f \cdot \frac{2^J}{2^H}, 1\right)
\end{align*}

Adding the term for $\BP(\mathcal{A}(P(H-2), H))$ based on Lemma \ref{lem:APH_bound_1}, we obtain
\[\sum_{p \le P(H-2)} \BP(\mathcal{A}(p, H)) \le 2C \cdot f \cdot \sum_{J = 0}^{H} \frac{\beta_{J-1}-\beta_J}{2^H} \cdot \min\left(f \cdot \frac{2^J}{2^H}, 1\right).\]
Therefore, 
\begin{align}
    \sum_{\substack{H \ge 2 \text{ if } f = \log k \\ 2^H \ge 2d \text{ if } f = d \log d}} 2^H \cdot \sum_{p \le P(H)} \BP(\mathcal{A}(p, H)) &\le 2 C \cdot f \cdot \sum_{H \ge 2} 2^H \cdot \sum_{J = 0}^{H} \frac{\beta_{J-1}-\beta_J}{2^H} \cdot \min\left(f \cdot \frac{2^J}{2^H}, 1\right) \nonumber \\
    &= 2 C \cdot f \cdot \sum_{H \ge 2} \sum_{J = 0}^{H} (\beta_{J-1}-\beta_J) \cdot \min\left(f \cdot \frac{2^J}{2^H}, 1\right) \nonumber \\
    &\le 2C \cdot f \cdot \sum_{J \ge 0} (\beta_{J-1}-\beta_J) \cdot \sum_{H \ge J} \min\left(f \cdot \frac{2^J}{2^H}, 1\right) \nonumber \\
    &\le 2C \cdot f \cdot \sum_{J \ge 0} (\beta_{J-1}-\beta_J) \cdot (\log f + 2) \nonumber \\ 
    &\lesssim f \cdot \log f \cdot \beta_{-1} = f \cdot \log f. \label{eq:main_ineq}
\end{align}

To finish, if $f = \log k$, then by Equations \eqref{eq:main_bound} (where we set $V = 4$ so that $H \ge 2$) and \eqref{eq:main_ineq},
$$\BE[\|x-\bo{\mu_s}\|_1] \le O\left(4 + \sum_{H \ge 2} 2^H \cdot \sum_{p \le P(H)} \BP(\mathcal{A}(p, H))\right) \le O(\log k \cdot \log \log k),$$
assuming that $C$ is a constant.
If $f = d \log d$, then by Equations \eqref{eq:main_bound} (where we set $V = 2d$ so that $2^H \ge 2d$) and \eqref{eq:main_ineq},
$$\BE[\|x-\bo{\mu_s}\|_1] \le O\left(2d + \sum_{H: 2^H \ge 2d} 2^H \cdot \sum_{p \le P(H)} \BP(\mathcal{A}(p, H))\right) \le O(d \cdot \log^2 d).$$

This concludes the proof of Theorem \ref{thm:kmedian_main}.

\subsection{Faster algorithm} \label{subsec:kmedians_faster}

\begin{figure}
\centering
\begin{minipage}{0.47\textwidth}
\begin{algorithm}[H]
    \caption{Main procedure for explainable $k$-medians}
    \begin{algorithmic}[1] % The number tells where the line numbering should start
        \Procedure{K-medians}{$u$}
            \State Use standard $k$-medians algorithm to find centers $\bo{\mu_1}, \dots, \bo{\mu_k}$
            \State \textbf{Create} tree $T$ with single node $u_0 \leftarrow \emptyset$ with $\mathcal{M}(u_0) = \{\bo{\mu_1}, \dots, \bo{\mu_k}\}$
            \While{$\exists$ leaf $u \in T$ with $|\mathcal{M}(u)| \ge 2$}
                \State \Call{MedianSplit}{u}
            \EndWhile
            \State \textbf{Return} $T$
        \EndProcedure
    \end{algorithmic}
\end{algorithm}
\end{minipage}
\hfill
\begin{minipage}{0.51\textwidth}
\begin{algorithm}[H]
    \caption{Splitting procedure of a node $u$}
    \begin{algorithmic}[1] % The number tells where the line numbering should start
        \Procedure{MedianSplit}{$u$}
            \For{$r = 1$ to $d$}
                \State $a_r = \mathop{\min}\limits_{\bo{\mu_i} \in \mathcal{M}(u)} \mu_{i, r}$
                \State $b_r = \mathop{\max}\limits_{\bo{\mu_i} \in \mathcal{M}(u)} \mu_{i, r}$
                \State $R_r = b_r - a_r$
            \EndFor
            \State \textbf{Sample} $r \in [d]$ with probability $\frac{R_r}{R_1+\cdots+R_d}$
            \State \textbf{Sample} $z \sim \text{Unif}[a_r, b_r]$
            \State \textbf{Add} left child $\mathcal{L}(u) \leftarrow \{x_r < z\}$ to $u$
            \State $\mathcal{M}(\mathcal{L}(u)) = \mathcal{M}(u) \cap \{\bo{\mu_i}: \mu_{i, r} < z\}$
            \State \textbf{Add} right child $\mathcal{R}(u) \leftarrow \{x_r \ge z\}$ to $u$
            \State $\mathcal{M}(\mathcal{R}(u)) = \mathcal{M}(u) \cap \{\bo{\mu_i}: \mu_{i, r} \ge z\}$
        \EndProcedure
    \end{algorithmic}
\end{algorithm}
\end{minipage}
    \caption{The core procedure for fast Explainable $k$-medians clustering is on the left, with the main subroutine, MedianSplit, on the right. Each node $u$ is set to a single decision tree instruction (with the root $u_0$ having no such instruction), and contains a set of cluster centers $\mathcal{M}(u)$.}
    \label{fig:kmedians_main_alg}
\end{figure}

We note that $B$ could be much larger than $k$ or even $n$, and as a result the above algorithm could take arbitrarily long. In this section, we show how to modify the algorithm, which will allow us to create the decision tree in $O(kd \log^2 k)$ time given the set of clusters $\bo{\mu_1}, \dots, \bo{\mu_k}$, without sacrificing the cost of clustering. Moreover, we show that verification of the explainable clustering being accurate can be done in $O(n(d+H_T))$ time, where $H_T \le k$ is the height of the decision tree $T$.

To create the decision tree, we first completely ignore all points $x \in \mathcal{X},$ and just focus on the cluster centers $\bo{\mu_1}, \dots, \bo{\mu_k}$. At the beginning, we have a node $u_0$ and a cell $c = B'(u_0)$ which is the smallest axis-parallel box that contains all cluster centers. In other words, $c = [a_1, b_1] \times \cdots \times [a_d, b_d]$, where for each dimension $r$, $a_r = \min(\mu_{1, r}, \dots, \mu_{k, r})$ and $b_r = \max(\mu_{1, r}, \dots, \mu_{k, r}).$ Now, we choose a dimension $r$ proportional to $R_r := b_r-a_r,$ and then given $r$, choose a point $z$ uniformly at random in $[a_r, b_r]$. This line $\{x_r = z\}$ will be our first splitting line. Since we chose $a_r < z < b_r$, the cluster centers will not be all on the same side of $\{x_r = z\}$, so we have successfully split the clusters into two regions. Now, each of these two sets of cluster centers will correspond to a new cell, where we again create the smallest axis parallel box that contains all cluster centers. For each cell that has $2$ or more points, we keep repeating this procedure until we have $1$ cluster center per cell.

So, in general, for any node $u$ in the tree $T$ with $|u| \ge 2$, i.e., with more than $1$ cluster center, we choose the a line $\{x_r = z\}$ as done above, and make two new nodes, $\mathcal{L}(u)$ (for \emph{left} child) and $\mathcal{R}(u)$  (for \emph{right} child). $\mathcal{L}(u)$ will represent the part of $u$ with $x_r < z$, so $\mathcal{M}(\mathcal{L}(u)) = \mathcal{M}(u) \cap \{\bo{\mu_i}: \mu_{i, r} < z\},$ while $\mathcal{R}(u)$ will represent the part of $u$ with $x_r \ge z$, so $\mathcal{M}(\mathcal{R}(u)) = \mathcal{M}(u) \cap \{\bo{\mu_i}: \mu_{i, r} \ge z\}.$ The full procedure of our explainable $k$-medians clustering is presented in Figure \ref{fig:kmedians_main_alg}.

\medskip

To see why this procedure still provides a $O(\min(\log k \log \log k, d \log^2 d))$-approximation, we just show that for each point $x \in \mathcal{X}$, the distribution of its assigned cluster $s(x)$ is unchanged. This implies that $\BE[\|x-s(x)\|_1]$ is also unchanged, so the same approximation guarantees hold. We note that the overall clustering distribution may not be the same (since there may be differing correlations between where two points $x, x'$ get assigned), but we only need linearity of expectation to show that our procedure is accurate on average.

To see why this is true, for any point $x \in \mathcal{X}$, let $u_j(x)$ represent the node in $T$ of depth $j$ that contains $x$, where $u_0(x) = u_0$ is the root node and if the leaf node containing $x$ has depth $h$, then $u_j(x)$ is defined to be $u_h(x)$ for $j \ge h$. It suffices to show that for any $x \in \mathcal{X}$, the distribution of $u_j(x)$ is the same regardless of whether we create the decision tree from the simple algorithm described in subsection \ref{subsec:kmedians_simplified} or from the faster algorithm described in this subsection. We prove this by induction, with trivial base case $j = 0$ (since $u_0(x)$ is the root node and is deterministic).

Now, let us assume the claim is true for some $j$, and condition on the node $u_j(x)$. If there is only one cluster center in $u_j(x)$, then $u_{j+1}(x) = u_j(x)$ by definition, so it does not matter which algorithm we choose. Otherwise, let $B'_j(x)$ be the smallest box containing all cluster centers in $u_j(x)$. Note that one we have done the splits necessary to create the node $u_j(x)$, the \emph{simplified algorithm} continues randomly picking lines $\{x_r = z\}$, where $r$ is uniformly selected from $[d]$ and $z$ is uniformly selected from $[-B, B]$. The node $u_j(x)$ remains intact until we have found a line splits at least some clusters in $u_j(x)$ from each other. But the random line splitting at least some clusters in $u_j(x)$ from each other is equivalent to choosing $\{x_r = z\}$ where $z \in [a_r, b_r]$ for $a_r = \min_{\bo{\mu_i} \in \mathcal{M}(u_j(x))} \mu_{i, r}$ and $b_r = \max_{\bo{\mu_i} \in \mathcal{M}(u_j(x))} \mu_{i, r}.$ Clearly, picking a random line conditional on this is equivalent to picking a dimension $r$ proportional to $b_r-a_r$, and then picking a random $z \in [a_r, b_r]$. Therefore, the distribution of $u_{j+1}(x)$ \emph{conditioned on} $u_j(x)$ is the same regardless of whether we used the simplified algorithm or the faster algorithm. This completes the induction.

%To see why this is true, note that in the simplified algorithm, we start by randomly picking a dimension $r$ and a point $z \in [-B, B]$, until we reach a line $\{x_r = z\}$ that splits at least some clusters from each other. But conditional on the random line splitting at least some clusters from each other, this is the same as picking a dimension $r$ proportional to $b_r-a_r$, and then picking a random line in this interval $[a_r, b_r]$. Now, for any smaller cell with at least $2$ cluster centers, the distribution of a line conditioned on the line splitting the cell is precisely the distribution we choose to split the cell. Therefore, the distribution of the first splitting line of the decision tree, followed by the second splitting line in the cell of depth $1$ containing $x$, followed by the third splitting line in the cell of depth $2$ containing $x$, and so on, is unchanged. So, the distribution of $s(x)$ for any fixed $x \in \mathcal{X}$ is also unchanged.

\medskip

Finally, we describe how to implement this efficiently. For each node $u$, let $B'(u) = [a_1(u), b_1(u)] \times \cdots \times [a_d(u), b_d(u)]$. We store these values, and for each node $u$ and each dimension $r \in [d]$, we also store a balanced binary search tree (BBST) of the key-value pairs $(i, \mu_{i, r})$ for each $\bo{\mu_i} \in B'(u)$, where the BBST is sorted based on the values $\mu_{i, r}$. Each node in the BBST also keeps track of the number of total items to its left and to its right. We also keep a size-$k$ array of pointers the map each $i$ to its corresponding node and its location in each of the $d$ BBSTs.

Now, to split the node $u$, it takes $O(d)$ time to pick a random dimension $r \propto (b_r(u)-a_r(u))$ and a uniformly random $z \in [a_r(u), b_r(u)]$. Next, by binary searching on the dimension $r$ BBST corresponding to node $u$, in $O(\log k)$ time we can determine how many cluster centers in $u$ have $\mu_{i, r} < z$ and how many have $\mu_{i, r} > z.$ For whichever set is smaller, we remove all of those points and create a new BBST with those points, in each dimension. This allows us to have a BBST for each node and each dimension, since we have replaced our node $u$ with two new ones. We note that removal of each point in any fixed dimension takes $O(\log k)$ time, since for a dimension $r'$ and a cluster center $\bo{\mu_i}$, we can access $\mu_{i, r'}$ in $O(1)$ time and then binary search in the dimension-$r'$ BBST. (Note that we can break ties in the $\mu_{i, r'}$ by the value of $i$, so removing $(i, \mu_{i, r'})$ indeed takes $O(\log k)$ time.) Therefore, if the new nodes are $\mathcal{L}(u)$ and $\mathcal{R}(u)$, this takes time $O(d \cdot \log k \cdot \min(|\mathcal{L}(u)|, |\mathcal{R}(u)|))$. Finally, we need to compute the dimensions of the new boxes $B'(\mathcal{L}(u))$ and $B'(\mathcal{R}(u))$. This takes time $O(d \cdot \log k),$ since we just find the smallest and largest elements of each of the $2d$ BBSTs.

Overall, the total runtime is
$$\sum_{u \in T} O(d \log k) \cdot \min(|\mathcal{L}(u)|, |\mathcal{R}(u)|).$$
To bound this, we use the fact that $1 + \log \alpha \le (1+\alpha) \log (1+\alpha)$ for all $\alpha \ge 1,$ which implies that $x + x \log x + y \log y \le (x+y) \log (x+y)$ for all $x \le y$ (by setting $y/x = \alpha$). Therefore, $\min(|\mathcal{L}(u)|, |\mathcal{R}(u)|) + |\mathcal{L}(u)| \log |\mathcal{L}(u)| + |\mathcal{R}(u)| \log |\mathcal{R}(u)| \le |u| \log |u|,$ since $|u| = |\mathcal{L}(u)|+|\mathcal{R}(u)|.$ Therefore, by induction, we obtain the bound
$$\sum_{u \in T} \min(|\mathcal{L}(u)|, |\mathcal{R}(u)|) \le O(k \log k),$$
so the overall runtime to create the decision tree for explainable clustering is $O(d \cdot k \cdot \log^2 k)$.

While our algorithm is randomized and only works in expectation, note that, once given the cluster centers $\bo{\mu_1}, \dots, \bo{\mu_k},$ the algorithm runs in time \emph{sublinear} in the full dataset $\mathcal{X}$. Moreover, the algorithm only depends on the cluster centers, which means that we can run this explainable algorithm on an $O(1)$-approximate $k$-medians coreset of $\mathcal{X}$ and obtain the same $O(\log k \log \log k)$-approximation guarantee in expectation.

If we wish to compute the actual clustering cost given the decision, we need to compute $\|x - s(x)\|_1$ for each $x \in \mathcal{X}$, where $s(x)$ is the assigned cluster. However, since we have computed the entire decision tree, for each $x \in \mathcal{X},$ we just follow it down, which takes $O(H_T)$ time, where $H_T$ is the height of the tree, since we just have to check $1$ coordinate at each step. Finally, computing $\|x-s(x)\|_1$ takes $O(d)$ time. Therefore, doing this for all $\mathcal{X}$ takes $O(n(d+H_T)) = O(n(d+k))$ additional time.

\section{Algorithm for Explainable $k$-means Clustering} \label{sec:kmeans}

In this section, we provide an $O(k \log k)$-approximation algorithm for explainable $k$-means clustering.

For each node $u \in T$, we recall the definitions of $B(u), B'(u), a_r(u), b_r(u), R_r(u),$ and $\mathcal{M}(u)$ from Section \ref{sec:notation}.
%Let $B(u)$ be the $d$-dimensional box obtained after (possibly $0$) levels of splitting.
%Note that $B(u)$ may be unbounded in some or all directions.
%Recall that $\mathcal{M}$ is the set of cluster centers $\bo{\mu_1}, \dots, \bo{\mu_k}$.
%We let $\mathcal{M}(u)$ be the set of cluster centers in $B(u)$, i.e., $\mathcal{M}(u) := \{\bo{\mu_i}: \bo{\mu_i} \in B(u)\}$. 
%Then, define $B'(u) := [a_1, b_1] \times [a_2, b_2] \times \cdots \times [a_d, b_d] \subset B(u)$ to be the smallest $d$-dimensional box containing all cluster centers in $\mathcal{M}(u)$, and 
%We also define $R_1(u) \times R_2(u) \times \cdots \times R_d(u)$ to be the dimensions of $B'(u)$ (so $R_r(u) = b_r(u)-a_r(u)$ for each $1 \le r \le d$).
%Note that $B'(u)$ is bounded.
Next, we define $\mathcal{X}^{cor}(u)$ represent the points $x \in \mathcal{X}$ that are ``correctly classified'' to be in $u$, i.e., $\mathcal{X}^{cor}(u) = \{x \in \mathcal{X}: x \in B(u), c(x) \in \mathcal{M}(u)\}.$ In addition, for a line $x_r = t$ for some fixed $r \in [d]$ and $t \in [a_r, b_r]$, we say that a point $x \in \mathcal{X}^{cor}(u)$ is \emph{misclassified} by $x_r = t$ if this line splits $x$ from $c(x)$.
Finally, for $r \in [d]$ and $t \in [a_r, b_r],$ define 
\begin{equation} \label{eq:furt}
    f_u(r, t) := \min\left(|\{\bo{\mu_i} \in \mathcal{M}(u): \mu_{i, r} \le t\}|, |\{\bo{\mu_i} \in \mathcal{M}(u): \mu_{i, r} \ge t\}|\right).
\end{equation}
In other words, $f_u(r, t)$ is the minimum of the number of cluster centers $\bo{\mu_i}$ in $B(u)$ such that $\mu_{i, r} \ge t$ and the number of cluster centers $\bo{\mu_i}$ in $B(u)$ such that $\mu_{i, r} \le t$.

The main lemma of Dasgupta et al.~\cite{dasgupta2020explainable} used to obtain an $O(k^2)$-approximation algorithm bounds the number of misclassified points at each split of a node $u$. Their performance in the worst case can be poor if the decision tree $T$ has high depth. First, we improve significantly over their main lemma by balancing the number of misclassified points with $f_u(r, t),$ which represents the lopsidedness of the branching of node $u$. We then show how to apply this improved main lemma to obtain an $O(k \log k)$-approximation. Finally, we analyze the algorithm, showing that we can obtain a fast $O(k^2 d)$ time randomized procedure (with no dependence on the size of the total dataset $\mathcal{X}$), as well as a slower but deterministic $O(kd \cdot n \log n)$ time procedure.

\subsection{Main Lemma} \label{subsec:kmeans_main}

We prove the following main lemma. This lemma improves over Lemma 5.7 in~\cite{dasgupta2020explainable} for the $k$-means case, which was the main technical lemma in the $O(k^2)$-approximation algorithm by~\cite{dasgupta2020explainable}.

\begin{lemma} \label{lem:kmeans_main}
    For any node $u$, there exists $r \in [d]$ and $t \in [a_r(u), b_r(u)]$ such that the number of points in $\mathcal{X}^{cor}(u)$ that are misclassified by the splitting line $x_r = t$ is at most
$$15 \log k \cdot f_u(r, t) \cdot \frac{\sum_{x \in \mathcal{X}^{cor}(u)}\|x-c(x)\|_2^2}{\sum_{r = 1}^{d} R_r(u)^2}.$$
\end{lemma}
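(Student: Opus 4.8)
The natural approach is the probabilistic method: define a suitable distribution over splitting lines $(r,t)$ with $r \in [d]$ and $t \in [a_r(u), b_r(u)]$, and show that the expectation of the ratio
\[
\frac{(\text{number of misclassified points in }\mathcal{X}^{cor}(u))}{f_u(r,t)}
\]
is at most $15 \log k \cdot \frac{\sum_{x}\|x-c(x)\|_2^2}{\sum_r R_r(u)^2}$; then some concrete choice of $(r,t)$ achieves the bound. Mimicking the $k$-medians distribution, I would sample the dimension $r$ with probability proportional to $R_r(u)$ (or perhaps $R_r(u)^2$, to be calibrated against the $\sum_r R_r(u)^2$ in the denominator), and then sample $t$ uniformly from $[a_r(u), b_r(u)]$. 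For a fixed point $x \in \mathcal{X}^{cor}(u)$ with true center $c(x)$, the probability that the line $x_r = t$ separates $x$ from $c(x)$ is exactly $\frac{|x_r - c(x)_r|}{R_r(u)}$ conditioned on choosing dimension $r$, so over the full distribution the probability $x$ is misclassified is $\frac{\|x - c(x)\|_1}{\sum_r R_r(u)}$; by Cauchy–Schwarz this is at most $\frac{\sqrt{d}\,\|x-c(x)\|_2}{\sum_r R_r(u)}$, and one can relate $\|x-c(x)\|_2^2$ to $\|x-c(x)\|_1^2$ and $\sum_r R_r(u)^2$ to $\left(\sum_r R_r(u)\right)^2$ as needed. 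By linearity the expected number of misclassified points is controlled.

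The subtlety — and the main obstacle — is the $f_u(r,t)$ factor in the denominator: I cannot simply take an expectation of a ratio, because $f_u(r,t)$ can be as small as $1$ (an extremely lopsided split), making the ratio blow up even when the numerator is small. The fix is to \emph{weight the sampling of $t$ by $f_u(r,t)$}, or equivalently to restrict to a "good" range of thresholds. Concretely, for each dimension $r$ sort the cluster centers by their $r$th coordinate; if we pick $t$ to lie between the $j$th and $(j+1)$th sorted coordinates, then $f_u(r,t) = \min(j, |u|-j)$. I would sample the "rank gap" $j$ with probability proportional to $\min(j, |u|-j)$ times the length of that coordinate gap, and then $t$ uniformly within the gap; the normalizing constant for the rank distribution is $\Theta(|u| \cdot \text{(something)}) $, and crucially $\sum_{j} \frac{1}{\min(j,|u|-j)} = O(\log |u|) = O(\log k)$, which is exactly where the $\log k$ factor enters. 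Dividing by $f_u(r,t)$ then cancels the $\min(j,|u|-j)$ weight and leaves a harmonic-type sum bounded by $O(\log k)$.

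So the key steps, in order, are: (1) define the two-stage distribution over $(r,t)$ with $r$ chosen proportional to $R_r(u)$ (or $R_r(u)^2$) and $t$ chosen with density proportional to $f_u(r,t)$ within $[a_r(u), b_r(u)]$, and compute its normalizing constant; (2) for a fixed $x$, compute the probability that $x_r = t$ misclassifies $x$ under this distribution, picking up a factor $|x_r - c(x)_r|$ in the numerator; (3) take expectation of $\frac{\#\text{misclassified}}{f_u(r,t)}$ over the distribution, using linearity over $x \in \mathcal{X}^{cor}(u)$, so that the $f_u(r,t)$ in the denominator cancels the $f_u(r,t)$ weight in the density and leaves behind the harmonic sum $\sum_j \frac{1}{\min(j,|u|-j)} = O(\log k)$; (4) bound the resulting per-point contribution $\|x - c(x)\|_2^2$ against $\sum_r R_r(u)^2$ via Cauchy–Schwarz (and the fact that $|x_r - c(x)_r| \le R_r(u)$ since $x \in B(u)$ and $c(x) \in \mathcal{M}(u) \subseteq B'(u)$, giving the extra factor needed to convert $\ell_1$ to $\ell_2^2$); (5) conclude that some $(r,t)$ in the support meets the deterministic bound with the constant $15$. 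I expect the bookkeeping in step (4) — getting the constant down to $15$ and correctly matching $\ell_1$-type quantities in the probability with the $\ell_2^2$-type quantities in the statement — to be the fiddly part, while step (3), the cancellation of $f_u(r,t)$, is the conceptual heart of the argument.
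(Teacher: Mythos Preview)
Your probabilistic-method framing is right, and sampling $r \propto R_r^2$ with $t$ uniform on $[a_r,b_r]$ is exactly where the paper starts. But there is a genuine gap in how you propose to handle the $f_u(r,t)$ denominator and, more importantly, in step~(4).

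If you weight $t$ by a density proportional to $f_u(r,t)$ and then take the expectation of $E_u(r,t)/f_u(r,t)$, the $f_u$ factors cancel and you are left (up to normalization) with $\int_{a_r}^{b_r} E_u(r,t)\,dt = \sum_{x}\int 1_{x,t,r}\,dt = \sum_x |x_r-c(x)_r|$ restricted to $[a_r,b_r]$. This is an $\ell_1$-type quantity; no harmonic sum appears from this cancellation (the harmonic sum would arise from $\int 1/f_u(r,t)\,dt$, not from $\int f_u(r,t)\,dt$). And your plan to convert $\ell_1$ to $\ell_2^2$ in step~(4) does not work: first, the inequality $|x_r-c(x)_r|\le R_r(u)$ is false in general, since $x$ is only guaranteed to lie in $B(u)$, not in the tight box $B'(u)$; second, even granting it, Cauchy--Schwarz on $\sum_r R_r\,|x_r-c(x)_r|$ yields $\big(\sum_r R_r^2\big)^{1/2}\,\|x-c(x)\|_2$, which is off from the target $\|x-c(x)\|_2^2$ by a factor of $\|x-c(x)\|_2/\big(\sum_r R_r^2\big)^{1/2}$ that can be arbitrarily large.

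The paper's mechanism for producing the square is different and is the key idea you are missing. Rather than reweighting by $f_u$, the paper \emph{conditions} on the event $\mathcal{A}$ that $t$ is at distance at least $\frac{R_r}{10\log k\cdot f_u(r,t)}$ from every center coordinate $\mu_{i,r}$; the harmonic sum $\sum_j 1/\min(j,k'-j)=O(\log k)$ is used only to show $\BP(\mathcal{A})\ge 1/3$. On $\mathcal{A}$ one has the pointwise bound $\frac{R_r}{f_u(r,t)}\le 10\log k\cdot |t-c(x)_r|$, so
\[
\int_{a_r}^{b_r}\frac{R_r\cdot 1_{x,t,r}\cdot 1_{\mathcal{A}}}{f_u(r,t)}\,dt \;\le\; 10\log k\int_{\min(x_r,c(x)_r)}^{\max(x_r,c(x)_r)}|t-c(x)_r|\,dt \;=\;5\log k\,(x_r-c(x)_r)^2.
\]
The squared distance comes from integrating $|t-c(x)_r|$ over the misclassification interval, not from Cauchy--Schwarz. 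That is the step your plan needs to incorporate.
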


\begin{proof}
We treat the node $u$ as fixed in this lemma, so for simplicity, we drop the argument $u$ in $a_r, b_r,$ and $R_r$.

We consider the following procedure of selecting a splitting line. First, select each dimension $r \in [d]$ with probability proportional to $R_r^2.$
Next, select a point $t$ uniformly at random in $[a_r, b_r]$ conditioned on
$$|t-\mu_{i, r}| \ge \frac{R_r}{10 \log {k} \cdot f_u(r, t)} \hspace{0.5cm} \text{for all } \bo{\mu_i} \in \mathcal{M}(u).$$
Let $(r, t)$ be a pair where $r \propto R_r^2$ and $t \sim Unif[a_r, b_r]$. Let $E_u(r, t)$ be the number of misclassified points in $\mathcal{X}^{cor}(u)$ by the line $x_r = t$, i.e.,
\begin{equation} \label{eq:Eurt}
    E_u(r, t) = \left\vert\left\{x \in \mathcal{X}^{cor}(u): x_r < t \le c(x)_r \text{ or } c(x)_r < t \le x_r\right\}\right\vert
\end{equation}
Also, let $\mathcal{A}$ be the event (and $1_\mathcal{A}$ be the indicator random variable) that
$$|t-\mu_{i, r}| \ge \frac{R_r}{10 \log {k} \cdot f_u(r, t)} \hspace{0.5cm} \text{for all } \bo{\mu_i} \in \mathcal{M}(u).$$

First, we note that $\BP(\mathcal{A}) \ge 1/3$. To see why, it suffices to show that conditioned on choosing any fixed dimension $r$, the probability of $\mathcal{A}$ not occurring for a random $t \in [a_r, b_r]$ is at most $2/3$. Let $k' := |\mathcal{M}(u)| \le k$, and let $x_1 \le x_2 \le \dots \le x_{k'}$ be the $r$th coordinates of the points in $\mathcal{M}(u)$ in sorted order. Note that $x_1 = a_r$ and $x_{k'} = b_r.$ Now, if $t \in [x_i, x_{i+1}]$ and $\mathcal{A}$ does not occur, then either $t \in [x_i, x_i + \frac{R_r}{10 \log k \cdot \min(i, k'-i)}]$ or $t \in [x_{i+1}-\frac{R_r}{10 \log k \cdot \min(i, k'-i)}, x_{i+1}]$. Therefore, since $b_r-a_r = R_r$, the probability of $\mathcal{A}$ not occurring conditioned on $r$ is at most
$$\frac{1}{R_r} \cdot \left(\sum_{i = 1}^{k'-1} 2 \cdot \frac{R_r}{10 \log k \cdot \min(i, k'-i)}\right) = \frac{1}{5 \log k} \cdot \sum_{i = 1}^{k'-1} \frac{1}{\min(i,k'-i)}\le \frac{2}{5 \log k} \cdot \sum_{i = 1}^{\lfloor k'/2 \rfloor} \frac{1}{i} \le \frac{2}{3},$$
assuming that $2 \le k' \le k.$

To prove the lemma, it clearly suffices to show that
$$\BE_{r, t}\left[\frac{E_u(r, t)}{f_u(r, t)}\bigg\vert \mathcal{A}\right] \le 15 \log k \cdot \frac{\sum_{x \in \mathcal{X}^{cor}(u)}\|x-c(x)\|_2^2}{\sum_{r = 1}^{d} R_r^2}.$$
Since $\BP(\mathcal{A}) \ge 1/3$, we will just bound $\BE\left[\frac{E_u(r, t)}{f_u(r, t)} \cdot 1_\mathcal{A}\right],$ since 
\begin{equation} \label{eq:indicator_A}
    \BE\left[\frac{E_u(r, t)}{f_u(r, t)}\bigg\vert \mathcal{A}\right] = \frac{\BE\left[\frac{E_u(r, t)}{f_u(r, t)}\cdot 1_{\mathcal{A}}\right]}{\BP(\mathcal{A})} \le 3 \cdot \BE\left[\frac{E_u(r, t)}{f_u(r, t)}\cdot 1_{\mathcal{A}}\right].
\end{equation}

Note that we can write
\begin{equation} \label{eq:expression_integral}
    \BE\left[\frac{E_u(r, t)}{f_u(r, t)} \cdot 1_\mathcal{A}\right] = \frac{1}{\sum_{r = 1}^{d} R_r^2} \cdot \sum_{r = 1}^{d} \int_{a_r}^{b_r} R_r \cdot \frac{E_u(r, t)}{f_u(r, t)} \cdot 1_\mathcal{A} \mathop{dt}.
\end{equation}
Now, if we let $1_{x, t, r}$ be the indicator random variable that $x, c(x)$ are on opposite sides of the line $x_r = t,$ then we can write $E_u(r, t)$ as a sum of indicator variables: $E_u(r, t) = \sum_{x \in \mathcal{X}^{cor}(u)} 1_{x, t, r}.$ Therefore, by Equation \eqref{eq:expression_integral}, we have that
\begin{equation} \label{eq:expression_integral_2}
    \BE\left[\frac{E_u(r, t)}{f_u(r, t)} \cdot 1_\mathcal{A}\right] = \frac{1}{\sum_{r = 1}^{d} R_r^2} \cdot \sum_{r = 1}^{d} \sum_{x \in \mathcal{X}^{cor}(u)} \int_{a_r}^{b_r} R_r \cdot \frac{1_{x, t, r} \cdot 1_\mathcal{A}}{f_u(r, t)} \mathop{dt}.
\end{equation}

Note that $1_{x, t, r} = 1$ if and only if $t$ is between $x_r$ and $c(x)_r$, and $1_{\mathcal{A}} = 1$ only if $|t-c(x)_r| \ge\frac{R_r}{10 \log k \cdot f_u(r, t)}$, which means that $10 \log k \cdot |t-c(x)_r| \ge \frac{R_r}{f_u(r, t)} \cdot 1_{\mathcal{A}}$. Therefore,
\begin{equation} \label{eq:integral_evaluation}
    \int_{a_r}^{b_r} \frac{R_r \cdot 1_{x, t, r} \cdot 1_{\mathcal{A}}}{f_u(r, t)} \mathop{dt} \le 10 \log k \cdot \int_{\min(x_r, c(x)_r)}^{\max(x_r, c(x)_r)} |t-c(x)_r| \mathop{dt} = 5 \log k \cdot (x_r-c(x)_r)^2,
\end{equation}
so by combining Equations \eqref{eq:indicator_A}, \eqref{eq:expression_integral_2}, and \eqref{eq:integral_evaluation}, we obtain
\begin{align}
\BE\left[\frac{E_u(r, t)}{f_u(r, t)} \bigg\vert 1_\mathcal{A}\right] &\le 3 \cdot \BE\left[\frac{E_u(r, t)}{f_u(r, t)} \cdot 1_\mathcal{A}\right] \nonumber \\
&= 3 \cdot \frac{1}{\sum_{r = 1}^{d} R_r^2} \cdot \sum_{r = 1}^{d} \sum_{x \in \mathcal{X}^{cor}(u)} \int_{a_r}^{b_r} R_r \cdot \frac{1_{x, t, r} \cdot 1_\mathcal{A}}{f_u(r, t)} \mathop{dt} \nonumber \\
&\le 3 \cdot \frac{1}{\sum_{r= 1}^{d} R_r^2} \cdot \sum_{x \in \mathcal{X}^{cor}(u)} \sum_{r = 1}^{d} 5 \log k \cdot (x_r-c(x)_r)^2 \nonumber \\
&= 15 \log k \cdot \frac{\sum_{x \in \mathcal{X}^{cor}(u)}\|x-c(x)\|_2^2}{\sum_{r = 1}^{d} R_r^2}, \label{eq:kmeans_main}
\end{align}
as desired.
\end{proof}

\subsection{Finishing the Proof} \label{subsec:kmeans_remainder}

Our algorithm structure is similar to the ``IMM'' algorithm as in Dasgupta et al.~\cite{dasgupta2020explainable}. The main difference is that at each step, we do the splitting according to Lemma \ref{lem:kmeans_main} instead of Lemma 5.7 in \cite{dasgupta2020explainable}. Namely, for each node $u$ of size $|u| \ge 2,$ we choose the pair $(r, t)$ where $t \in [a_r(u), b_r(u)]$, 
%$|t-\mu_{i, r}| \ge \frac{R_r(u)}{10 \log k \cdot f_u(r, t)}$,
that minimizes $\frac{E_u(r, t)}{f_u(r, t)}$. By Lemma \ref{lem:kmeans_main}, we know there exists such a point with $\frac{E_u(r, t)}{f_u(r, t)} \le 15 \log k$.

We present the explainable $k$-means algorithm in Figure \ref{fig:kmeans_main_alg}.
To analyze the accuracy of this algorithm, we use the following lemma, due to Dasgupta et al.~\cite{dasgupta2020explainable}.

\begin{figure}
\centering
\begin{minipage}{0.47\textwidth}
\begin{algorithm}[H]
    \caption{Main procedure for explainable $k$-means}
    \begin{algorithmic}[1] % The number tells where the line numbering should start
        \Procedure{K-means}{$u$}
            \State Use standard $k$-means algorithm to find centers $\bo{\mu_1}, \dots, \bo{\mu_k}$
            \State \textbf{Create} tree $T$ with single node $u_0 \leftarrow \emptyset$ with $\mathcal{M}(u_0) = \{\bo{\mu_1}, \dots, \bo{\mu_k}\}$
            \While{$\exists$ leaf $u \in T$ with $|\mathcal{M}(u)| \ge 2$}
                \State \Call{MeanSplit}{u}
            \EndWhile
            \State \textbf{Return} $T$
        \EndProcedure
    \end{algorithmic}
    \label{alg:kmeans_main}
\end{algorithm}
\end{minipage}
\hfill
\begin{minipage}{0.51\textwidth}
\begin{algorithm}[H]
    \caption{Splitting procedure of a node $u$}
    \begin{algorithmic}[1] % The number tells where the line numbering should start
        \Procedure{MeanSplit}{$u$}
            \For{$r = 1$ to $d$}
                \State $a_r = \mathop{\min}\limits_{\bo{\mu_i} \in \mathcal{M}(u)} \mu_{i, r}$
                \State $b_r = \mathop{\max}\limits_{\bo{\mu_i} \in \mathcal{M}(u)} \mu_{i, r}$
            \EndFor
            \State \textbf{Find} pair $r \in [d], t \in (a_r, b_r)$ minimizing $E_u(r, t)/f_u(r, t)$ \Comment{See equations \eqref{eq:furt}, \eqref{eq:Eurt} for definitions of $E_u(r, t), f_u(r, t)$.}
            \State \textbf{Add} left child $\mathcal{L}(u) \leftarrow \{x_r < t\}$ to $u$
            \State $\mathcal{M}(\mathcal{L}(u)) = \mathcal{M}(u) \cap \{\bo{\mu_i}: \mu_{i, r} < t\}$
            \State \textbf{Add} right child $\mathcal{R}(u) \leftarrow \{x_r \ge t\}$ to $u$
            \State $\mathcal{M}(\mathcal{R}(u)) = \mathcal{M}(u) \cap \{\bo{\mu_i}: \mu_{i, r} \ge t\}$
        \EndProcedure
    \end{algorithmic}
\end{algorithm}
\end{minipage}
\caption{The core procedure for fast Explainable $k$-means clustering is on the left, with the main subroutine, MeanSplit, on the right. The MeanSplit procedure here is deterministic, we later show a faster, but randomized procedure in Figure \ref{fig:kmeans_random_alg}.}
\label{fig:kmeans_main_alg}
\end{figure}

\begin{lemma} \cite[Lemma 5.5, Part 2]{dasgupta2020explainable} \label{lem:dasgupta5.5}
    For any node $u$, recall that $B'(u) = [a_1(u), b_1(u)] \times \cdots \times [a_d(u), b_d(u)]$ is the smallest $d$-dimensional box containing all clusters in $\mathcal{M}(u)$. Then, let $C_2(u) = \sum_{i = 1}^{d} (b_i(u)-a_i(u))^2.$ (This is referred to as $\|\bo{\mu}^{L, u} - \bo{\mu}^{R, u}\|_2^2$ in \cite{dasgupta2020explainable}). Then, the $k$-means cost of the tree $T$ satisfies
\[\text{cost}(T) \le 2 \cdot \text{cost}(\bo{\mu_1}, \dots, \bo{\mu_k}) + 2 \cdot \sum_{u \in T} E_u(r, t) C_2(u),\]
    where $E_u(r, t)$ is the number of points in $\mathcal{X}^{cor}(u)$ that are misclassified when splitting the node $u$.
\end{lemma}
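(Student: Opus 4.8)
The plan is to track, for each data point $x \in \mathcal{X}$, the unique root-to-leaf path of $T$ that $x$ follows (unique since, by the preliminaries, no point ever lands exactly on a split line), and to charge the ``extra'' cost that $x$ incurs to the single node at which $x$ is first separated from its true center $c(x)$. First I would observe that at the root every point is correctly classified ($x \in B(u_0)$ and $c(x) \in \mathcal{M}(u_0) = \mathcal{M}$), and that as $x$ descends the tree it remains in $\mathcal{X}^{cor}(\cdot)$ of the node on its path until, possibly, the first node $u$ whose chosen split $x_r = t$ lies strictly between $x_r$ and $c(x)_r$. From that node on, $c(x)$ lies outside the box of every subsequent node on $x$'s path, so $x$ is never again correctly classified. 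Hence each $x$ is counted in $E_u(r,t)$ for \emph{at most one} node $u$, which is the crucial no-double-counting fact.

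Next I would split $\text{cost}(T) = \sum_{x \in \mathcal{X}} \|x - s(x)\|_2^2$ according to whether $x$ is ever misclassified. If $x$ is never misclassified, the leaf it reaches has $c(x)$ as its unique center, so $s(x) = c(x)$ and its contribution is exactly $\|x - c(x)\|_2^2$. If $x$ is misclassified at node $u$, then both $c(x)$ and $s(x)$ lie in $\mathcal{M}(u)$: $c(x) \in \mathcal{M}(u)$ because $x$ was still correctly classified at $u$, and $s(x) \in \mathcal{M}(u)$ because $s(x)$ is the center of a leaf descended from whichever child of $u$ that $x$ enters, and $\mathcal{M}(\text{child}) \subseteq \mathcal{M}(u)$. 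Consequently both points lie in the box $B'(u)$, so $\|c(x) - s(x)\|_2^2 = \sum_{i=1}^d (c(x)_i - s(x)_i)^2 \le \sum_{i=1}^d (b_i(u) - a_i(u))^2 = C_2(u)$. Combined with the triangle inequality $\|x - s(x)\|_2 \le \|x - c(x)\|_2 + \|c(x) - s(x)\|_2$ and the bound $(a+b)^2 \le 2a^2 + 2b^2$, each misclassified $x$ contributes at most $2\|x - c(x)\|_2^2 + 2 C_2(u)$.

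Finally I would sum both cases: $\text{cost}(T) \le \sum_{x \in \mathcal{X}} 2\|x - c(x)\|_2^2 + \sum_{u \in T} 2 C_2(u) \cdot |\{x : x \text{ misclassified at } u\}| = 2 \cdot \text{cost}(\bo{\mu_1}, \dots, \bo{\mu_k}) + 2 \sum_{u \in T} E_u(r,t)\, C_2(u)$, where the first term uses that $\{x : x$ never misclassified$\}$ contributes $\sum \|x-c(x)\|_2^2 \le 2\,\text{cost}(\bo{\mu_1},\dots,\bo{\mu_k})$ and the per-node counts are exactly $E_u(r,t)$ by definition, with no over-counting thanks to the first step.

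The main obstacle is not a deep one but a careful structural check: verifying that misclassification occurs at most once along each path, and that at the node of misclassification both $s(x)$ and $c(x)$ genuinely lie in the box $B'(u)$ (so that $\|c(x)-s(x)\|_2^2 \le C_2(u)$ applies). Once those two facts are pinned down, the remainder is just the triangle inequality and a box-diameter estimate.
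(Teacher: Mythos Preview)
Your proposal is correct and is precisely the standard argument for this lemma. Note that the present paper does not actually give its own proof of this statement: it is quoted verbatim as \cite[Lemma~5.5, Part~2]{dasgupta2020explainable} and used as a black box, so there is no ``paper's proof'' to compare against beyond the citation. Your argument---charging each point to the unique node where it first gets separated from $c(x)$, observing that both $c(x)$ and $s(x)$ lie in $B'(u)$ at that node so $\|c(x)-s(x)\|_2^2 \le C_2(u)$, and then applying $(a+b)^2 \le 2a^2+2b^2$---is exactly how the cited lemma is proved in \cite{dasgupta2020explainable}.
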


To finish the proof, we first note that for any node $u$, $C_2(u) = \sum_{r = 1}^{d} R_r(u)^2$. Thus, by Lemma \ref{lem:kmeans_main}, we have
\[E_u(r, t) C_2(u) \le 15 \log k \cdot f_u(r, t) \cdot \sum_{x \in \mathcal{X}^{cor}(u)} \|x-c(x)\|_2^2,\]
so 
\[\text{cost}(T) \le 2 \cdot \text{cost}(\bo{\mu_1}, \dots, \bo{\mu_k}) + 30 \log k \cdot \sum_{u \in T} f_u(r, t) \cdot \sum_{x \in \mathcal{X}^{cor}(u)} \|x-c(x)\|_2^2,\]
where $f_u(r, t) = \min(|\mathcal{M}(\mathcal{L}(u))|, |\mathcal{M}(\mathcal{R}(u))|),$ where $\mathcal{L}(u), \mathcal{R}(u)$ are the two direct children of the node $u$.
To finish the proof, it suffices to show that for each $x \in X,$ the term $\|x-c(x)\|_2^2$ appears at most $k$ times in the double summation, or equivalently, for any fixed $x \in \mathcal{X},$
\begin{equation} \label{eq:f_u_sum}
    \sum_{u: x \in \mathcal{X}^{cor}(u)} f_u(r, t) \le k.
\end{equation}

To prove Equation \eqref{eq:f_u_sum}, first note that for any node $u$ with children $v, w,$ $|\mathcal{M}(v)|+|\mathcal{M}(w)| = |\mathcal{M}(u)|,$ so $f_u(r, t) \le \min(|\mathcal{M}(u)|-|\mathcal{M}(v)|, |\mathcal{M}(u)|-|\mathcal{M}(w)|).$ Therefore, since the set of nodes $u$ precisely forms a linear path from the root to some node (let this path of nodes be $u_0, u_2, \dots, u_h$, where $u_0$ is the root of the tree $T$, but $u_h$ may not necessarily be a leaf), we have that
\[\sum_{u: x \in \mathcal{X}^{cor}(u)} f_u(r, t) \le (|\mathcal{M}(u_0)|-|\mathcal{M}(u_1)|) + \cdots + (|\mathcal{M}(u_{h-1})|-|\mathcal{M}(u_h)|) + |\mathcal{M}(u_h)| = |\mathcal{M}(u_0)| = k.\]

\subsection{Analyzing the Runtime} \label{subsec:kmeans_runtime}

For the algorithm described in Subsection \ref{subsec:kmeans_remainder}, the runtime can be analyzed in the same way as in Dasgupta et al.~\cite{dasgupta2020explainable}. Namely, for each node $u$ that we wish to split and each dimension $r \in [d]$, we run a sweep line and keep track the number of misclassified points, while also keeping track of the number of cluster centers in $\mathcal{M}(u)$ that are to the left and to the right of the sweep line, respectively. By sorting the points in $\mathcal{M}(u)$ and $\mathcal{X}^{cor}(u)$ in each dimension, and using dynamic programming to keep track of the number of misclassified points, for any node $u$ we can minimize the ratio $\frac{E_u(r, t)}{f_u(r, t)}$ over all $r \in [d]$ and $t \in [a_r(u), b_r(u)]$ in $O(d n \log n)$ time. Overall, doing this for each node in $u$, we get that once we have our centers from a standard $k$-means clustering algorithm, the remaining runtime is $O(kd n \log n)$, which matches that of \cite{dasgupta2020explainable}. We note this algorithm is \emph{deterministic} and always obtains an $O(k \log k)$-approximation.

Finally, as in the $k$-medians algorithm, we note there also exists a sublinear-time, randomized explainable clustering algorithm that only depends on the cluster centers $\bo{\mu_1}, \dots, \bo{\mu_k}$, which may be generated from a non-explainable $k$-means clustering algorithm. Indeed, the proof of Lemma \ref{lem:kmeans_main} tells us that if we sample each coordinate $r \in [d]$ proportional to $R_r^2$ and select $t \sim Unif[a_r(u), b_r(u)]$, and condition the whole thing on the event $\mathcal{A},$ which is that $|t-\mu_{i, r}| \ge R_r/(10 \log k \cdot f_u(r, t))$ for all $\bo{\mu_i} \in \mathcal{M}(u)$, then $\BE[E_u(r, t)/f_u(r, t)] \le O(\log k) \cdot \sum_{x \in \mathcal{X}^{cor}(u)} \|x-c(x)\|_2^2/(\sum_{r = 1}^{d} R_r^2).$ Therefore, for any node $u$ and set of points $\mathcal{M}(u),$ our randomized procedure will compute $B'(u)$, and then sample a random line $\{x_r = t\}$ where $(r, t)$ is drawn proportional to $R_r \cdot 1_{\mathcal{A}}/f_u(r, t)$. (We remark that the proportionality is $R_r$ instead of $R_r^2$ since the $r^{\text{th}}$ dimension of the box also contributes a factor of $R_r$.) When $(r, t)$ was drawn proportional to $R_r \cdot 1_{\mathcal{A}}$, we had that $\BE\left[\frac{E_u(r, t)}{f_u(r, t)}\right] \le 15 \log k \cdot \sum_{x \in \mathcal{X}^{cor}(u)} \|x-c(x)\|_2^2/(\sum_{r = 1}^{d} R_r(u)^2)$ (see Equation \eqref{eq:kmeans_main}), which means that with our new distribution, we have
\begin{equation} \label{eq:kmeans_random}
    \BE\left[E_u(r, t)\right] \le 15 \log k \cdot \frac{\sum_{x \in \mathcal{X}^{cor}(u)} \|x-c(x)\|_2^2}{\sum_{r = 1}^{d} R_r(u)^2} \cdot \BE\left[f_u(r, t)\right].
\end{equation}

Therefore, if we use this randomized procedure to split the node at each point, and recall that $C_2(u) = \sum_{r = 1}^{d} R_r(u)^2$, we have that
\begin{align*}
\BE[\text{cost}(T)] &\le 2 \cdot \text{cost}(\bo{\mu_1}, \dots, \bo{\mu_k}) + 2 \cdot \BE\left[\sum_{u \in T} E_u(r, t) \cdot C_2(u)\right] \\
&\le 2 \cdot \text{cost}(\bo{\mu_1}, \dots, \bo{\mu_k}) + 2 \cdot \BE\left[\sum_{u \in T} 15 \log k \cdot \frac{\sum_{x \in \mathcal{X}^{cor}(u)} \|x-c(x)\|_2^2}{C_2(u)} \cdot f_u(r, t) \cdot C_2(u)\right] \\
&= 2 \cdot \text{cost}(\bo{\mu_1}, \dots, \bo{\mu_k}) + 30 \log k \cdot \BE\left[\sum_{u \in T}\sum_{x \in \mathcal{X}^{cor}(u)} \|x-c(x)\|_2^2 \cdot f_u(r, t)\right] \\
&\le 2 \cdot \text{cost}(\bo{\mu_1}, \dots, \bo{\mu_k}) + 30 k \log k \cdot \left(\sum_{x \in \mathcal{X}} \|x-c(x)\|_2^2\right),
\end{align*}
which means that in expectation, we have an $O(k \log k)$-approximation. Above, the first line follows from Lemma \ref{lem:dasgupta5.5}. The second line follows from Equation \eqref{eq:kmeans_random} and the fact that our expectation of $E_u(r, t)$ is computed after we already know $u$ (so $C_2(u)$ can essentially be treated as a constant when evaluating the expectation for a single $u$). The third line is simple manipulation, and the final line follows from Equation \eqref{eq:f_u_sum}.

Finally, we show how to actually perform this random procedure efficiently in sublinear time. We will not get $O(k\log^2 k \cdot d)$ as in the $k$-medians case, but we obtain a runtime of $O(k^2 d)$, which is still substantially faster than the deterministic $O(dk \cdot n\log n)$ runtime. First, in $O(kd \log k)$ time, we can assume we have the points $\bo{\mu_1}, \dots, \bo{\mu_k}$ sorted in each dimension. Next, for each node $u$ and each dimension $r \in [d]$, we can use the original sorted points to have the points in $\mathcal{M}(u)$ sorted in dimension $r$ in $O(k)$ time per dimension. If the sorted values in dimension $r$ are $x_{1, r}, \dots, x_{k', r}$ where $k' = |\mathcal{M}(u)|$, then we can compute $[x_{i, r} + \frac{R_r}{10 \log k \cdot \min(i, k'-i)}, x_{i+1, r} - \frac{R_r}{10 \log k \cdot \min(i, k'-i)}]$ for each $i \in [k']$ and $r \in [d].$ Recall that we are sampling the pair $(r, t)$ proportional to $R_r \cdot 1_{\mathcal{A}}/f_u(r, t)$, where $\mathcal{A}$ is the event that $t \in [x_{i, r} + \frac{R_r}{10 \log k \cdot \min(i, k'-i)}, x_{i+1, r} - \frac{R_r}{10 \log k \cdot \min(i, k'-i)}]$ for some choice of $i$. Therefore, by explicitly writing out all of the $k$ relevant intervals in each of the $d$ dimensions, one can easily do the sampling in time $O(k d)$ time. Therefore, since we have to perform this for each node $u \in T$, the overall runtime is $O(k^2 d)$.
Moreover, this algorithm only depends on the cluster centers, which means that we can run this algorithm on an $O(1)$-approximate $k$-means coreset of $\mathcal{X}$ and obtain the same $O(k \log k)$-approximation guarantee in expectation.

The full randomized splitting procedure is shown in Figure \ref{fig:kmeans_random_alg}.

\begin{figure}
    \centering
\begin{algorithm}[H]
    \caption{Randomized splitting procedure of a node $u$}
    \begin{algorithmic}[1] % The number tells where the line numbering should start
        \Procedure{MeanSplitRandom}{$u$}
            \For{$r = 1$ to $d$}
                \State $R_r = \mathop{\max}\limits_{\bo{\mu_i} \in \mathcal{M}(u)} \mu_{i, r}-\mathop{\min}\limits_{\bo{\mu_i} \in \mathcal{M}(u)} \mu_{i, r}$
                \For{$i = 1$ to $|\mathcal{M}(u)|$}
                    \State $x_{i, r} = i^{\text{th}}$ coordinate in sorted order among $\{\mu_{j, r}: \bo{\mu_j} \in \mathcal{M}(u)\}$
                \EndFor
            \EndFor
            \State \textbf{Sample} $(r, t)$ proportional to $\frac{R_r}{\min(i, |\mathcal{M}(u)|-i)}$ if $t \in [x_{i, r} + \frac{R_r}{10 \log k \cdot \min(i, |\mathcal{M}(u)|-i)}, x_{i+1, r} - \frac{R_r}{10 \log k \cdot \min(i, |\mathcal{M}(u)|-i)}]$ for some $1 \le i \le |\mathcal{M}(u)|-1$, proportional to $0$ otherwise.
            \State \textbf{Add} left child $\mathcal{L}(u) \leftarrow \{x_r < t\}$ to $u$
            \State $\mathcal{M}(\mathcal{L}(u)) = \mathcal{M}(u) \cap \{\bo{\mu_i}: \mu_{i, r} < t\}$
            \State \textbf{Add} right child $\mathcal{R}(u) \leftarrow \{x_r \ge t\}$ to $u$
            \State $\mathcal{M}(\mathcal{R}(u)) = \mathcal{M}(u) \cap \{\bo{\mu_i}: \mu_{i, r} \ge t\}$
        \EndProcedure
    \end{algorithmic}
\end{algorithm}
    \caption{Randomized procedure for selecting a splitting line of a node $u$. The main $k$-means procedure (Algorithm \ref{alg:kmeans_main}) can be implemented with MeanSplitRandom as opposed to MeanSplit.}
    \label{fig:kmeans_random_alg}
\end{figure}

Finally, if one wishes to verify the explainable clustering solution's cost on the data, one can perform it in the same manner as in Subsection \ref{subsec:kmedians_faster}, which will require $O(n(d+H_T)) = O(n(d+k))$ time (where $H_T$ is the height of the tree).

\section{Algorithm for explainable $2$-means clustering} \label{sec:2means}

In this section, we provide a $3$-approximation algorithm for $2$-means explainable clustering, which improves over the $4$-approximation algorithm of Dasgupta et al.~\cite{dasgupta2020explainable} and matches the lower bound of \cite{dasgupta2020explainable} when the dimension $d$ is not a constant.

Our algorithm will be identical to that of \cite{dasgupta2020explainable}, which essentially tries all possible decision trees. Because $k = 2$, the decision tree only consists of a single threshold line $\{x_r = z\}$, so for each dimension $r$ from $1$ to $d$, the algorithm runs a sweep line to compute the cost of all possible thresholds. This procedure can be made to run in $O(nd^2 + nd \log n)$, and also has the advantage that it obtains the \emph{optimal} explainable clustering algorithm.

However, unlike the analysis of \cite{dasgupta2020explainable}, our analysis is \emph{probabilistic}. Namely, we provide a randomized procedure that finds an explainable clustering that, in expectation, provides a $3$-approximation to $k$-means. This implies that the optimal explainable algorithm is at most a $3$-approximation, so the algorithm of the previous paragraph will find it.

We now proceed with the analysis.
Let $\bo{\mu_1}$ and $\bo{\mu_2}$ represent the optimal cluster centers for $2$-means clustering.
By reflecting and shifting, we may assume WLOG that $\bo{\mu_1} = (0, 0, \dots, 0) \in \BR^d$ and $\bo{\mu_2} = (R_1, R_2, \dots, R_d) \in \BR^d$, where $R_1, \dots, R_d \ge 0$. Next, we will choose a line based on the following procedure.

Let $$F(x) = \begin{cases}0 & x \le 0 \\ 2x^2 & 0 \le x \le 1/2 \\ 1-2(1-x)^2 & 1/2 \le x \le 1 \\ 1 & x \ge 1 \end{cases}$$
represent the PDF of a distribution $\mathcal{D}$ over $\BR$. Note that $\mathcal{D}$ is supported on $[0, 1]$. We choose $i \in [d]$ proportional to $R_i^2$ (call this distribution $\mathcal{P}$), and then choose the line $\{x_i = R_i \cdot a\},$ where $a \sim \mathcal{D}$.

We will show that for every point $x \in \BR^d$ that is closer to $\bo{\mu_1}$ than to $\bo{\mu_2},$ that 
\begin{equation} \label{eq:3_approx_main}
    \frac{\BP_{i \sim \mathcal{P}, a \sim \mathcal{D}} (x_i \le R_i \cdot a) \cdot \|x\|_2^2 + \BP_{i \sim \mathcal{P}, a \sim \mathcal{D}} (x_i \ge R_i \cdot a) \cdot \|\bo{\mu_2} - x\|_2^2}{\|x\|_2^2} \le 3.
\end{equation}
This is sufficient, as it implies that the expectation of the squared Euclidean distance between $x$ and its assigned cluster, in expectation, is at most $3$ times the squared Euclidean distance between $x$ and its true cluster, for any $x$ closer to $\bo{\mu_1}$ than $\bo{\mu_2}$. However, by the symmetry of the distribution $F(x)$, we also get that this is true for any $x$ closer to $\bo{\mu_2}$ than to $\bo{\mu_1}$. Hence, in expectation, our algorithm provides a $3$-approximation.

Equivalently, by subtracting $1$ from Equation \eqref{eq:3_approx_main} and multiplying by $\|x\|_2^2$, it suffices to show that
\[\BP_{i \sim \mathcal{P}, a \sim \mathcal{D}}(x_i \ge R_i \cdot a) \cdot (\|\bo{\mu_2}-x\|_2^2 - \|x\|_2^2) \le 2 \cdot \|x\|_2^2.\]

Let $x = (R_1 \cdot \alpha_1, R_2 \cdot \alpha_2, \dots, R_d \cdot \alpha_d)$, where $\alpha_1, \dots, \alpha_d \in \BR$. Then, $\|x\|_2^2 = \sum_{i = 1}^{d} R_i^2 \alpha_i^2$, and $\|\bo{\mu_2}-x\|_2^2-\|x\|_2^2 = \sum_{i = 1}^{d} R_i^2 (1 - 2 \alpha_i)$. Finally,
$$\BP_{i \sim \mathcal{P}, a \sim \mathcal{D}} (x_i \ge R_i \cdot a) = \BP_{i \sim \mathcal{P}, a \sim \mathcal{D}}(\alpha_i \ge a) = \frac{\sum_{i = 1}^{d} R_i^2 \cdot F(\alpha_i)}{\sum_{i = 1}^{d} R_i^2}.$$

Hence, it suffices to prove the following lemma.

\begin{lemma}
For any nonnegative real numbers $R_1, \dots, R_n$ and real numbers $\alpha_1, \dots, \alpha_n,$
$$\sum_{i = 1}^{d} R_i^2 (1-2 \alpha_i) \cdot \sum_{i = 1}^{d} R_i^2 F(\alpha_i) \le 2 \sum_{i = 1}^{d} R_i^2 \cdot \sum_{i = 1}^{d} R_i^2 \alpha_i^2.$$
\end{lemma}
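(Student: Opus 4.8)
The plan is to renormalize the weights $R_i^2$ into a probability distribution and reduce the statement to an inequality of the shape $A\cdot B\le 2C$, then finish with a short case split based on two elementary pointwise estimates for $F$.

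If every $R_i=0$ the inequality reads $0\le 0$, so assume $W:=\sum_i R_i^2>0$ and put $w_i:=R_i^2/W$, so that $w_i\ge 0$ and $\sum_i w_i=1$. Dividing both sides of the claim by $W^2$ turns it into $A\cdot B\le 2C$, where $A:=\sum_i w_i(1-2\alpha_i)=1-2m$ with $m:=\sum_i w_i\alpha_i$, $B:=\sum_i w_iF(\alpha_i)\ge 0$, and $C:=\sum_i w_i\alpha_i^2\ge 0$. The only facts about $F$ I will use, each read off directly from its four-piece definition, are: $0\le F(\alpha)\le 1$ for all $\alpha$, with $F(\alpha)=0$ for $\alpha\le 0$; the bound $F(\alpha)\le\min(1,2\alpha^2)$, whose only nonobvious range is $\alpha\in[1/2,1]$, where $2\alpha^2-F(\alpha)=(2\alpha-1)^2\ge 0$; and, as a consequence, $F(\alpha)\le 2\alpha$ for every $\alpha\ge 0$ (because $2\alpha^2\le 2\alpha$ when $\alpha\le 1$, and $1\le 2\alpha$ when $\alpha\ge 1/2$).

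Now I would split on $m$. If $m\ge 1/2$ then $A\le 0$, so $A\cdot B\le 0\le 2C$. If $0\le m<1/2$ then $0<A\le 1$, and since $F(\alpha_i)\le 2\alpha_i^2$ we get $B\le 2\sum_i w_i\alpha_i^2=2C$, hence $A\cdot B\le B\le 2C$. The case that actually requires an idea --- and the step I expect to be the main obstacle --- is $m<0$, where $A=1-2m>1$, so the trivial bound $A\cdot B\le B$ is useless. Here I would decompose $\alpha_i=\alpha_i^+-\alpha_i^-$ into its positive and negative parts, set $u:=\sum_i w_i\alpha_i^+\ge 0$ and $v:=\sum_i w_i\alpha_i^-\ge 0$, and observe that $m=u-v<0$ forces $v>u$, that $A=1+2(v-u)$, and that $C=P+Q$ with $P:=\sum_i w_i(\alpha_i^+)^2$ and $Q:=\sum_i w_i(\alpha_i^-)^2$. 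Since $F(\alpha_i)=F(\alpha_i^+)$, the two pointwise bounds on $F$ give the two estimates $B\le 2P$ and $B\le 2u$ simultaneously. The trick is then to write $A\cdot B=B+2(v-u)B$ and feed the first estimate into the term $B$ and the second into the term $2(v-u)B$ (legitimate since $v-u\ge 0$ and $B\ge 0$), obtaining $A\cdot B\le 2P+4u(v-u)$. To conclude, Jensen's inequality gives $Q\ge(\sum_i w_i\alpha_i^-)^2=v^2$, while $4u(v-u)\le 2v^2$ is just the rearrangement $(v-u)^2+u^2\ge 0$; together these yield $A\cdot B\le 2P+2Q=2C$, as needed. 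Apart from choosing this split of $A$ and matching it to the two $F$-bounds, every step is routine algebra.
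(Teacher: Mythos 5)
Your proof is correct and uses essentially the same ingredients as the paper's: after normalizing by $\sum_i R_i^2$, both arguments split $\alpha_i$ into positive and negative parts, invoke the same two pointwise bounds $F(\alpha)\le 2\alpha^2$ and $F(\alpha)\le 2\alpha$ for $\alpha\ge 0$ (with $F=0$ on negatives), and use Cauchy--Schwarz/Jensen to get $\sum_i w_i(\alpha_i^-)^2\ge v^2$. The only differences are cosmetic: the paper assembles the three bounds via $x^2+(y')^2\ge 2xy'$ in one uniform computation, whereas you split $A\cdot B$ into two terms and close with $4u(v-u)\le 2v^2$, plus a case split on $m$ that the uniform argument renders unnecessary.
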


\begin{proof}
We define the following quantities:
$$R := \sum_{i = 1}^{d} R_i^2, \hspace{0.5cm} w = \sum_{\alpha_i \ge 0} \alpha_i R_i^2, \hspace{0.5cm} x = \sum_{\alpha_i < 0} (-\alpha_i) R_i^2, \hspace{0.5cm} y = \sum_{\alpha_i \ge 0} R_i^2 \alpha_i^2, \hspace{0.5cm} z = \sum_{\alpha_i < 0} \alpha_i^2 R_i^2.$$
First note that $R, w, x, y, z$ are all nonnegative. Also, note that $0 \le F(x) \le 2x^2$ for all $x \in \BR$, so we can define $y' = \sum_{i = 1}^{d} R_i^2 \cdot F(\alpha_i)/2$, and we have that $0 \le y' \le y$.

The lemma is equivalent to proving 
$(R - 2w + 2x) \cdot y' \le R \cdot (y+z),$ or equivalently, that
$$R(y-y') + Rz + 2wy' \ge 2xy'.$$
Since $y \ge y'$, we have that $R(y-y') \ge 0$. Also, by Cauchy-Schwarz, $Rz \ge \left(\sum_{\alpha_i < 0} \alpha_i R_i^2\right)^2 = x^2$. Finally, note that for all $\alpha_i \ge 0$, $\alpha_i \ge F(\alpha_i)/2$ and for $\alpha_i < 0$, $F(\alpha_i) = 0$, so $w \ge y'$. Therefore, we have that
$$R(y-y')+Rz+2wy' \ge 0+x^2+2(y')^2 \ge x^2 + (y')^2 \ge 2xy'.$$
This proves the lemma, which is also sufficient to establish the $3$-approximation.
\end{proof}

\section{Lower Bounds} \label{sec:lowerbounds}

In this section, we prove unconditional lower bounds for explainable clustering, where we recall that we wish for strong approximations with respect to the optimal non-explainable clustering algorithm. First, in Subsection \ref{subsec:lower_kmedians}, we give a counterexample showing that no explainable clustering algorithm can provide a $o(\min(\log k, d))$-approximation for $k$-medians. Next, in Subsection \ref{subsec:lower_kmeans}, we give a counterexample showing that no explainable clustering algorithm can provide a $o(k)$ approximation for $k$-means, even when $d$ is only logarithmic in $k$.
Finally, we show that our $k$-means lower bound also implies an $\Omega(\sqrt{d} \cdot k)$ lower bound for explainable \emph{$k$-center clustering} for $d = \Omega(\log k)$, providing a slight improvement over the lower bound of Laber and Murtinho~\cite{laber2021explainable}.

\subsection{Lower bound for explainable $k$-medians clustering} \label{subsec:lower_kmedians}

In this subsection, we prove an $\Omega(\log k)$-lower bound for any explainable $k$-medians clustering algorithm, even if the dimension is only $d = O(\log k)$. The lower bound of $\Omega(\log k)$ was already known in the case when $d = \text{poly}(k)$, which also provided an $\Omega(\min(\log k, \log d))$-lower bound, but now we have an improved $\Omega(\min(\log k, d))$-lower bound for explainable $k$-medians clustering.

Before we introduce the construction, we note the following lemma about $k$-medians clustering.

\begin{lemma} \label{lem:k_med_objective}
    Let $x_1, \dots, x_n \in \{-1, 1\}^n$ be clustered into sets $S_1, S_2, \dots, S_k$ that partition $[n]$. Then, for any point $x_i \in S_j$ if $|S_j| = 1$, define $c_i = 0$, and otherwise, define $c_i$ as the average $\ell_1$ distance from $x_i$ to the other points in $S_j$. Then, the minimum $k$-medians clustering cost induced by this partition is at least
$$\frac{1}{4} \cdot \sum_{i = 1}^{n} c_i.$$
\end{lemma}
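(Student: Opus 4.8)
The plan is to prove the bound cluster by cluster: reduce the statement to a claim about a single cluster together with an arbitrary center, and then close it with one application of the triangle inequality for $\ell_1$. Note first that the minimum $k$-medians cost induced by the fixed partition is $\sum_{j=1}^{k} \min_{\mu_j \in \BR^n} \sum_{x_i \in S_j} \|x_i - \mu_j\|_1$, so it suffices to show that for every cluster $S_j$ and every candidate center $\mu \in \BR^n$,
\[
\sum_{x_i \in S_j} \|x_i - \mu\|_1 \;\ge\; \frac14 \sum_{x_i \in S_j} c_i .
\]
When $|S_j| = 1$ this is immediate, since the single point is its own center with cost $0$ and $c_i = 0$ by definition. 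So I would assume $m := |S_j| \ge 2$.

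Next I would rewrite the average $\ell_1$ distances $c_i$ as a sum over unordered pairs inside the cluster. By definition $c_i = \frac{1}{m-1}\sum_{x_{i'} \in S_j \setminus \{x_i\}} \|x_i - x_{i'}\|_1$, and summing over $x_i \in S_j$ (each pair counted twice) gives
\[
\sum_{x_i \in S_j} c_i \;=\; \frac{2}{m-1} \sum_{\{i,i'\} \subseteq S_j} \|x_i - x_{i'}\|_1 .
\]
Now for any center $\mu$, the triangle inequality yields $\|x_i - x_{i'}\|_1 \le \|x_i - \mu\|_1 + \|x_{i'} - \mu\|_1$; summing over all $\binom{m}{2}$ pairs, each term $\|x_i - \mu\|_1$ appears exactly $m-1$ times, so $\sum_{\{i,i'\} \subseteq S_j} \|x_i - x_{i'}\|_1 \le (m-1) \sum_{x_i \in S_j} \|x_i - \mu\|_1$. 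Combining the two displays gives $\sum_{x_i \in S_j} c_i \le 2 \sum_{x_i \in S_j} \|x_i - \mu\|_1$, i.e. the cost of cluster $S_j$ is at least $\frac12 \sum_{x_i \in S_j} c_i \ge \frac14 \sum_{x_i \in S_j} c_i$. Summing this over all clusters $j \in [k]$ (with singletons contributing $0$ on both sides) completes the argument.

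I do not expect a genuine obstacle here: the entire content is the pair-counting identity followed by a one-line triangle inequality, and the argument in fact delivers the stronger constant $\frac12$ rather than $\frac14$. The hypercube structure $x_i \in \{-1,1\}^n$ is not used, and the only points requiring minor care are the bookkeeping with the factor $m-1$ (the count of ``other'' points in the cluster) and the separate, trivial treatment of singleton clusters.
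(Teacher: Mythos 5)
Your proof is correct, and it takes a genuinely different route from the paper's. The paper argues coordinate by coordinate, exploiting the hypercube structure: in each coordinate $j$ it counts the numbers $a_j, b_j$ of points in the cluster with that coordinate equal to $+1$ and $-1$, lower-bounds the cost of any center in that coordinate by $2\min(a_j,b_j)$, and upper-bounds the corresponding sum of average distances by $8\min(a_j,b_j)$ via $2a_jb_j \le 4\min(a_j,b_j)\cdot\max(a_j,b_j) \le 4\min(a_j,b_j)\cdot m$ and $m/(m-1)\le 2$; this is where the constant $\tfrac14$ comes from. Your argument instead works at the level of whole points: the pair-counting identity $\sum_{x_i\in S_j}c_i = \tfrac{2}{m-1}\sum_{\{i,i'\}}\|x_i-x_{i'}\|_1$ followed by the triangle inequality against an arbitrary center. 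This is shorter, never uses the hypothesis $x_i\in\{-1,1\}^n$ (so it holds for arbitrary points in any metric space under the $\ell_1$ metric or otherwise), and yields the stronger constant $\tfrac12$ in place of $\tfrac14$; since the lemma is only invoked downstream for $\Omega(\cdot)$ lower bounds, the improved constant is harmless and the generality is a net gain. The only thing the paper's coordinate-wise decomposition ``buys'' is an explicit handle on per-coordinate costs, which is not needed anywhere in its application. All the bookkeeping in your write-up (the factor $m-1$, each unordered pair counted twice, each point appearing in $m-1$ pairs, singletons contributing zero on both sides) checks out.
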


\begin{proof}
    Fix a cluster (assume WLOG $S_1$) and suppose that $\bo{\mu_1}$ is the optimal cluster center for $S_1.$ Then, suppose that $|S_1| = m > 2$. For each $j \in [d]$, define $a_j$ as the number of points $x_i$ for $i \in S_1$ with $j$th coordinate $x_{ij} = 1$, and $b_j = m-a_j$ as the number of such points with $j$th coordinate $x_{ij} = -1.$ Then, if $\mu_{1j}$ is the $j$th coordinate of $\bo{\mu_1},$ then $\sum_{i \in S_1} |\mu_{1j} - x_{ij}| = a_j \cdot |\mu_{1j}-1| + b_j \cdot |\mu_{1j}+1| \ge 2 \cdot \min(a_j, b_j)$. However, for each point $x_i$ for $i \in S_1$ with $j$th coordinate $1$, its average distance from the other points in just the $j$th direction is $2 \cdot \frac{b_j}{k-1},$ and for each such point with $j$th coordinate $-1$, its average distance from the other points in just the $j$th direction is $2 \cdot \frac{a_j}{k-1}.$ Therefore, the sum of these average distances is $2 \cdot \frac{2a_jb_j}{k-1} \le 4 \cdot \frac{\min(a_j, b_j) \cdot k}{k-1},$ and since $k \ge 2,$ this is at most $8 \cdot \min(a_j, b_j).$ So, $\sum_{i \in S_1} |\mu_{1j}-x_{ij}|$ is at least $\frac{1}{4}$ times the sum of the average distances in the $j$th direction. Adding this up over all coordinates $j$, we get that if $|S_1| \ge 2,$ then $\sum_{i \in S_1} \|\bo{\mu_{1}}-x_i\|_1 \ge \frac{1}{4} \cdot \sum_{i \in S_1} c_i$. Also, if $|S_1| = 1,$ then $\sum_{i \in S_1} \|\bo{\mu_1}-x_i\|_1 \ge 0 = \sum_{i \in S_1} c_i$. Therefore, adding over all clusters gives us the desired result.
\end{proof}

Our construction is somewhat similar to that of Dasgupta et al.~\cite{dasgupta2020explainable}, but our analysis of the lower bound will be different. Let $d = 10 \cdot \log k$ and let $\bo{\mu_1}, \dots, \bo{\mu_k}$ be randomly selected points in the Boolean cube $\{-1, 1\}^d$. By a basic application of the Chernoff bound, we have the following result:

\begin{proposition} \label{prop:chernoff_thing}
    For sufficiently large $k$, with probability at least $0.99$, every pair of points $\bo{\mu_i}$ and $\bo{\mu_j}$ differ in at least $\frac{d}{10}$ coordinates.
\end{proposition}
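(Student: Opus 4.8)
The plan is the standard concentration-and-union-bound argument suggested by the name of the proposition. Fix a pair of distinct indices $i \ne j$. Since the $\bo{\mu_\ell}$ are chosen independently and uniformly from $\{-1,1\}^d$, for each coordinate $r \in [d]$ the entries $\mu_{i,r}$ and $\mu_{j,r}$ are independent uniform signs, so they disagree with probability exactly $1/2$, and these disagreement events are independent across $r$. Hence the Hamming distance $X_{ij} := |\{r : \mu_{i,r} \ne \mu_{j,r}\}|$ is distributed as $\mathrm{Bin}(d, 1/2)$ with $\BE[X_{ij}] = d/2$. The goal is to bound $\BP[X_{ij} < d/10]$ and then union bound over the $\binom{k}{2}$ pairs.

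For the tail bound I would write $d/10 = d/2 - 2d/5$ and apply Hoeffding's inequality to the sum of $d$ independent $\{0,1\}$ variables: $\BP[X_{ij} < d/10] \le \exp\!\big(-2 (2d/5)^2 / d\big) = \exp(-8d/25)$. Substituting $d = 10 \log k$ (natural logarithm) gives $\BP[X_{ij} < d/10] \le \exp(-16 \log k / 5) = k^{-16/5}$. A union bound over the at most $k^2/2$ unordered pairs then yields $\BP[\exists\, i \ne j : X_{ij} < d/10] \le \tfrac12 k^2 \cdot k^{-16/5} = \tfrac12 k^{-6/5} \to 0$, which is below $0.01$ for all sufficiently large $k$; taking complements gives the claim.

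The one place requiring care — and the closest thing to an obstacle — is the choice of tail inequality: the crude multiplicative Chernoff bound $\BP[X_{ij} < (1-\delta)\mu] \le \exp(-\delta^2 \mu/2)$ with $\delta = 4/5$, $\mu = d/2$ only gives $k^{-8/5}$ after substituting $d = 10\log k$, and $k^2 \cdot k^{-8/5} = k^{2/5}$ diverges. So one must use a sharper estimate, e.g.\ Hoeffding as above (or the KL-divergence form of the Chernoff bound, $\BP[X_{ij} \le d/10] \le \exp\!\big(-d \cdot D(1/10 \,\|\, 1/2)\big)$ with $D(1/10\|1/2) = \tfrac1{10}\ln\tfrac15 + \tfrac9{10}\ln\tfrac95 \approx 0.368$, whose exponent $10 \cdot 0.368 \log k$ also comfortably beats the $k^2$ from the union bound). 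Everything else is routine.
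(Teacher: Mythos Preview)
Your proof is correct and is exactly the argument the paper has in mind; the paper does not spell out any details beyond ``a basic application of the Chernoff bound,'' and your Hoeffding computation fills this in cleanly. Your side observation that the crude multiplicative Chernoff form $\exp(-\delta^2\mu/2)$ is not quite sharp enough here (yielding only $k^{-8/5}$, which loses to the $k^2$ union bound) is a nice cautionary point, though any of the standard sharper forms---Hoeffding, the KL version you mention, or even the direct binomial tail estimate $\sum_{m \le d/10}\binom{d}{m}2^{-d} \le 2^{-d}\binom{d}{d/10}2^{d\,h(1/10)}$---comfortably suffices.
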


Now, for each cluster center $\bo{\mu_i},$ we let $\mu_{i, j}$ be the point $\bo{\mu_i}^{\oplus e_j},$ i.e., where we negate the $j$th coordinate of $\bo{\mu_i}.$ The total set of points $\mathcal{X}$ in the dataset will be the $\bo{\mu_i}$ along with the $\mu_{i, j}$'s. Since each of the $k$ cluster centers has $d$ points assigned to it besides itself, the total $k$-medians clustering cost is $2 d k$, since $\|\bo{\mu_i}-\mu_{i, j}\|_1 = 2.$ In addition, the total number of points is $n = k \cdot (d+1)$. Finally, as a direct corollary of Proposition \ref{prop:chernoff_thing}, for any two points in $\mathcal{X}$ not in the same true cluster, they differ in at least $\frac{d}{10}-2$ coordinates, so assuming that $d \ge 40$, their $\ell_1$ distance is at least $2 \cdot \left(\frac{d}{10}-2\right) \ge \frac{d}{10}.$

Now, consider any decision tree process that creates $k$ clusters. Note that if we ever use some coordinate $j$ at some node, we may assume that we never use the same coordinate on its descendants, as the $j$th coordinate only takes two values, so no more information can be obtained about the $j$th coordinate afterwards. Now, for any $i$, let $d_i$ represent the depth of the final (leaf) node in the decision tree that contains $\bo{\mu_i}$, where the depth of the root is defined to be $0$. Then, we must have called $d_i$ separate coordinates on the path from the root to the node, which means that $d_i$ of the points $\mu_{i, j}$ have been separated from $\bo{\mu_i}$, as well as from the remaining points $\mu_{i, j'}$ for $j' \neq j$.

Now, we claim the following lemma.

\begin{lemma}
    Suppose that the dimension $d$ is at least $40$, and define $N := \sum_{i=1}^{k} d_i$. Then, the total clustering cost must be at least $\frac{d}{40} \cdot (N-k).$ So, if $N \ge \Omega(k \log k),$ then the clustering is an $\Omega(\log k)$-approximation.
\end{lemma}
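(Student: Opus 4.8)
The plan is to lower-bound the total $k$-medians cost by applying Lemma~\ref{lem:k_med_objective} to the partition induced by the decision tree, and then to show that the ``average intra-cluster distance'' quantity $\sum_i c_i$ appearing there is at least $\Omega(d \cdot (N-k))$. The key observation is structural: once we fix a leaf of the decision tree, the points assigned to that leaf are some subset of the $\bo{\mu_i}$'s and their neighbors $\mu_{i,j}$; and for any point $\mu_{i,j}$ that got separated from its ``true'' center $\bo{\mu_i}$ along the root-to-leaf path, every \emph{other} point now sharing its leaf belongs to a different true cluster, hence (by Proposition~\ref{prop:chernoff_thing} and the corollary following it) lies at $\ell_1$-distance at least $d/10$ from $\mu_{i,j}$. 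Therefore the average distance $c_{\mu_{i,j}}$ from such a separated point to the rest of its cluster is at least $d/10$ — \emph{unless} $\mu_{i,j}$ is alone in its leaf, in which case $c_{\mu_{i,j}} = 0$, but then it contributes to making some other cluster larger, and we must be careful about double counting.

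First I would count the separated points: for each $i$, exactly $d_i$ of the neighbors $\mu_{i,j}$ have been split off from $\bo{\mu_i}$ along the path to $\bo{\mu_i}$'s leaf, so the total number of separated points is $\sum_i d_i = N$. However, among these $N$ separated points, some may end up as singletons in their own leaves (contributing $c = 0$), and there can be at most $k$ leaves total, one of which contains each $\bo{\mu_i}$, so at most $k - $ (something) leaves are singletons consisting of a separated point; more simply, at most $k$ leaves exist, so at most $k$ of the separated points are singletons (in fact the leaf containing $\bo{\mu_i}$ is not a singleton-of-a-separated-point, but bounding crudely by $k$ suffices). Hence at least $N - k$ separated points lie in a leaf together with at least one point from a different true cluster, and each such point has $c \ge d/10$. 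By Lemma~\ref{lem:k_med_objective}, the clustering cost is at least $\tfrac14 \sum c_i \ge \tfrac14 \cdot (N-k) \cdot \tfrac{d}{10} = \tfrac{d}{40}(N-k)$.

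For the final sentence: the total non-explainable cost is $2dk$ (each of the $kd$ points $\mu_{i,j}$ is at distance $2$ from its center, and the $\bo{\mu_i}$ are at distance $0$), so if $N \ge \Omega(k\log k)$ then the explainable cost is $\ge \tfrac{d}{40}(N-k) = \Omega(d \cdot k\log k)$ (using $N - k \ge \Omega(N)$ when $N \ge \Omega(k\log k) \gg k$), which is $\Omega(\log k)$ times the optimal cost $2dk$. The main obstacle I anticipate is handling the singleton/double-counting issue cleanly: a separated point alone in its leaf contributes nothing to $\sum c_i$, so one must argue that the number of such ``lost'' contributions is bounded by $k$ (the number of leaves), and that the remaining $N-k$ separated points genuinely each have a same-leaf partner from a different true cluster forcing $c_i \ge d/10$; one must also double-check that a separated point $\mu_{i,j}$ cannot share a leaf only with other neighbors $\mu_{i',j'}$ of a \emph{single} other center without incurring the distance lower bound — but since they come from different true clusters, the corollary to Proposition~\ref{prop:chernoff_thing} still applies. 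Everything else is routine arithmetic.
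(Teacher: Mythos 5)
Your proposal is correct and follows essentially the same route as the paper: count the $N = \sum_i d_i$ separated neighbors, discard at most $k$ that end up as singleton leaves, note each remaining one shares its leaf only with points from other true clusters (hence at $\ell_1$-distance at least $2(\tfrac{d}{10}-2) \ge \tfrac{d}{10}$ for $d \ge 40$), and apply Lemma~\ref{lem:k_med_objective} to get cost at least $\tfrac14 (N-k)\tfrac{d}{10} = \tfrac{d}{40}(N-k)$, then compare to the optimal cost $2dk$. The structural point you flag as a worry (that a separated $\mu_{i,j}$ never shares a leaf with another point of its own true cluster) is exactly the observation the paper establishes just before the lemma, so your argument is complete.
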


\begin{proof}
    We know that at least $N$ of the points $\mu_{i, j}$ have been separated from the remainder of their true clusters. In addition, since there are $k$ clusters at the end, at least $N-k$ of these points are not assigned to be in clusters by themselves, but are assigned in clusters with other points that are at least $\frac{d}{10}-2$ away from them. So, by Lemma \ref{lem:k_med_objective}, the total clustering cost is at least
\[\frac{1}{4} \cdot (N-k) \cdot 2 \left(\frac{d}{10}-2\right) \ge \frac{d}{40} \cdot (N-k). \qedhere\]
\end{proof}

From now on, we may assume that $\sum_{i=1}^{k} d_i \le \frac{k \log_2 k}{4},$ which means that at least $\frac{k}{2}$ of the values $i \in [k]$ have $d_i \le \frac{\log_2 k}{2}.$ However, note that the number of nodes of depth at most $\frac{\log_2 k}{2}$ is $O(2^{(\log_2 k)/2}) = O(\sqrt{k}).$ Therefore, assuming that $k$ is sufficiently large, at least $\frac{k}{3}$ of the centers $\bo{\mu_i}$ are in the same assigned cluster as at least one other cluster center $\bo{\mu_k}$. In addition, exactly $d_i$ of the cluster center's points $\mu_{i, j}$ are in different cells, so at least $d - d_i \ge 0.9 \cdot d$ of the points $\mu_{i, j}$ are in the same cell as $\bo{\mu_i}$. This implies that for any such $i$ and any such $\mu_{i, j}$ in the same assigned cluster as $\bo{\mu_i}$, the average $\ell_1$ distance between $\mu_{i, j}$ and any other point in its assigned cluster is at least $\frac{d}{20},$ since at least $\frac{1}{2}$ of the points in the assigned cluster are of distance at least $2\left(\frac{d}{10} - 2\right) \ge \frac{d}{20}$ from it. Therefore, by Lemma \ref{lem:k_med_objective}, the total clustering cost is at least
$$\frac{1}{4} \cdot \frac{k}{3} \cdot (0.9 \cdot d) \cdot \frac{d}{20} = \frac{3 k d^2}{800} \ge \frac{3kd}{80} \cdot \log k,$$
so again we have an $\Omega(\log k)$ approximation as the optimal clustering cost is $2 kd$.

\subsection{Lower bound for explainable $k$-means clustering} \label{subsec:lower_kmeans}

In this subsection, we prove an $\Omega(k)$-approximation lower bound for any explainable $k$-means clustering algorithm in $d = \Theta(\log k)$-dimensions. This means that if $d = \Omega(\log k)$, the best possible approximation is $\Omega(k)$
%We believe this method can also establish an $\Omega(k^{1-2/d})$-approximation lower bound for any explainable $k$-means clustering algorithm. If so, this would imply that a sub-polynomial approximation for explainable $k$-means is impossible even in $3$-dimensions. 
Thus, we provide an exponentially stronger lower bound than the $\Omega(\log k)$-lower bound proven by Dasgupta et al.~\cite{dasgupta2020explainable}.

We create $k$ centers as follows. Let $d$ be the dimension (which we will fix later), and let $\pi_1, \dots, \pi_d: [k] \to [k]$ represent independent random permutations of $\{1, 2, \dots, k\}.$ Our $i$th center $\bo{\mu_i}$ will be $(\pi_1(i), \dots, \pi_d(i)).$ Next, for each cluster center $\bo{\mu_i},$ we assign it $2 \cdot d$ points: for each direction $j$, we create a point $x_{i, j}^{+} = \bo{\mu_i} + e_j$ and another point $x_{i, j}^{-} = \bo{\mu_i} - e_j$, where $e_j$ is the identity vector in the $j$th coordinate. Note that each point $x_{i, j}^+$ and $x_{i, j}^-$ is only $1$ away from its closest center $\bo{\mu_i}$ in Euclidean ($\ell_2$) distance. Our dataset $\mathcal{X}$ will be the set of all $x_{i, j}^+$ and $x_{i, j}^-$ points.

We now show that all of the clusters are far apart with high probability.

\begin{lemma} \label{lem:far_apart}
    There exist absolute constants $C, c > 0$ such that if $d \ge C \log k,$ with probability at least $1/2$, all of the points $\bo{\mu_i}$ are at least $c \cdot k \cdot \sqrt{d}$ away from each other in Euclidean distance.
\end{lemma}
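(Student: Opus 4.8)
The plan is to analyze one pair of centers at a time, show that the squared distance between any two of them concentrates around a value of order $d k^2$, and then union bound over the at most $\binom{k}{2} < k^2$ pairs; the condition $d \ge C\log k$ is exactly what is needed to beat this union bound.

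Fix distinct indices $i, i' \in [k]$ and write $\|\bo{\mu_i} - \bo{\mu_{i'}}\|_2^2 = \sum_{j=1}^d X_j$, where $X_j := (\pi_j(i) - \pi_j(i'))^2$. Because the permutations $\pi_1, \dots, \pi_d$ are independent, the $X_j$ are independent and identically distributed (for this fixed pair). For a single uniformly random permutation $\pi$, the pair $(\pi(i), \pi(i'))$ is a uniformly random ordered pair of distinct elements of $[k]$, so each $X_j$ has the law of $(a-b)^2$ with $(a,b)$ uniform over ordered distinct pairs in $[k]$. A short computation, $\BE[(a-b)^2] = 2\BE[a^2] - 2\BE[ab]$ with $a$ uniform on $[k]$ and $\BE[ab] = \frac{(\sum a)^2 - \sum a^2}{k(k-1)}$, shows $\BE[X_j] = m$ for some $m$ with $m = \Theta(k^2)$ (say $m \ge k^2/10$ for $k$ large), while deterministically $1 \le X_j \le k^2$.

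Now apply Hoeffding's inequality to the sum of the $d$ independent variables $X_j$, each lying in an interval of length at most $k^2$: $\BP\!\left[\sum_{j=1}^d X_j \le \tfrac12 d m\right] \le \exp\!\left(-\frac{2 (dm/2)^2}{d\, k^4}\right) = \exp(-\Omega(d))$, where we used $m = \Theta(k^2)$; equivalently one may rescale to $X_j/k^2 \in [0,1]$ and invoke a multiplicative Chernoff bound. On the complementary event we have $\|\bo{\mu_i} - \bo{\mu_{i'}}\|_2 \ge \sqrt{dm/2} \ge c\, k\sqrt{d}$ for a suitable absolute constant $c > 0$. Taking a union bound over the fewer than $k^2$ unordered pairs, the probability that some pair of centers is within $c\, k\sqrt d$ of another is at most $k^2 \exp(-\Omega(d))$, which is at most $1/2$ once $d \ge C\log k$ for a large enough absolute constant $C$. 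This proves the lemma. The argument is essentially routine; the only points meriting care are that the $X_j$ are genuinely independent across dimensions (they involve disjoint permutations) and that $\BE[X_j]$ is truly of order $k^2$ rather than smaller, so that the resulting separation radius comes out as $\Theta(k\sqrt d)$.
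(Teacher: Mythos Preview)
Your proof is correct and follows essentially the same approach as the paper: fix a pair of centers, write the squared distance as a sum of $d$ independent terms each bounded in $[0,k^2]$ with mean $\Theta(k^2)$, apply Hoeffding to get failure probability $\exp(-\Omega(d))$, and union bound over the $\binom{k}{2}$ pairs. The only cosmetic difference is that you compute $\BE[X_j]$ via an explicit moment identity, whereas the paper lower bounds it by the crude observation that $\pi_r(i)\ge 3k/4$ and $\pi_r(j)\le k/4$ with probability at least $1/16$.
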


\begin{proof}
    Fix $1 \le i < j \le k$. We consider the random variable $X = \|\bo{\mu_i}-\bo{\mu_j}\|_2^2 = \sum_{r = 1}^{k} (\pi_r(i)-\pi_r(j))^2.$ For a fixed coordinate $r$, the random variable $(\pi_r(i)-\pi_r(j))^2$ is bounded in the range $[0, k^2]$. Moreover, it has expectation at least $c_1 k^2$ for some absolute constant $c_1 > 0$, since with probability at least $1/16,$ $\pi_r(i) \ge 3k/4$ and $\pi_r(j) \le k/4$, in which case $(\pi_r(i)-\pi_r(j))^2 \ge k^2/4.$
    
    Now, define $X_r = (\pi_r(i)-\pi_r(j))^2$ and let $X = X_1+\cdots+X_d.$ Since each $X_r$ is independent (since the permutations are drawn independently), and since each $X_r$ is bounded in the range $[0, k^2]$, we have that
$$\BP\left(|X - \BE[X]| \ge t\right) \le \exp\left(-\frac{2t^2}{d \cdot k^4}\right).$$
    In addition, note that $\BE[X] \ge c_1 d k^2$. Therefore, 
$$\BP\left(X \le \frac{c_1}{2} d k^2\right) \le \exp\left(\frac{-2(c_1dk^2/2)^2}{d \cdot k^4}\right) \le \exp\left(\frac{-c_1^2}{2} \cdot d\right).$$
    Since $X = \|\bo{\mu_i}-\bo{\mu_j}\|_2^2,$ the probability that $\|\bo{\mu_i}-\bo{\mu_j}\|_2 \le c k \sqrt{d}$, for $c = \sqrt{c_1/2}$ and $d \ge C \log k$ for $C = \frac{4}{c_1^2}$, is at most $\frac{1}{k^2}$. Therefore, the probability that there exist any $i \neq j$ such that $\|\mu_i-\mu_j\|_2^2 \le c k \sqrt{d}$ is at most $\frac{1}{k^2} \cdot {k \choose 2} \le \frac{1}{2}$ by the union bound.
\end{proof}

Now, suppose that we have picked some cluster centers $\bo{\mu_1}, \dots, \bo{\mu_k}$ as above, satisfying that all of the centers have pairwise distances at least $c k \sqrt{d}$ from each other. We note that the optimal clustering cost is at most $2d k$, since for each cluster center $\bo{\mu_i}$, there are $d$ points $x_{i, j}^+$ and $d$ more points $x_{i, j}^-$, all of distance $1$ from $\bo{\mu_i}$.

However, no matter what decision tree we choose, we must start off by selecting some line $x_r = t$ for some integer $1 \le r \le d$ and real number $1 \le t \le k.$ Let $i = \pi_r^{-1}(\lfloor t \rfloor).$ Then, $\bo{\mu_i}$ has $r$th coordinate equal to $\lfloor t \rfloor,$ which means that $x_{i, r}^-$ and $x_{i, r}^+$ will be assigned to different clusters. However, since there are only $k$ clusters in total, this means that for any explainable clustering algorithm on $\mathcal{X}$, there must exist points $x, y \in \mathcal{X}$ that were originally assigned to two different clusters $i$ and $j$, but now are assigned to the same cluster. By Lemma \ref{lem:far_apart} and the triangle inequality, $\|x-y\|_2 \ge c k \sqrt{d} - 2$, which means that the $k$-means clustering cost of this algorithm must be at least $\Omega(k^2 d)$. However, since the optimal cost is at most $2kd$, no explainable algorithm can do better than a $O(k)$-approximation for $k$-means clustering, as long as $d \ge \Omega(\log k)$.

We also note that this example also shows that no explainable algorithm can perform better than an $O(k \sqrt{d})$-approximation for the \emph{$k$-center clustering} problem if $d = \Omega(\log k)$. This is because the $k$-center cost of this pointset is $O(1)$ (since every point $x_{i, j}^+$ and $x_{i, j}^-$ is within Euclidean distance $1$ of $\bo{\mu_i}$), but we have shown that any explasinable clustering algorithm must send at least one point $x_{i, j}^+$ or $x_{i, j}^-$ to a cluster of distance $c k \sqrt{d}-2 = \Omega(k \sqrt{d})$. This provides a slight improvement over the $\Omega\left(k \sqrt{d} \cdot \frac{\sqrt{\log \log k}}{\log^{1.5} k}\right)$ lower bound obtained by \cite{laber2021explainable} when $d = \Omega(\log k)$.

\section*{Acknowledgments}

We thank Piotr Indyk and Amin Karbasi for constructive discussions.

\appendix

\section{Dasgupta et. al's Algorithm is Suboptimal for $k$-medians} \label{sec:failure}

In this appendix, we establish that the IMM algorithm by Dasgupta et al.~\cite{dasgupta2020explainable} cannot obtain better than an $O(k)$-approximation for $k$-medians. In addition, even the improvement that we make in the Section \ref{sec:kmeans} of greedily selecting a line based on minimizing the number of misclassified points $E_u(r, t)$ divided by $f_u(r, t)$, rather than just minimizing $E_u(r, t)$, is also suboptimal. Our example assumes the dimension $d$ is $\Theta(k)$.

We now present the dataset for which the IMM algorithm, or even our proposed improvement, fails to obtain better than a $k$-approximation.
All of our points and cluster centers will be in the Boolean hypercube $\{0, 1\}^d$, where we set $d = 2 (k-1)$.
Let $\bo{\mu_1} = \textbf{0} = (0, 0, \dots, 0).$ Next, let $\textbf{z} = (\underbrace{0, 0, \dots, 0}_{k-1}, \underbrace{1, 1, \dots, 1}_{k-1})$ be the point with first $k-1$ coordinates $0$ and last $k-1$ coordinates $1$. For each $1 \le i \le k-1$, we define the $(i+1)$th cluster center as $\bo{\mu}_{i+1} = e_i + \textbf{z},$ where $e_i$ is the identity vector on the $i$th coordinate.

Next, our dataset $\mathcal{X}$ will be as follows. First, for each $2 \le i \le k,$ we let there be $3(k-1)$ copies of $\bo{\mu_i}$ in $\mathcal{X}$. In addition, for each coordinate $1 \le j \le k-1,$ we let there be $1$ copy of $e_j$, and for each coordinate $k \le j \le 2(k-1),$ we let there be $2$ copies of $e_j$. Note that the copies of $\bo{\mu_i}$ for each $2 \le i \le k$ will be assigned to cluster center $\bo{\mu_i}$, and the copies of $e_j$ for each $j \in [d]$ will be assigned to cluster center $\bo{\mu_1}$. In addition, there are $n = 3(k-1) \cdot k$ points in $\mathcal{X}$, $3(k-1)$ assigned to each point $\bo{\mu_i}$, and the total $k$-medians clustering cost is $3(k-1),$ since the copies of $e_j$ are contributing $1$ each to the cost, and the remaining points contribute $0$.

The correct strategy would be to first make a split along one of the last $k-1$ dimensions. This would separate $\bo{\mu_1}$ from all other cluster centers, and anything done now will result in an $O(k)$ clustering cost. Unfortunately, IMM will not do so. Rather, IMM will choose one of the first $k-1$ coordinates to split the dataset, as this causes there to only be $1$ misclassified point instead of $2$. If we split based on the $i$th coordinate, then $\bo{\mu_{i+1}}$ will split from the remaining cluster centers. We will then continue to split the others of the first $k-1$ coordinates. Overall, each of the points $e_1, \dots, e_{k-1}$ will be sent to $\bo{\mu_2}, \dots, \bo{\mu_k},$ respectively, incurring a $k$-medians cost of $(k-1) \cdot k = \Omega(k^2)$. Hence, the IMM algorithm provides an $\Omega(k)$-approximation in the worst case.

Even if we use the modification of IMM that minimizes $E_u(r, t)/f_u(r, t)$ at each step, we would still end up with the same algorithm. This is because in each coordinate, there is always exactly one point among $\{\bo{\mu_1}, \dots, \bo{\mu_k}\}$ with a $0$ at that coordinate, or exactly one point among $\{\bo{\mu_1}, \dots, \bo{\mu_k}\}$ with a $1$ at that coordinate. So, any choice of division will always have $f_u(r, t) = 1,$ so the algorithm does not change, and we still get an $\Omega(k)$-approximation.


\begin{thebibliography}{10}

\bibitem{AroraRR98}
Sanjeev Arora, Prabhakar Raghavan, and Satish Rao.
\newblock Approximation schemes for euclidean \emph{k}-medians and related
  problems.
\newblock In Jeffrey~Scott Vitter, editor, {\em Proceedings of the Thirtieth
  Annual {ACM} Symposium on the Theory of Computing, Dallas, Texas, USA, May
  23-26, 1998}, pages 106--113. {ACM}, 1998.

\bibitem{arthur2007k}
David Arthur and Sergei Vassilvitskii.
\newblock k-means++: The advantages of careful seeding.
\newblock In {\em Proceedings of the eighteenth annual ACM-SIAM symposium on
  Discrete algorithms}, pages 1027--1035. Society for Industrial and Applied
  Mathematics, 2007.

\bibitem{bentley75}
Jon~Louis Bentley.
\newblock Multidimensional binary search trees used for associative searching.
\newblock {\em Communications of the ACM}, 18(9):509--517, 1975.

\bibitem{breiman84}
Leo Breiman, Jerome Friedman, Charles~J. Stone, and R.~A. Olshen.
\newblock {\em Classification and Regression Trees}.
\newblock Number 173. CRC press, 1984.

\bibitem{byrka2014improved}
Jaros{\l}aw Byrka, Thomas Pensyl, Bartosz Rybicki, Aravind Srinivasan, and Khoa
  Trinh.
\newblock An improved approximation for k-median, and positive correlation in
  budgeted optimization.
\newblock In {\em Proceedings of the twenty-sixth annual ACM-SIAM symposium on
  Discrete algorithms}, pages 737--756. SIAM, 2014.

\bibitem{charikar2021explainable}
Moses Charikar and Lunjia Hu.
\newblock Near-optimal explainable k-means for all dimensions.
\newblock {\em CoRR}, 2021.

\bibitem{Cohen-Addad18}
Vincent Cohen{-}Addad.
\newblock A fast approximation scheme for low-dimensional \emph{k}-means.
\newblock In Artur Czumaj, editor, {\em Proceedings of the Twenty-Ninth Annual
  {ACM-SIAM} Symposium on Discrete Algorithms, {SODA} 2018, New Orleans, LA,
  USA, January 7-10, 2018}, pages 430--440. {SIAM}, 2018.

\bibitem{cohen2019near}
Vincent Cohen-Addad, Andreas~Emil Feldmann, and David Saulpic.
\newblock Near-linear time approximations schemes for clustering in doubling
  metrics.
\newblock In {\em 2019 IEEE 60th Annual Symposium on Foundations of Computer
  Science (FOCS)}, pages 540--559. IEEE, 2019.

\bibitem{Cohen-AddadKM19}
Vincent Cohen{-}Addad, Philip~N. Klein, and Claire Mathieu.
\newblock Local search yields approximation schemes for k-means and k-median in
  euclidean and minor-free metrics.
\newblock {\em {SIAM} J. Comput.}, 48(2):644--667, 2019.

\bibitem{cohen2020fast}
Vincent Cohen-Addad, Silvio Lattanzi, Ashkan Norouzi-Fard, Christian Sohler,
  and Ola Svensson.
\newblock Fast and accurate $ k $-means++ via rejection sampling.
\newblock {\em Advances in Neural Information Processing Systems}, 33, 2020.

\bibitem{dasgupta2020explainable}
Sanjoy Dasgupta, Nave Frost, Michal Moshkovitz, and Cyrus Rashtchian.
\newblock Explainable k-means and k-medians clustering.
\newblock In {\em Proceedings of the 37th International Conference on Machine
  Learning, Vienna, Austria}, pages 12--18, 2020.

\bibitem{fraiman13}
Ricardo Fraiman, Badih Ghattas, and Marcela Svarc.
\newblock Interpretable clustering using unsupervised binary trees.
\newblock {\em Advances in Data Analysis and Classification}, 7(2):125--145,
  2013.

\bibitem{frost2020exkmc}
Nave Frost, Michal Moshkovitz, and Cyrus Rashtchian.
\newblock Exkmc: Expanding explainable $ k $-means clustering.
\newblock {\em arXiv preprint arXiv:2006.02399}, 2020.

\bibitem{gamlath2021explainable}
Buddhima Gamlath, Xinrui Jia, Adam Polak, and Ola Svensson.
\newblock Nearly-tight and oblivious algorithms for explainable clustering.
\newblock {\em CoRR}, 2021.

\bibitem{jain2003greedy}
Kamal Jain, Mohammad Mahdian, Evangelos Markakis, Amin Saberi, and Vijay~V
  Vazirani.
\newblock Greedy facility location algorithms analyzed using dual fitting with
  factor-revealing lp.
\newblock {\em Journal of the ACM (JACM)}, 50(6):795--824, 2003.

\bibitem{kanungo2004local}
Tapas Kanungo, David~M Mount, Nathan~S Netanyahu, Christine~D Piatko, Ruth
  Silverman, and Angela~Y Wu.
\newblock A local search approximation algorithm for k-means clustering.
\newblock {\em Computational Geometry}, 28(2-3):89--112, 2004.

\bibitem{KolliopoulosR07}
Stavros~G. Kolliopoulos and Satish Rao.
\newblock A nearly linear-time approximation scheme for the euclidean k-median
  problem.
\newblock {\em {SIAM} J. Comput.}, 37(3):757--782, 2007.

\bibitem{kolliopoulos2007nearly}
Stavros~G Kolliopoulos and Satish Rao.
\newblock A nearly linear-time approximation scheme for the euclidean k-median
  problem.
\newblock {\em SIAM Journal on Computing}, 37(3):757--782, 2007.

\bibitem{laber2021explainable}
Eduardo Laber and Lucas Martinho.
\newblock On the price of explainability for some clustering problems.
\newblock In {\em Proceedings of the 38th International Conference on Machine
  Learning, Virtual}, 2021.

\bibitem{li20111}
Shi Li.
\newblock A 1.488 approximation algorithm for the uncapacitated facility
  location problem.
\newblock In {\em International Colloquium on Automata, Languages, and
  Programming}, pages 77--88. Springer, 2011.

\bibitem{liu05}
B.~Liu, Y.~Xia, and P.~Yu.
\newblock Clustering via decision tree construction.
\newblock In {\em Foundations and Advances in Data Mining}, pages 97--124,
  2005.

\bibitem{lloyd1982least}
Stuart Lloyd.
\newblock Least squares quantization in pcm.
\newblock {\em IEEE transactions on information theory}, 28(2):129--137, 1982.

\bibitem{loh11}
Wei-Yin Loh.
\newblock Classification and regression trees.
\newblock {\em Data Mining and Knowledge Discovery}, 1(1):14--23, 2011.

\bibitem{makarychev2021explainable}
Konstantin Makarychev and Liren Shan.
\newblock Near-optimal algorithms for explainable k-medians and k-means.
\newblock In {\em Proceedings of the 38th International Conference on Machine
  Learning, Virtual}, 2021.

\bibitem{mettu2004optimal}
Ramgopal~R Mettu and C~Greg Plaxton.
\newblock Optimal time bounds for approximate clustering.
\newblock {\em Machine Learning}, 56(1):35--60, 2004.

\bibitem{meyerson2004k}
Adam Meyerson, Liadan O'callaghan, and Serge Plotkin.
\newblock A k-median algorithm with running time independent of data size.
\newblock {\em Machine Learning}, 56(1):61--87, 2004.

\bibitem{molnar2019interpretable}
Christoph Molnar.
\newblock Interpretable machine learning. lulu. com, 2019, 2019.

\bibitem{murdoch2019interpretable}
W~James Murdoch, Chandan Singh, Karl Kumbier, Reza Abbasi-Asl, and Bin Yu.
\newblock Interpretable machine learning: definitions, methods, and
  applications.
\newblock {\em arXiv preprint arXiv:1901.04592}, 2019.

\bibitem{rudin2019stop}
Cynthia Rudin.
\newblock Stop explaining black box machine learning models for high stakes
  decisions and use interpretable models instead.
\newblock {\em Nature Machine Intelligence}, 1(5):206--215, 2019.

\bibitem{voevodski2021large}
Konstantin Voevodski.
\newblock Large scale k-median clustering for stable clustering instances.
\newblock In {\em International Conference on Artificial Intelligence and
  Statistics}, pages 2890--2898. PMLR, 2021.

\end{thebibliography}
\end{document}